\patchcmd{\FP@floatEnd}{%
	\begin{\@captype}[b!]%
}{%
	\begin{\@captype}[t!]%
}%
{}{}
\newtheorem{theorem}{Theorem}
\newtheorem{lemma}{Lemma}
\newtheorem{definition}{Definition}
\newtheorem{remark}{Remark}
\newcommand\blfootnote[1]{%
  \begingroup
  \renewcommand\thefootnote{}\footnote{#1}%
  \addtocounter{footnote}{-1}%
  \endgroup
}
\newcommand{\dff}{\stackrel{\scriptscriptstyle\triangle}{=}}
\def \bX {{\mathbf{X}}}
\def \bx {{\mathbf{x}}}
\def \bs {{\mathbf{s}}}
\def \bz {{\mathbf{z}}}
\def \bLambda {{\mathbf{\Lambda}}}
\def \bC {{\mathbf{C}}}
\def \cU {{\mathcal{U}}}
\def \bV {{\mathbf{V}}}
\def \bs {{\mathbf{s}}}
\def \bD {{\mathbf{D}}}
\def \bW {{\mathbf{W}}}
\def \bJ {{\mathbf{J}}}
\def \bL {{\mathbf{L}}}
\def \bH {{\mathbf{H}}}
\def \by {{\mathbf{y}}}
\def \br {{\mathbf{r}}}
\def \bv{{\mathbf{v}}}
\def \bu {{\mathbf{u}}}
\def \bp {{\mathbf{p}}}
\def \bI {{\mathbf{I}}}
\def \mR {{\mathbb{R}}}
\def \mE {{\mathbb{E}}}
\def \mP {{\mathbb{P}}}
\def \mZ {{\mathbb{Z}}}
\def \bE {{\mathbf{E}}}
\def \cH {{\mathcal{H}}}
\def \cC {{\mathcal{C}}}
\date{}
\title{Transferability of coVariance Neural Networks and Application to Interpretable Brain Age Prediction using Anatomical Features}
\begin{document}

\author[1]{ Saurabh Sihag}
\author[2]{ Gonzalo Mateos}
\author[1]{ Corey T. McMillan}
\author[1]{ Alejandro Ribeiro}

\affil[1]{\small University of Pennsylvania, Philadelphia, PA.}
\affil[2]{\small University of Rochester, Rochester, NY.}

\maketitle
\begin{abstract}
Graph convolutional networks (GCN) leverage topology-driven graph convolutional operations to combine information across the graph for inference tasks. In our recent work, we have studied GCNs with covariance matrices as graphs in the form of coVariance neural networks (VNNs) that draw similarities with traditional PCA-driven data analysis approaches while offering significant advantages over them. In this paper, we first focus on theoretically characterizing the transferability of VNNs. The notion of transferability is motivated from the intuitive expectation that learning models could generalize to ``compatible" datasets (possibly of different dimensionalities) with minimal effort. VNNs inherit the scale-free data processing architecture from GCNs and here, we show that VNNs exhibit transferability of performance (without re-training) over datasets whose covariance matrices converge to a limit object. Multi-scale neuroimaging datasets enable the study of the brain at multiple scales and hence, provide an ideal scenario to validate the theoretical results on the transferability of VNNs. To gauge the advantages offered by VNNs in neuroimaging data analysis, we focus on the task of ``brain age" prediction using cortical thickness features. In clinical neuroscience, there has been an increased interest in machine learning algorithms derived from MRI cortical thickness features which provide estimates of “brain age” that deviate from chronological age. Importantly, discordance between brain age and chronological age (“brain age gap”) can reflect increased vulnerability or resilience toward neurological disease or cognitive impairments. We leverage the architecture of VNNs to extend beyond the coarse metric of brain age gap in Alzheimer’s disease (AD) and make two important observations: (i) VNNs can assign anatomical interpretability to elevated brain age gap in AD by identifying contributing brain regions, and (ii) the interpretability offered by VNNs is contingent on their ability to exploit specific principal components of the anatomical covariance matrix. We further leverage the transferability of VNNs to cross validate the aforementioned observations across datasets of different dimensionalities.

    
\end{abstract}
\blfootnote{Under review. Portions of this manuscript have appeared in~\cite{sihag2022covariance} and~\cite{sihag2022predicting}. See Section~\ref{data_code} for information regarding data and code availability.}

\section{Introduction}


In various modern applications, the number of features (denoted by $m$) in a dataset is a fundamental component of data acquisition that is typically a characteristic of the desired application and logistics involved~\cite{liu2015survey, brinkmann2009large}. Most machine learning algorithms and statistical inference approaches are designed over a pre-defined feature set and hence, their computational and sample complexities inherently depend on the dimensionality~$m$~\cite{cai2018feature,ng2004feature}.  In this paper, we study a deep learning framework called coVariance neural networks (VNN)~\cite{sihag2022covariance} that is based on graph neural networks operating on sample covariance matrix from a given dataset and is scale-free, i.e., the number of learnable parameters in VNN is independent of the dimensionality $m$ of the dataset (Fig.~\ref{vnn_archit}). The scale-free aspect of VNNs makes it feasible for them to be \emph{transferable} between datasets of different dimensionalities without any changes to their architecture, i.e., VNNs trained on a dataset with dimensionality $m = m_1$ can process another dataset with dimensionality $m=m_2$ with the same set of learned parameters.

While a larger number of features $m$ in a dataset may imply higher resolution or quality of data collected, too many features can lead to challenges related to storage, computational complexity, and interpretability of statistical models for effective data analysis. On the other hand, a dataset with too few features may be devoid of enough relevant information for accuracy and inference. Nevertheless, one can intuitively expect some correspondence between a dataset consisting of $m = m_1$ features and another dataset consisting of $m = m_2$ features if both sets of features describe a similar phenomenon at different scales or resolutions. Conventional data analysis approaches (e.g., principal component analysis) and machine learning algorithms are unable to exploit or accommodate this aspect of similarity between datasets when the number of learnable parameters for inference is determined by the dimensionality $m$ of the dataset. Hence, such statistical models need to be re-designed from scratch if the data dimensionality changes. In this context, we provide the theoretical conditions on the covariance matrices under which the performance achieved by a VNN model for an inference task over one dataset can be transferred to that over another dataset with different dimensionality without re-training or changes to the VNN model (Fig.~\ref{vnn_transfer_overview}). 
Besides the methodological gains over traditional data analysis approaches, VNN frameworks also offer advantages in managing computational complexity. Indeed, under appropriate conditions, a model trained on a lower feature count dataset can be directly applied for inference from a higher feature count dataset.



Neuroimaging is an example of a modern application in which the number of features is highly variable across datasets, but different datasets contain similar information~\cite{yang2022data,markello2022neuromaps}. Specifically, MRI data can be represented in many scales ranging from single voxels (typically~$\sim1\text{ mm}^3$) to regions-of-interest (ROIs) derived from multi-scale brain atlases that range from dozens to thousands of parcellations (e.g., from 100 to 1000 number of parcellations in a multi-scale brain atlas~\cite{schaefer2018local, hagmann2008mapping}). Multi-scale brain atlases provide mappings that divide the brain cortex into different number of parcellations and therefore, associated neuroimaging datasets describe similar information over the brain cortex at different scales. Hence, we empirically validate the theoretical results for transferability of VNNs on a set of neuroimaging datasets curated according to different scales of a commonly used brain atlas.  




The simplicity of VNN framework allows us to analyze the contributions of each feature to the final statistical outcome of a VNN model. This aspect is significant for applications that use neuroimaging datasets, as the features therein are associated with distinct brain regions. Hence, VNNs make it feasible to assign anatomical or regional \emph{interpretability} to the learning outcomes when trained on datasets consisting of anatomical features. Motivated by these observations, we pursue the task of ``brain age" prediction from cortical thickness features derived from magnetic resonance imaging (MRI) as an application of VNNs (Fig.~\ref{brain_age_overview}). Critically, individuals may experience age-related effects at different rates, captured by so-called ``biological aging". Hence, accelerated aging (e.g., when biological age is elevated as compared to chronological age) may predict age-related vulnerabilities like risk for cognitive decline or neurological conditions like Alzheimer's disease and related dementias (ADRD) ~\cite{habes2016advanced}.   In this domain, the metric of interest is \emph{brain age gap}, i.e., the difference between the biological age and the chronological age. We use the notation $\Delta$-Age to refer to the brain age gap. Since brain age has no ground truth, $\Delta$-Age is essentially a qualitative metric that is expected to be elevated in individuals with underlying neurodegenerative condition as compared to the healthy population. Numerous existing studies based on a large spectrum of machine learning approaches report elevated $\Delta$-Age in neurodegenerative conditions, including Alzheimer's disease~\cite{franke2012longitudinal} and schizophrenia~\cite{hajek2019brain}.  However, several criticisms for brain age evaluation approaches using machine learning have also been identified. Major criticisms include the coarseness of $\Delta$-Age that results in lack of specificity of brain regions contributing to elevated brain age; and unexplained reliance on the prediction accuracy for chronological age in the design of these machine learning models~\cite{cole2017predicting}.  The architecture of VNN enables us to propose a principled framework for brain age prediction that accommodates interpretability by isolating the contributing brain regions to elevated $\Delta$-Age in neurodegeneration. These contributing brain regions are identified by analyzing the group differences between the group with the neurodegenerative condition and healthy controls with respect to the contributions of different features to the corresponding predictions made by VNNs (Fig.~\ref{brain_age_overview}). A layman overview of the advantage offered by VNNs in terms of adding interpretability to brain age prediction is included in Appendix~\ref{layman_vnn}.  

Together, the interpretability and transferability aspects of VNNs provide novel insights into the role of training the model to predict chronological age in the brain age prediction framework. A significant portion of existing literature that studies brain age using deep learning models considers the ability of their models to accurately predict chronological age (time since birth) for healthy controls~\cite{yin2023anatomically,bashyam2020mri,couvy2020ensemble} as a relevant metric for assessing quality of their methodological approach. Simultaneously, deep learning models that have a relatively moderate fit on the chronological age of healthy controls can also provide better insights into brain age than the ones with a tighter fit~\cite{bashyam2020mri}.  
Thus, there is a lack of conceptual clarity in the role of the quality of fit against chronological age of healthy controls in predicting a meaningful brain age~\cite{butler2021pitfalls}, and this issue is unlikely to be addressed by any statistical approach for evaluating brain age if it lacks transparency (Appendix~\ref{layman_vnn}).

It is reasonable to postulate that an accurate or a near-perfect prediction of chronological age in healthy controls does not necessarily equip the deep learning models to accommodate neurodegeneration-driven changes in the brain. 
The study of interpretability and transferability aspects of VNNs demonstrates that the contributing brain regions behind elevated $\Delta$-Age in neurodegeneration may be qualitatively recovered after transferring VNNs from one dataset to another. Moreover, we also find that the ability of VNNs to exploit specific eigenvectors of the anatomical covariance matrix is the underlying factor behind the anatomical interpretability offered by VNNs in the context of $\Delta$-Age.  Hence, VNNs can facilitate a principled decoupling of the objective of inferring brain age from the aim of achieving a near-perfect performance on the task of predicting chronological age for healthy controls. 

\noindent
{\bf Contributions:}
Our contributions in this paper are summarized as follows:
\begin{itemize}
    \item[-] {\bf Transferability of VNNs:} We theoretically characterize the transferability of VNNs between datasets of different dimensionalities. For a dataset with $m_1$ features and covariance matrix $\bC_{m_1}$ and another dataset with $m_2$ features and covariance matrix $\bC_{m_2}$,  we demonstrate that the outputs of a VNN when initialized on  $\bC_{m_1}$ and $\bC_{m_2}$ are close in some sense under appropriate conditions on covariance matrices $\bC_{m_1}$ and $\bC_{m_2}$ (see Theorem~\ref{transferthm}). The theoretical results on transferability of VNNs were validated on a regression task based on a set of cortical thickness datasets curated according to different scales of a commonly used multi-scale brain atlas (Fig.~\ref{vnn_tranfer_fig} and Tables~\ref{transfer_tbl1} and~\ref{transfer_tbl2}). 
    \item[-] {\bf Brain age prediction using VNNs:}
    We deployed VNNs for the task of brain age prediction and compared the $\Delta$-Age between healthy controls and individuals with AD diagnosis. The insights gained in this set of experiments are summarized below:
    \begin{itemize}
        \item[a)] {\bf VNNs provide anatomically interpretable $\Delta$-Age in neurodegeneration:} $\Delta$-Age in individuals with AD diagnosis was elevated as compared to healthy controls. The simplicity of the VNN architecture allowed us to characterize the regional contributors to the elevated $\Delta$-Age, thus, adding anatomical interpretability to $\Delta$-Age (Fig.~\ref{roi_AD}). On a multi-scale dataset, the transferability of VNNs also helped cross-validate the spatial robustness of the observed regional profiles for $\Delta$-Age in neurodegeneration (Fig.~\ref{bvftd_vnn} in Appendix~\ref{explor}). 
        \item[b)] {\bf Anatomical interpretability was correlated with eigenvectors of the anatomical covariance matrix:} Our experiments demonstrated that there was a correlation between specific eigenvectors of the anatomical covariance matrix and the features that facilitated anatomical interpretability for $\Delta$-Age  (Fig.~\ref{corr_eig_plots_oasis}). Thus, $\Delta$-Age was linked to the ability of VNNs to exploit specific eigenvectors of the anatomical covariance matrix. 
        \item[c)] {\bf Clarity in the role of training VNNs to predict chronological age:} Our experiments conclusively showed that training the VNNs to predict chronological age enabled them to exploit the eigenvectors of the anatomical covariance matrix associated with elevated $\Delta$-Age in neurodegeneration (Fig.~\ref{corr_eig_plots_ftdc_oasis}, Fig.~\ref{oasis_ftdc_transfer}, and Fig.~\ref{inner_prod_reg_profile}). The observations made in this context facilitated decoupling of the brain age prediction task from the objective of achieving a near perfect performance on the prediction of chronological age in healthy controls.
    \end{itemize}
\end{itemize}

\noindent
Next, we provide a literature review pertinent to our contributions in this paper. 




\noindent
{\bf Related Literature.}
Graph neural networks (GNNs) are a widely popular adaptation of convolutional neural networks to graph-structured data~\cite{zhou2020graph, scarselli2008graph, wu2020comprehensive}. Graphs are natural descriptors of complex, spatially-distributed phenomena and therefore, graph-structured datasets are prevalent in a variety of application domains~\cite{sahu2017ubiquity}, including physical infrastructure~\cite{quintana1982power}, social network analysis~\cite{myers2014information}, biology~\cite{fout2017protein}, network neuroscience~\cite{sporns2022graph} and natural sciences~\cite{sanchez2018graph}. Processing of graph-structured data faces various practical challenges (like generalizability, reproducibility, scalability)~\cite{sahu2017ubiquity} and therefore, a number of variants of GNNs have been proposed to address them. We refer the reader to recent survey articles in~\cite{wu2020comprehensive} and~\cite{abadal2021computing} that categorize GNN architectures according to diverse criteria, including mathematical formulations, algorithms, and hardware/software implementations. Convolutional GNNs~\cite{zhang2019graph}, graph autoencoders~\cite{kipf2016variational}, recurrent GNNs, and gated GNNs~\cite{li2016gated} are among a few prominently studied and applied categories of GNNs. 

The taxonomy pertinent to this paper is that of graph convolutional networks~\cite{zhang2019graph}. GCNs typically rely on an information aggregation procedure (referred to as graph convolutions) over a graph structure for data processing. Several implementation strategies for graph convolution operations have been proposed in the literature, including spectral convolutions~\cite{bruna2013spectral}, Chebyshev polynomials~\cite{hammond2011wavelets}, ordinary polynomials~\cite{gama2020graphs}, and diffusion based representations~\cite{atwood2016diffusion}.
GCNs admit the properties of stability to topological perturbations and transferability across graphs of different sizes in various settings~\cite{verma2019stability,keriven2020convergence,gama2020stability, ruiz2020graphon}, which makes them a well-motivated data analysis tool for graph-structured data.  


In our recent work in~\cite{sihag2022covariance}, we studied coVariance neural networks (VNN), which are GCNs with sample covariance matrices as graph and polynomial graph filters as convolution operation. Covariance matrices and principal component analysis (PCA) form the two cornerstones of non-parametric analyses in real world applications that have spatially distributed, multi-variate data acquisition protocols, including neuroimaging~\cite{evans2013networks}, computer vision~\cite{murase1994illumination,de2001robust}, weather modeling~\cite{stephenson1997correlation}, traffic flow analysis~\cite{shao2014estimation}, and cloud computing~\cite{ismail2013detecting}. Our results in~\cite{sihag2022covariance}  established the following significant observations i) there exist similarities between the spectral analysis of graph convolution on covariance matrix and the standard PCA transformation;  and ii) VNNs are robust to the number of samples used to estimate the sample covariance matrix, thus, overcoming a potential source of instability and irreproducibility of PCA based statistical inference~\cite{joliffe1992principal,elhaik2022principal}. 


The transferability of GNNs from training graphs to some compatible family of test graphs has been previously studied from different theoretical perspectives~\cite{levie2021transferability,ruiz2020graphon,zhu2021transfer,maskey2023transferability}. The notion of transferability of GNNs broadly encapsulates the intuition that GNNs may be able to retain their performance for some inference task when applied over test graphs (irrespective of the size) that describe the same phenomenon as the training graphs. In this context, the study in~\cite{levie2021transferability} considers transferability of GNNs over graphs that represent the discretization of underlying topological spaces. Several studies also consider GNN transferability over graphs that belong to a converging sequence that approaches a limiting object in the asymptote of a large number of nodes~\cite{ruiz2020graphon,maskey2023transferability}. In~\cite{zhu2021transfer}, the similarity between the ego graph distributions (derived from graph topology) formed the workhorse for assessing transferability of GNNs. Transferability of GNNs also provides advantages in terms of computational complexity, which for GNNs scales as ${\cal O}(m^2)$ for dense graphs with $m$ nodes. In this paper, we extend the notion of transferability to VNNs and establish the transference over covariance matrices of different sizes that converge in some sense. In this context, transferability is not feasible for traditional PCA-driven statistical models, as the principal components are restricted within the feature space of the original dataset and need to be re-evaluated if the number of features change. Specifically, PCA does not provide any notion of similarity between the principal components extracted from a covariance matrix of size $m_1\times m_1$ and that from another covariance matrix of size $m_2\times m_2$. Thus, transferability of VNNs is broadly relevant to the domain of multivariate statistics. 


There has been a growing interest in multi-scale datasets in neuroscience~\cite{zeighami2021association,luppi2021combining, royer2022open,markello2022neuromaps,khundrakpam2015prediction}. These datasets rely on brain atlases or templates that allow a multi-scale parcellation of the brain surface (for instance,  Schaefer's atlas~\cite{schaefer2018local} and Lausanne atlas~\cite{hagmann2008mapping}). A multi-scale brain atlas partitions the brain cortex into a variable number of regions at different scales. However, statistically sound approaches that optimally leverage the redundancy of information in datasets consisting of features at multiple scales are currently lacking. In this paper, we leverage the cortical thickness datasets curated according to multi-scale brain atlases to validate the transferability of VNNs. 
In scenarios where the theoretical guarantees for VNN transferability do not hold between datasets curated according to different brain atlases, appropriate mappings that account for the differences between different brain atlases~\cite{yang2022data,markello2022neuromaps} may be necessary to scale VNN outputs for good quantitative performance on the chronological age prediction task when VNNs are transferred from one dataset to another. However, these mappings are unlikely to possess any information regarding neurodegeneration or brain age and hence, are not studied here.

The human aging process is characterized by progressive anatomical and functional changes in the brain~\cite{lopez2013hallmarks}. The $\Delta$-Age for a pathology can be seen as a scalar representation of longitudinal, pathology-driven atypical changes in the brain~\cite{franke2012longitudinal}.  Data from neuroimaging modalities, including structural magnetic resonance imaging (MRI), functional MRI, and positron emission tomography, capture the changes of the brain due to neurodegeneration and healthy aging~\cite{frisoni2010clinical,chen2019functional}. Thus, inferring brain age from different neuroimaging modalities has been an active area of research~\cite{lee2022deep, yin2022deep,millar2022multimodal,dafflon2020automated,baecker2021brain,aycheh2018biological}. In this paper, we leverage the datasets of cortical thickness measures derived from structural MRI images in OASIS-3 dataset to study brain age~\cite{lamontagne2019oasis}. Cortical thickness evolves with normal aging~\cite{mcginnis2011age} and is impacted due to neurodegeneration in AD~\cite{lehmann2011cortical,sabuncu2011dynamics}. Thus, the age-related and disease severity related variations also appear in anatomical covariance matrices evaluated from the correlation among the cortical thickness measures across a population~\cite{evans2013networks}.

Due to inherent interpretability offered by VNNs to their statistical outcomes, VNNs provide an explainable regional profile to the elevated $\Delta$-Age in neurodegeneration. Limited focus has been on comparable studies in this regard that associate brain age gaps with regional profiles~\cite{yin2023anatomically,popescu2021local}. The study in~\cite{yin2023anatomically} adopts a convolutional neural network approach to infer brain age from MRI images directly and assigns importance to brain regions in evaluating the brain age. In principle, the interpretability offered by VNNs in the context of brain age is similar, as we infer a regional profile for $\Delta$-Age by isolating the brain regions that are contributors to the elevated $\Delta$-Age in neurodegeneration. In addition, the regional profile identified by VNNs is correlated with specific eigenvectors or the principal components of the anatomical covariance matrix. Hence, the $\Delta$-Age inferred by our framework is driven by the ability of a VNN to manipulate the input data according to certain principal components of the anatomical covariance matrix. Also, VNNs are significantly less complex deep learning models as compared to those studied in~\cite{yin2023anatomically}. Our results demonstrate that the VNNs trained with less than $2000$ learnable parameters exhibit (spatially robust) regional interpretability in the context of brain age in AD. In general, the regional expressivity offered by VNNs is in stark contrast to a multitude of existing relevant studies that rely on less transparent statistical approaches and further use post-hoc analyses (such as ablation analysis~\cite{feng2020estimating, feng2018deep,essemlali2020understanding} or exploring correlations with region-specific markers~\cite{lee2022deep} and psychiatric symptoms~\cite{karim2021aging,liem2017predicting}) to assign interpretability to a scalar, elevated $\Delta$-Age effect. 

\noindent

\section{coVariance Neural Networks}\label{vnn_intro}
VNNs operate on covariance matrices and have similar architecture as GNNs \footnote{GCNs and GNNs are used interchangeably in the rest of the paper.}. We start by providing preliminary definitions pertaining to the architecture and discuss the theoretical properties associated with VNNs later.

Consider an $m-$dimensional random vector $\bx \in \mR^{m \times 1}$ whose ensemble covariance matrix is defined as 
\begin{align}
    \bC \dff \mE[(\bx-\mE[\bx])(\bx-\mE[\bx])^{\sf T}]\;,
\end{align}  
where $\cdot^{\sf T}$ is the transpose operator and $\mE[\cdot]$ is the expectation with respect to the probability distribution of~$\bx$. In practice, we usually have access to a dataset that provides us with the statistical information about~$\bx$. Therefore, we also consider a dataset consisting of $n$ random, independent and identically distributed (i.i.d) samples of $\bx$, given by~$\bx_i \in \mR^{m \times 1},\forall i\in\{1,\dots,n\}$, where the dataset can be represented in the matrix form as $\bX_n = [\bx_1,\dots,\bx_n]$. Using $\bX_n$, we estimate the ensemble covariance matrix, conventionally referred to as the sample covariance matrix as follows
\begin{align}\label{sample_cov}
    \hat \bC \triangleq \frac{1}{n-1} \sum\limits_{i=1}^n(\bx_i - \bar\bx) (\bx_i-\bar\bx)^{\sf T}\;,
\end{align}
where $\bar\bx$ is the sample mean of samples in $\bX_n$. Next, we discuss the motivation behind studying VNNs separately from GNNs. 
\subsection{Motivation}\label{mtvn}
Covariance matrices are ubiquitous in real world applications that have spatially distributed, multi-variate data acquisition protocols~\cite{evans2013networks,stephenson1997correlation,shao2014estimation,ismail2013detecting}. The eigenvectors of covariance matrices are termed as principal components of the dataset and constitute the well-known PCA transformation~\cite{shlens2014tutorial}. Our discussion in this section shows that the spectral domain representation of the graph convolution operator instantiated on the covariance matrix as graph yields the PCA transformation. This observation suggests that learning with a GNN with covariance matrix as a graph is achieved (at least in part) by manipulation of the data according to the eigenbasis of the covariance matrix. Therefore, this paper provides a conceptual contribution towards the study of GNNs instantiated on covariance matrices in the form of VNNs. We also provide theoretical guarantees that result in significant advantages over models that perform statistical inference using PCA. 

The covariance matrix $\bC$ can be viewed as the adjacency matrix of a graph representing the stochastic structure of the vector $\bx$, where the $m$ dimensions of $\bx$ can be thought of as the nodes of an $m$-node, undirected graph and its edges represent the pairwise covariance between elements in $\bx$. Furthermore, the eigenvalues of $\bC$ encode the variability of the dataset along different directions in an orthogonal space determined by the associated eigenvectors or principal components.

In graph signal processing, a vector defined on the nodes of the graph is viewed as the graph signal and the projection of a graph signal on the eigenbasis of the graph yields the graph Fourier transform~\cite{ortega2018graph}. The graph Fourier transform provides a systematic mathematical tool to analyze convolutional filters over graphs~\cite{zhang2019graph, shuman2013emerging}. Interestingly, the classical Fourier transform and graph Fourier transform converge over a discrete, periodic time series represented on a directed, cyclic graph~\cite{sandryhaila2013discrete}. Similarly to the graph Fourier transform, we can define the coVariance Fourier transform as the  projection of a random instance $\bx$\footnote{For ease of notation, we will subsequently use the notation $\bx$ to refer to a random instance of the random vector whose covariance matrix is $\bC$.} on the eigenvectors of the covariance matrix $\bC$~\cite[Definition 1]{sihag2022covariance}. The definition of coVariance Fourier transform from~\cite{sihag2022covariance} is stated next. For this purpose, we leverage the eigendecomposition of $\bC$ given by
\begin{align}\label{sample_eig}
   \bC = \bV \bLambda \bV^{\sf T}\;,
\end{align}
where $\bV = [\bv_1,\dots,\bv_m]$ is a matrix of size $m\times m$ with its columns as the eigenvectors and $\bLambda= {\sf diag}(\lambda_1,\dots,\lambda_m)$ is a diagonal matrix with its diagonal elements representing the eigenvalues of $\bC$.  
\begin{definition} [coVariance Fourier Transform]
The coVariance Fourier transform (VFT) of a random sample $\bx$ is defined as its projection on the eigenspace of $\bC$ and is given by
\begin{align}\label{vft}
    \tilde\bx \dff \bV^{\sf T} \bx\;.
\end{align}
\end{definition}
\noindent 
The $i$-th entry of $\tilde\bx$, i.e., $[\tilde\bx]_i$ represents the projection of $\bx$ on eigenvector $\bv_i$ and hence, it is associated with the eigenvalue~$\lambda_i$. Thus, the similarity between PCA transformation and VFT in~\eqref{vft} implies that eigenvalue~$\lambda_i$ encodes the variability of the dataset $\bX_n$ in the direction of the principal component $\bv_i$. In this context, the eigenvalues of the covariance matrix are the mathematical equivalent of the notion of graph frequencies in graph signal processing~\cite{ortega2018graph}.

GNNs with convolutional filters that rely on a \emph{linear shift-and-sum} operation fundamentally exhibit the stability to changes in graph topology~\cite{gama2020stability}. Since the sample covariance matrix $\hat\bC$ is likely to be perturbed with respect to $\bC$~\cite{loukas2017close}, stability is desirable to mitigate the impact of number of samples on statistical inference. Motivated by this observation, we define the notion of coVariance filters (VF) that are polynomials in the covariance matrix and characterize the convolution operation in VNNs. 



\begin{definition}[coVariance Filters]\label{def1}
Given a set of real valued, scalar parameters $\{h_k\}_{k=0}^K$, the coVariance filter on a covariance matrix $\bC$ is defined as
\begin{align}
    \bH(\bC)\dff \sum\limits_{k=0}^K h_k\bC^k\;.
\end{align}
Furthermore, the output of the covariance filter $\bH(\bC)$ for an input $\bx$ is given by 
\begin{align}
\bz = \bH(\bC)\bx\;.
\end{align}
\end{definition}
\noindent
The application of coVariance filter $\bH(\bC)$ on an input $\bx$ translates to combining information across different sized neighborhoods. To elucidate this observation, consider a coVariance filter with $K=1$ and $h_0 = 0$. In this scenario, the $i$-th element of $\bz$ is evaluated as 
\begin{align}\label{exconv}
    [\bz]_i = h_1 \sum\limits_{j=1}^m[\bC]_{ij} [\bx]_j\;.
\end{align}
Thus, $[\bz]_i$ represents the linear combination of all elements according to the $i$-th row of $\bC$. This observation implies that the neighborhood of $i$-th dimension in $\bx$ (derived from the graph representation $\bC$) determines the outcome $[\bz]_i$ of the convolution operation in~\eqref{exconv}. For $K>1$, the convolution operation combines information across multi-hop neighborhoods (up to $K$-hop) according to the weights $h_k$.  

The spectral analysis of the covariance filtering operation in Definition~\ref{def1} via VFT of the filter output~$\bz$ yields the frequency response of the covariance filter and reveals the similarities between covariance filtering and PCA. After taking the VFT of~$\bz$, we have
\begin{align}
    \tilde \bz &= \bV^{\sf T} \bH(\bC)\bx\;,\label{eq1} \\
    &= \sum\limits_{k=0}^K h_k\bLambda^k \bV^{\sf T}\bx = \sum\limits_{k=0}^K h_k\bLambda^k \tilde\bx\;,\label{eq2}
\end{align}
where $\tilde\bx = \bV^{\sf T}\bx$ is the covariance Fourier transform of $\bx$
and~\eqref{eq2} follows from~\eqref{eq1} from the orthonormality of eigenvectors of $\bC$. The frequency response of the coVariance filter depends on the filter taps $\{h_k\}$ and the eigenvalues of $\bC$ and is given by
\begin{align}\label{vvf}
    h(\lambda) = \sum\limits_{k=0}^K h_k \lambda^k\;.
\end{align}
Furthermore, since $\tilde\bx$ is a projection of $\bx$ on the eigenvector space $\bV$ and $[\tilde \bz]_i = h(\lambda_i) [\bV^{\sf T}\bx]_i$, the $i$-th element of $\tilde\bz$ yields similarities with the standard PCA transformation. This observation is formalized in Lemma~\ref{lm1}. 
\begin{lemma}[Spectrum of coVariance Filter and PCA]\label{lm1}
Given a covariance matrix $\bC$ with eigendecomposition in~\eqref{sample_eig}, if the PCA transformation of input $\bx$ is given by $\by = \bV^{\sf T} \bx$, there exists a filter bank of coVariance filters $\{\bH_i(\bC): i\in \{1,\dots,m\}\}$, such that, the score of the projection of input $\bx$ on eigenvector~$\bv_i$ can be recovered by the application of a coVariance filter $\bH_i(\bC)$ as:
\begin{align}
    [\by]_i = \bv_i^{\sf T}  \bH_i(\bC) \bx \;, 
\end{align}
where the frequency response $h_i(\lambda)$ of the filter $\bH_i(\bC)$ is given by
\begin{equation}
    h_i(\lambda) = \begin{cases}
     \omega_i ,\quad \text{if} \quad \lambda =\lambda_i  \;,\\
     0, \quad \text{otherwise}
    \end{cases}\;.
\end{equation}
\end{lemma}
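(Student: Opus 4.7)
The plan is to build the filter $\bH_i(\bC)$ via Lagrange interpolation on the spectrum of $\bC$, since every covariance filter is a polynomial in $\bC$ and therefore acts as the scalar polynomial $h_i(\lambda)$ evaluated eigenvalue-wise in the eigenbasis. The whole argument then collapses to picking polynomial coefficients that realize the prescribed frequency response.

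\textbf{Step 1: Diagonalize the filter.} Using the eigendecomposition $\bC = \bV\bLambda\bV^{\sf T}$ and $\bV^{\sf T}\bV = \bI$, I would expand
\begin{align}
\bH_i(\bC) \;=\; \sum_{k=0}^{K} h_{i,k}\bC^k \;=\; \bV\Bigl(\sum_{k=0}^{K} h_{i,k}\bLambda^k\Bigr)\bV^{\sf T} \;=\; \bV\,h_i(\bLambda)\,\bV^{\sf T},
\end{align}
so that $h_i(\bLambda)$ is the diagonal matrix $\mathrm{diag}(h_i(\lambda_1),\dots,h_i(\lambda_m))$. Thus specifying the action of $\bH_i(\bC)$ on the eigenbasis amounts to specifying the scalar values $h_i(\lambda_1),\dots,h_i(\lambda_m)$.

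\textbf{Step 2: Construct $h_i$ by Lagrange interpolation.} Assuming (for now) that the eigenvalues of $\bC$ are distinct, define the degree-$(m-1)$ polynomial
\begin{align}
h_i(\lambda) \;\dff\; \omega_i \prod_{j\neq i} \frac{\lambda - \lambda_j}{\lambda_i - \lambda_j}.
\end{align}
By construction $h_i(\lambda_i) = \omega_i$ and $h_i(\lambda_j)=0$ for $j\neq i$, which matches the prescribed frequency response and fixes the filter taps $\{h_{i,k}\}_{k=0}^{m-1}$ as the coefficients of this polynomial.

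\textbf{Step 3: Verify recovery of the PCA score.} Using orthonormality, $\bv_i^{\sf T}\bV$ is the $i$-th standard basis row $\mathbf{e}_i^{\sf T}$, hence
\begin{align}
\bv_i^{\sf T}\bH_i(\bC)\bx \;=\; \mathbf{e}_i^{\sf T}\,h_i(\bLambda)\,\bV^{\sf T}\bx \;=\; h_i(\lambda_i)\,[\bV^{\sf T}\bx]_i \;=\; \omega_i\,[\by]_i,
\end{align}
so setting $\omega_i=1$ recovers $[\by]_i$ exactly (and a general $\omega_i$ recovers it up to the chosen scaling). This completes the proof in the generic (distinct-spectrum) case.

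\textbf{Expected obstacle.} The only subtlety is the case of repeated eigenvalues: a polynomial in $\bC$ cannot separate two eigenvectors sharing the same eigenvalue, because $h_i(\bLambda)$ is forced to take equal values at equal arguments. This is not a defect of the construction but an intrinsic feature of PCA itself, where the choice of basis inside a degenerate eigenspace is arbitrary. I would handle this by noting that Lemma~\ref{lm1} can be stated either under the (generic) assumption that the eigenvalues of $\bC$ are distinct, or more generally by reinterpreting the ``score on $\bv_i$'' as the score on the full eigenspace associated with $\lambda_i$, in which case $h_i(\lambda)$ from Step 2 (with the product restricted to distinct eigenvalues) again produces the required spectral selector.
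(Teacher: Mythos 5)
Your proof is correct and follows essentially the same route as the paper: the paper's argument is precisely the spectral identity $\bV^{\sf T}\bH(\bC)\bx = \sum_k h_k\bLambda^k\bV^{\sf T}\bx$, so that $[\tilde\bz]_i = h(\lambda_i)[\bV^{\sf T}\bx]_i$, with the filter realizing the prescribed frequency response left implicit. Your explicit Lagrange-interpolation construction of the taps and your caveat about repeated eigenvalues (which the paper acknowledges separately as the loss of discriminability between eigenvectors with close eigenvalues) simply make explicit what the paper asserts.
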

\noindent
Lemma~\ref{lm1} establishes equivalence between processing data samples with PCA and processing data samples with a specific polynomial on the covariance matrix.

Our previous work in~\cite{sihag2022covariance} showed that in contrast to PCA involving eigenvectors of the sample covariance matrix,  information processing with polynomials of the sample covariance matrix can be stable to finite sample induced perturbations. Indeed, in practice, we may only have access to 
the sample covariance matrix $\hat\bC$ which is an estimate of $\bC$. Since $\hat\bC$ is a consistent estimator of $\bC$, the eigenvalues and eigenvectors of $\hat\bC$ approach those of $\bC$ in the limit of infinite number of samples, i.e., $n\rightarrow\infty$. However, for a finite number of samples $n$, the eigenvectors and eigenvalues of $\hat\bC$ are perturbed with respect to those of $\bC$~\cite{loukas2017close}. In principle, statistical inference using PCA can be prone to instability due to eigenvectors corresponding to eigenvalues of the ensemble covariance matrix that are close~\cite{joliffe1992principal} and, thus, lead to irreproducible statistical conclusions~\cite{elhaik2022principal}. In this context, we informally state Theorem~2 from~\cite{sihag2022covariance}.
\begin{theorem}[Stability of coVariance filter]\label{vfstability}
    Consider a dataset with  sample covariance matrix $\hat\bC$ formed by $n$ samples and the counterpart ensemble covariance matrix $\bC$. Under Assumption 1 in~\cite{sihag2022covariance},  the following holds with high probability:
\begin{align}\label{filterstab_rslt2}
    \left\lVert \bH(\hat\bC) - \bH(\bC)\right\rVert = {\cal O}\left(\frac{\nu}{n^{\frac{1}{2} - \varepsilon}}\right)\;,
\end{align}
for some $\nu>0$ and $\varepsilon\in (0,1/2)$. 
\end{theorem}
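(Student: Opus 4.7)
The plan is to reduce the filter stability bound to a matrix-power perturbation bound and then invoke a finite-sample concentration inequality for the sample covariance matrix. The high-level structure is: expand the filter difference polynomial, telescope each power difference into a single $\hat\bC-\bC$ factor surrounded by bounded operators, and then plug in the concentration rate.

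First, I would apply the triangle inequality to the polynomial definition in Definition~\ref{def1}:
\begin{align*}
\left\lVert \bH(\hat\bC) - \bH(\bC)\right\rVert
\;\le\; \sum_{k=0}^{K} |h_k|\, \left\lVert \hat\bC^{k} - \bC^{k}\right\rVert.
\end{align*}
The $k=0$ term vanishes, and for each $k\ge 1$ I would use the standard telescoping identity
\begin{align*}
\hat\bC^{k} - \bC^{k} \;=\; \sum_{j=0}^{k-1} \hat\bC^{\,j}\,(\hat\bC - \bC)\,\bC^{\,k-1-j},
\end{align*}
which, after taking operator norms and using submultiplicativity, gives
\begin{align*}
\left\lVert \hat\bC^{k} - \bC^{k}\right\rVert
\;\le\; k\,\max\!\bigl(\lVert\bC\rVert,\lVert\hat\bC\rVert\bigr)^{k-1}\,\left\lVert \hat\bC - \bC\right\rVert.
\end{align*}
Combining the two displays yields a bound of the form $C(\{h_k\},K,\lVert\bC\rVert)\cdot \lVert\hat\bC-\bC\rVert$, where the prefactor absorbs into the constant $\nu$ once $\lVert\hat\bC\rVert$ is controlled (with high probability it is close to $\lVert\bC\rVert$, so one can bound it by, say, $2\lVert\bC\rVert$ on the same high-probability event).

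The second, and more substantive, step is invoking the finite-sample concentration for $\hat\bC$. Under Assumption~1 of~\cite{sihag2022covariance} (which provides the appropriate moment/sub-Gaussian-type tail on $\bx$), standard results on sample covariance estimation give, with high probability,
\begin{align*}
\left\lVert \hat\bC - \bC\right\rVert \;=\; {\cal O}\!\left(\frac{1}{n^{\frac{1}{2}-\varepsilon}}\right)
\end{align*}
for any $\varepsilon\in(0,1/2)$. Plugging this into the telescoped bound yields exactly the claimed rate, with $\nu$ depending on the filter taps $\{h_k\}$, the order $K$, and $\lVert\bC\rVert$. Since the theorem is stated informally and simply cites Theorem~2 of~\cite{sihag2022covariance}, I would not pursue sharp constants; the only nontrivial content is the concentration step.

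The main obstacle is the concentration inequality itself: obtaining a dimension-free (or mildly dimension-dependent) bound on $\lVert\hat\bC-\bC\rVert$ with the specified rate requires a precise tail assumption on $\bx$, and the residual $\varepsilon$ in the exponent is exactly the slack that arises from invoking sub-exponential/sub-Gaussian Bernstein-type bounds at an arbitrarily chosen confidence level. Everything else (the telescoping, the triangle inequality, absorbing constants into $\nu$) is mechanical. Because the paper explicitly defers the stability assumptions to~\cite{sihag2022covariance}, I would simply quote the concentration bound as a black box and present the telescoping argument as the one-step reduction that turns filter stability into matrix-concentration.
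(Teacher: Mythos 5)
Your proposal is mechanically sound and does establish the stated asymptotic bound, but it takes a genuinely different route from the paper's proof, and it mischaracterizes the assumption the theorem invokes. The paper does not telescope $\hat\bC^k-\bC^k$ and reduce to $\|\hat\bC-\bC\|$. Instead it performs a spectral perturbation analysis: it writes the first-order expansion $(\bC+\bE)^k=\bC^k+\sum_r\bC^r\bE\bC^{k-r-1}+\tilde\bE$ with $\bE=\hat\bC-\bC$, expands the input in the eigenbasis of $\bC$, decomposes $\bE\bv_i$ via eigen-perturbation identities into eigenvector-rotation, eigenvalue-shift, and second-order terms, and controls the dominant (eigenvector-rotation) term using a concentration bound on the alignments $|\bv_j^{\sf T}\hat\bv_i|$ from Loukas. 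That is where the rate $n^{-(1/2-\varepsilon)}$ and the probability $1-n^{-2\varepsilon}$ actually come from: the eigenvector-alignment bound carries a factor $|\lambda_i-\lambda_j|^{-1}$, which is tamed only by the assumption $|h(\lambda_i)-h(\lambda_j)|/|\lambda_i-\lambda_j|\leq M/k_i$ on the filter's frequency response. This filter-design condition is ``Assumption 1'' in the cited work — not, as you guessed, a moment or sub-Gaussian tail condition on $\bx$ (the paper simply takes $\|\bx\|_2\leq 1$). Your route sidesteps that assumption entirely, which is why your constant $\nu$ ends up depending on the raw quantities $\sum_k |h_k|\,k\,\|\bC\|^{k-1}$ rather than on the frequency-response variation $M$.

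What each approach buys: your telescoping argument is shorter, needs no eigengap or filter-smoothness hypotheses, and in fact yields the sharper rate $O(n^{-1/2})$ (up to log factors) since $\|\hat\bC-\bC\|$ concentrates at that rate — the $\varepsilon$ slack is not needed at your concentration step, contrary to your closing remark. The paper's finer decomposition is what supports the conceptual claims elsewhere in the text: the constant scales with how sharply the filter discriminates between nearby eigenvalues, which is the source of the stated stability--discriminability trade-off, and the three-term structure is reused in the VNN stability and transferability analyses. So your proof proves the theorem as literally stated, but it would not serve as a drop-in replacement for the paper's argument, and you should correct the reading of Assumption 1.
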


\noindent
Theorem~\ref{vfstability} establishes that information processing using a polynomial of the covariance matrix offers stability with respect to the perturbations between the sample covariance matrix $\hat\bC$ and $\bC$~\cite{sihag2022covariance}. Also, as a corollary to Theorem~\ref{vfstability}, we can state that the difference between outputs of covariance filters instantiated on distinct sample covariance matrices are bounded. These observations imply that statistical inference based on covariance filters are characterized by robustness to the effects of finite sample size and, thus, result in consistent statistical outcomes with high confidence. No such guarantees are offered by PCA. Next, we discuss the architecture of VNNs that is based on covariance filters,  which results in VNNs inheriting the stability offered by coVariance filters. 
\subsection{Architecture}
We begin with the description of a coVariance perceptron that forms one layer of the VNN architecture. For this purpose, we leverage the definition of a pointwise, nonlinear activation function $\sigma(\cdot)$, such that, for $\bx = [x_1,\dots,x_m]$, we have $\sigma(\bx) = [\sigma(x_1),\dots,\sigma(x_m)]$. Examples of point-wise, nonlinear activation functions are ${\sf ReLU}$ and $\sf tanh$.


\begin{definition}[coVariance Perceptron]\label{def_cov}
For a given pointwise nonlinear activation function $\sigma(\cdot)$, input $\bx$, a coVariance filter~$\bH(\bC)$ and its corresponding set of filter taps $\cH$, the coVariance perceptron is defined as
\begin{align}\label{1l}
 \Phi(\bx; \bC, \cH) \dff \sigma(\bH(\bC)\bx)\;.
\end{align}
\end{definition}
\noindent
A VNN can be constructed by stacking perceptrons to form multi-layer information processing architecture. This observation is formalized next.

\begin{remark}[Multi-layer VNN]
 Consider an $L$-layer architecture formed by stacking $L$ coVariance perceptrons defined in Definition~\ref{def_cov}. In this scenario, we denote the coVariance filter in layer $\ell$ of a VNN by~$\bH_{\ell}(\bC)$ and its corresponding set of filter taps are given by $\cH_{\ell}$. For a given pointwise nonlinear activation function $\sigma(\cdot)$, the relationship between the input $\bx_{\ell-1}$ and the output $\bx_{\ell}$ for the coVariance perceptron in the $\ell$-th layer is given by
\begin{align}
    \bx_{\ell} = \sigma(\bH_{\ell}(\bC)\bx_{\ell-1})\; \quad\text{for }\quad  \ell\in \{1,\dots,L\},
\end{align}
where $\bx_0$ is the input $\bx$. We refer to this $L$-layer architecture as an $L$-layer VNN. 
 \end{remark}

 \begin{figure}[!htbp]
  \centering
  \includegraphics[scale=0.4]{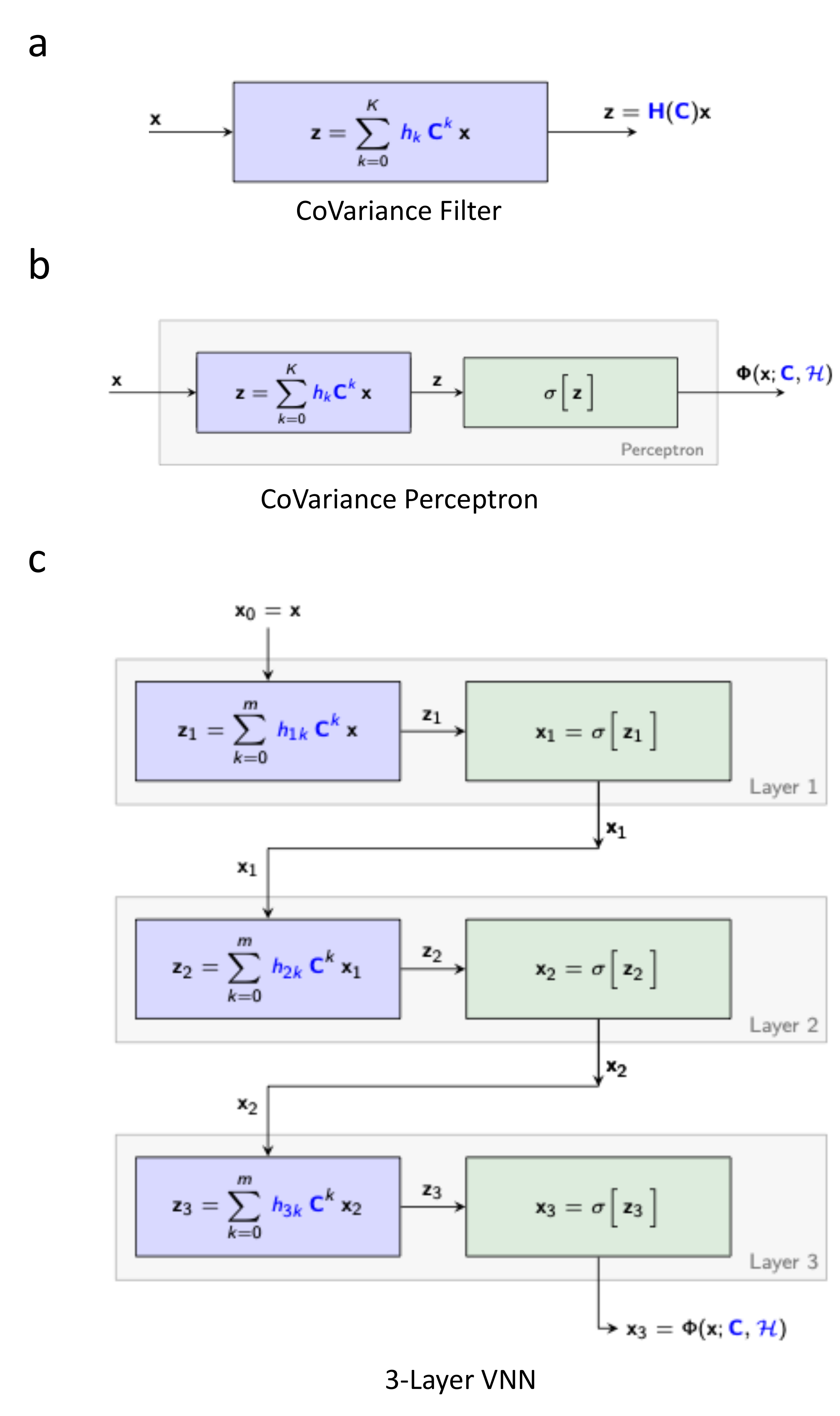}
   \caption{{\bf Basics of VNN architecture.} Panel~{\bf a} illustrates that the coVariance filter $\bH(\bC)$ is a polynomial in $\bC$ and its application on input $\bx$. Panel~{\bf b} shows the construction of a coVariance perceptron based on coVariance filter $\bH(\bC)$ and pointwise nonlinearity $\sigma$. coVariance perceptron specifies one layer of VNN. Panel~{\bf c} shows a basic multi-layer VNN architecture formed by stacking three coVariance perceptrons. }
   \label{vnn_archit}
\end{figure}
\noindent
A pictorial illustration of a multi-layer VNN is included in Fig.~\ref{vnn_archit}. Furthermore, similar to other deep learning models, sufficient expressive power can be facilitated in the VNN architecture by incorporating multiple input multiple output (MIMO) processing at every layer. Formally, consider a VNN layer $\ell$ that can process $F_{\ell-1}$ number of $m$-dimensional inputs and outputs $F_{\ell}$ number of $m$-dimensional outputs via $F_{\ell-1} \times F_{\ell}$ number of filter banks~\cite{gama2020stability}. In this scenario, the input is specified as $\bX_{\sf in} = [\bx_{\sf in}[1],\dots,\bx_{\sf in}[F_{\sf in}]]$, and the output is specified as $\bX_{\sf out} = [\bx_{\sf out}[1],\dots,\bx_{\sf out}[F_{\sf out}]]$. The relationship between the $f$-th output $\bx_{\sf out}[f]$ and the input $\bx_{\sf in}$ is given by
\begin{align}
   \bx_{\sf out}[f] &=  \sigma\left(\sum\limits_{g = 1}^{F_{\sf in} } \bH_{fg}(\bC)\bx_{\sf in} [g] \right)\label{vfbnk}\;,
\end{align}
where $\bH_{fg}(\bC)$ is the coVariance filter that processes $\bx_{\sf in}[g]$. Without loss of generality, we focus the subsequent discussion on the scenario when we have $F_{\ell} = F,\forall \ell \in \{1,\dots,L\}$. In this case, the set of all filter taps is given by  ${\cal H} = \{\cH_{fg}^{\ell}\}, \forall f,g \in \{1,\dots, F\}, \ell \in \{1,\dots,L\}$, where $\cH_{fg} = \{h_{fg}^{\ell}[k]\}_{k=0}^K$ and $h_{fg}^{\ell}[k]$ is the $k$-th filter tap for filter $\bH_{fg}(\bC)$. Thus, we can compactly represent a multi-layer VNN architecture capable of MIMO processing via the notation $\Phi(\bx;\bC,{\cal H})$ as the set of filter taps $\cH$ captures the full span of its architecture. We also use the notation $\Phi(\bx;\bC,{\cal H})$ to denote the output of the VNN. For a VNN with $F$ number of $m$-dimensional outputs in the final layer, the size of the VNN output $\Phi(\bx;\bC,{\cal H})$ is $m\times F$.

The output $\Phi(\bx;\bC,{\cal H})$ is succeeded by a readout function that maps $\Phi(\bx;\bC,{\cal H})$ to the desired output.  In this paper, we assume non-adaptive or non-learnable readout function (e.g., mean, max or min functions) which preserves the property of permutation invariance for VNN model. Furthermore, a non-adaptive readout function is essential for the transferability property of VNNs (discussed in Section~\ref{trnsfrr}).

It is imperative to study the robustness of VNN outputs to the number of samples $n$ in order to guarantee reproducibility of VNN statistical outcomes. Specifically, it is desirable that the change in VNN outputs is controlled or bounded when the architecture is trained using sample covariance matrices estimated from $n_1$ or $n_2$ samples when $n_1\neq n_2$. In Theorem~\ref{thm_stability}, we informally state the result on the stability of VNNs by analyzing $\|\Phi(\bx;\hat\bC,\cH) - \Phi(\bx;\bC,\cH)\|$, i.e., the difference between the VNN outputs for the sample covariance matrix $\hat\bC$ and the ensemble covariance matrix $\bC$. This Theorem was also previously established in~\cite{sihag2022covariance}.
\begin{theorem}[VNN Stability]\label{thm_stability}
Consider an ensemble covariance matrix $\bC$ and its estimate $\hat\bC$ formed from $n$ samples. Given a bank of coVariance filters with filter taps $\cH = \{\cH_{fg}^{\ell}: f\in \{1,\dots, F\}, \ell\in \{1,\dots,L\}\}$, the coVariance filters are stable and satisfy
\begin{align}\label{alpha_n}
    \|\bH_{fg}^{\ell}(\hat\bC)- \bH_{fg}^{\ell}(\bC)\| \leq \alpha_n \;,
\end{align}
for some $\alpha_n > 0$ with high probability (Theorem~\ref{vfstability}). Also, for a pointwise nonlinearity function $\sigma(\cdot)$, such that, $|\sigma(a) - \sigma(b)|\leq |a-b|$, we have
\begin{align}\label{vnn_stab}
    \|\Phi(\bx;\hat\bC, \cH)-\Phi(\bx;\bC,\cH)\| \leq LF^{L} \alpha_n\;.
\end{align}
\end{theorem}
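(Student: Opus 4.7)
The plan is to establish the bound by induction on the layer index, exploiting the telescoping structure of a multi-layer filter bank composition together with the $1$-Lipschitz property of $\sigma$. Denote by $\bx_{\ell}[f]$ and $\hat\bx_{\ell}[f]$ the $f$-th feature output of layer $\ell$ computed on $\bC$ and $\hat\bC$, respectively, with $\bx_0 = \hat\bx_0 = \bx$. The inductive claim will be a bound of the form $\sum_{f=1}^F \|\hat\bx_\ell[f]-\bx_\ell[f]\| \leq \ell F^{\ell} \alpha_n$, which at $\ell = L$ (followed by the non-adaptive readout that does not increase the norm) gives~\eqref{vnn_stab}.

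For the inductive step, I would first apply the Lipschitz property of $\sigma$ to pass the norm inside the nonlinearity, then insert the intermediate quantity $\sum_g \bH_{fg}^\ell(\hat\bC)\bx_{\ell-1}[g]$ and invoke the triangle inequality, so that
\begin{align}
\|\hat\bx_\ell[f] - \bx_\ell[f]\| \leq \sum_{g=1}^F \|\bH_{fg}^\ell(\hat\bC)\bigl(\hat\bx_{\ell-1}[g]-\bx_{\ell-1}[g]\bigr)\| + \sum_{g=1}^F \|\bigl(\bH_{fg}^\ell(\hat\bC)-\bH_{fg}^\ell(\bC)\bigr)\bx_{\ell-1}[g]\|.\nonumber
\end{align}
The second (filter-perturbation) term is bounded via~\eqref{alpha_n} combined with a boundedness assumption on the propagated signals $\|\bx_{\ell-1}[g]\|$, which follows from the standard normalization assumption $\|\bH_{fg}^\ell(\bC)\|\leq 1$ and a bounded input. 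The first (signal-perturbation) term is handled by the inductive hypothesis, again using the unit-norm filter bound. Summing over $f$ and over the $F$ channels in the inner sum contributes one factor of $F$ per layer, while the accumulation of $\alpha_n$ terms along the layers contributes the additive factor $L$, producing $L F^L \alpha_n$.

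The main obstacle is the combinatorial book-keeping of the MIMO propagation so that the exponent $F^L$ emerges with the right constant and the $\alpha_n$ terms add (rather than multiply) across layers; this requires keeping the filter-perturbation and signal-perturbation contributions cleanly separated at every step. A secondary issue is making explicit the implicit boundedness assumptions on $\|\bH_{fg}^\ell(\bC)\|$ and on the input $\|\bx\|$, without which the recursion would pick up extra multiplicative constants; these are inherited from the filter normalization convention already used in Theorem~\ref{vfstability}. Once those are in place, the induction step is routine and the conclusion follows by instantiating the recursion at $\ell = L$.
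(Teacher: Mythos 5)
Your proposal is correct, and for the part of the theorem that is genuinely new relative to Theorem~\ref{vfstability} --- the network-level bound~\eqref{vnn_stab} --- it follows the same route the paper relies on: a layer-by-layer induction that separates, via the triangle inequality, the filter-perturbation term (controlled by~\eqref{alpha_n}) from the signal-perturbation term (controlled by the inductive hypothesis), with the $1$-Lipschitz nonlinearity absorbed at each step and the factors $L$ and $F^L$ accumulating additively and multiplicatively as you describe. The one difference worth noting is in where the effort is spent: the paper's Appendix~\ref{pf_thm1} devotes almost all of its length to deriving the explicit form of $\alpha_n$ --- the eigenvector/eigenvalue perturbation analysis of $\bE = \hat\bC - \bC$, the three-term decomposition of $\bE\bv_i$, and the resulting rate ${\cal O}(n^{-(1/2-\varepsilon)})$ --- and then dispatches the induction you carry out with a single citation to Theorem~3 of~\cite{sihag2022covariance}. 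Your write-up does the reverse, taking~\eqref{alpha_n} as given (which the theorem statement itself licenses) and spelling out the propagation argument; this is a reasonable division of labor, and your flagged caveats --- the normalization $\|\bH_{fg}^{\ell}(\bC)\|\leq 1$, the bounded input $\|\bx\|\leq 1$ (which the paper does assume in its filter-stability analysis), and, implicitly, $\sigma(0)=0$ so that signal energies stay bounded across layers --- are exactly the hypotheses needed to keep the recursion free of extra constants.
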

\begin{proof}
    See Appendix~\ref{pf_thm1} in the Supplementary Material. 
\end{proof}
\noindent 
The parameter $\alpha_n$ in~\eqref{alpha_n} represents the finite sample effect on the perturbations in $\hat\bC$ with respect to $\bC$ and is borrowed from Theorem~\ref{vfstability}. By leveraging the perturbation theory of covariance matrices to analyze the stability of coVariance filters, we also show in the proof of Theorem~\ref{thm_stability} that $\alpha_n$ scales as $1/n^{\frac{1}{2} - \varepsilon}$ for some $\varepsilon\in (0,1/2)$ with respect to the number of samples $n$. We note that the bound in~\eqref{vnn_stab} becomes looser with increase in $F$ or $L$ which is consistent with the parallel result for GNNs~\cite{gama2020stability}. However, without the analysis of the lower bounds on $ \|\Phi(\bx;\hat\bC, \cH)-\Phi(\bx;\bC,\cH)\| $, we cannot claim that the robustness of VNNs indeed worsens with increase in $F$ or $L$. Moreover, we remark that VNNs sacrifice discriminability between eigenvectors associated with close eigenvalues to achieve stability~\cite{sihag2022covariance}. As a corollary, we also state that Theorem~\ref{thm_stability} can readily be extended to characterize the difference between VNN outputs corresponding to sample covariance matrices estimated from $n_1$ and $n_2$ samples via~\eqref{vnn_stab} and application of triangle inequality.


\subsection{Transferability of VNNs}\label{trnsfrr}

The notion of transferability of VNNs is made feasible by the properties of coVariance filters (Definition~\ref{def1}) that can be instantiated on covariance matrices of any dimension and therefore, the VNNs can readily be transferred to process a dataset of a different dimensionality. From the perspective of implementation, transferability of VNNs to a dataset of different number of features amounts to replacing the covariance matrix $\bC$ in a VNN model $\Phi(\cdot;\bC,\cH)$ with a covariance matrix of another size, while keeping the parameters $\cH$ fixed. Since we consider covariance matrices of different dimensionalities, we denote a covariance matrix $\bC$ of size $m\times m$ by $\bC_m$. Informally, we can state our objective for assessing transferability as follows.

\noindent
{\bf Informal Problem Statement for VNN Transferability.} Given a data point $\bx_{m_1}$ from a dataset with $m_1$ features and associated covariance matrix $\bC_{m_1}$, and another data point $\bx_{m_2}$ from a dataset with $m_2$ features and associated covariance matrix $\bC_{m_2}$, we aim to characterize the conditions under which the VNN outputs $\Phi(\bx_{m_1};\bC_{m_1},\cH)$ and $\Phi(\bx_{m_2};\bC_{m_2},\cH)$ converge. When $\Phi(\bx_{m_1};\bC_{m_1},\cH)$ and $\Phi(\bx_{m_2};\bC_{m_2},\cH)$ converge, we can conclude that the parameters $\cH$ are transferable between two datasets consisting of $m_1$ and $m_2$ features.

Note that the VNN  outputs  $\Phi(\cdot;\bC_{m_1},\cH)$ and $\Phi(\cdot;\bC_{m_2},\cH)$ have distinct dimensionalities if $m_1\neq m_2$ and therefore, a direct comparison between them is not natural. Fundamentally, it is imperative to provide a mathematical framework to compare vectors and covariance matrices of different sizes in order to be able to analyze the transferability of VNNs. To facilitate such a comparison between vectors of different sizes, we consider a simple mapping that represents the vector on a continuous interval $[0,1]$. Specifically, given an $m$-dimensional vector $\bx = [x_1,\dots,x_m]$, we can define a continuous representation of $\bx$ as a function $y_{\bx}: [0,1] \mapsto \mR$, such that, $y_{\bx}(u) = x_i$ for $u\in \cU_i$, where $\cU_i$ is a pre-defined interval associated with the $i$-th element of $\bx$. Similarly, we can map a covariance matrix $\bC_m$ to a compact space defined by $[0,1]^2$ using the mapping $\bW_{\bC_m}: [0,1]^2 \mapsto \mR$ where we have $\bW_{\bC_m}(u,v) = [\bC_m]_{ij}$ for $u\in \cU_i$ and $v\in \cU_j$. A pictorial illustration of $y_{\bx}$ and $\bW_{\bC}$ for covariance matrix $\bC$ is included in Fig~\ref{cont_represent}. Therein, the intervals $\cU_i$ are parameterized by variables $\rho_i$, which will be discussed subsequently in~\eqref{interval1}. Note that we can recover $\bx$ from $y_{\bx}$ and vice-versa (similarly for $\bC_m$ and $\bW_{\bC_m}$). Hence, for data points $\bx_{m_1}$ and $\bx_{m_2}$ consisting of $m_1$ and $m_2$ elements, respectively, the closeness of continuous representations $y_{\bx_{m_1}}$ and $y_{\bx_{m_2}}$  can be used as a metric to assess the similarity between data points in multi-scale datasets. This observation also extends to the comparison between covariance matrices $\bC_{m_1}$ and $\bC_{m_2}$. 

\begin{figure}[!htbp]
  \centering
  \includegraphics[scale=0.35]{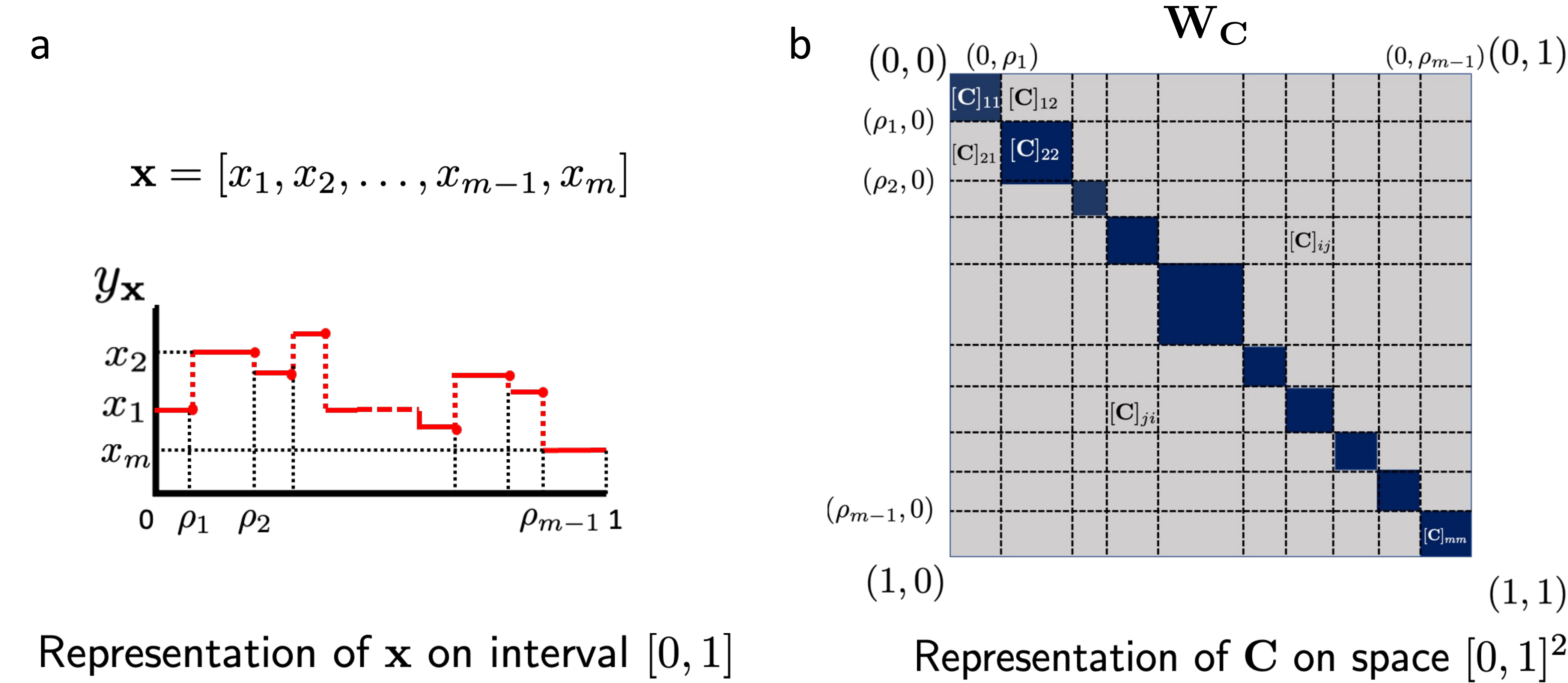}
   \caption{{\bf Representations of $m$-dimensional vector $\bx$ and associated covariance matrix $\bC$ in the continuous domain.} {\bf a.} Representation for $\bx$ is obtained by discretizing the space $[0,1]$. {\bf b.} Representation $\bW_{\bC}$ for $\bC$ is obtained by discretizing the space $[0,1]^2$ according to~\eqref{interval1}. Thus, $\bW_{\bC}$ retains the symmetry of $\bC$. The area spanned by the diagonal elements of $\bC$ is marked in blue. The size of the square area allotted to a diagonal element is proportional to its value. Other parts of the grid accommodate the off-diagonal elements of $\bC$.}
   \label{cont_represent}
\end{figure}

For a VNN architecture with $F$ number of outputs in the final layer, the dimensionality of  VNN outputs $\Phi(\bx_{m_1};\bC_{m_1},\cH)$ is $m_1\times F$ and that for $\Phi(\bx_{m_2};\bC_{m_2},\cH)$ is $m_2\times F$. Thus, we can compare $\Phi(\bx_{m_1};\bC_{m_1},\cH)$ and $\Phi(\bx_{m_2};\bC_{m_2},\cH)$ via the comparison between the continuous representations of every column in outputs $\Phi(\bx_{m_1};\bC_{m_1},\cH)$ and $\Phi(\bx_{m_2};\bC_{m_2},\cH)$, where the continuous representations are defined in the same fashion as $y_{\bx}$ above. For VNN $\Phi(\bx_{m_1};\bC_{m_1},\cH)$, we use the notation $y_{m_1}[f]$ to denote the continuous representation of $f$-th output in $\Phi(\bx_{m_1};\bC_{m_1},\cH)$, i.e.,
 $y_{[\Phi(\bx_{m_1};\bC_{m_1},\cH)]_f}$. Similar to the relationship between $y_{\bx}$ and $\bx$, the $f$-th VNN output $[\Phi(\bx_{m_1};\bC_{m_1},\cH)]_f$ and its continuous representation $y_{m_1}[f]$ are operationally interchangeable (see Appendix~\ref{gip} for details). Using the continuous representations above, we can now describe the assessment of transferability of VNNs more concretely. 

\noindent
{\bf Formal Problem Statement for VNN Transferability:} Consider two VNNs $\Phi(\bx_{m_1};\bC_{m_1},\cH)$ and $\Phi(\bx_{m_2};\bC_{m_2},\cH)$ instantiated on data with $m_1$ and $m_2$ features, respectively. If we have the following conditions: (a) the continuous approximations of inputs $\bx_{m_1}$ and $\bx_{m_2}$ are close, i.e., $\|y_{\bx_{m_1}} - y_{\bx_{m_2}}\|_2$ is bounded; and (b) the continuous approximations of covariance matrices $\bC_{m_1}$ and $\bC_{m_2}$ are close, i.e., $\|\bW_{\bC_{m_1}} - \bW_{\bC_{m_2}}\|_2$ is bounded; we aim to show that the continuous representations of VNN outputs $\Phi(\bx_{m_1};\bC_{m_1},\cH)$ and $\Phi(\bx_{m_2};\bC_{m_2},\cH)$ are close, i.e., $\|y_{m_1}[f] - y_{m_2}[f]\|_2$ is bounded for all $f\in \{1,\dots,F\}$.

 Next, we informally state the main result of this section that establishes the transferability between VNNs processing datasets consisting of $m_1$ and $m_2$ features. 

 \begin{theorem}[Transferability of VNNs]\label{transferthm}
Consider two VNNs $\Phi(\bx_{m_1};\bC_{m_1},\cH)$ and $\Phi(\bx_{m_2};\bC_{m_2},\cH)$ consisting of $L$ layers and $F$ outputs per layer. If the continuous representations of inputs and covariance matrices are close, i.e., $\|y_{\bx_{m_1}} - y_{\bx_{m_2}}\|_2$ and $\|\bW_{\bC_{m_1}} - \bW_{\bC_{m_2}}\|_2$ are bounded, and assumptions A1 and A2 are satisfied (described in~\eqref{interval1} and~\eqref{lips}), we have 
\begin{align}\label{transfereq}
\|y_{m_1}[f] - y_{m_2}[f] \|_2 \leq LF^{L} \beta\Big(\frac{1}{m_1^{3\zeta/2-1}} + \frac{1}{m_2^{3\zeta/2-1}}\Big) \;, \forall f \in \{1,\dots,F\}\;,
\end{align}
for some $\beta>0$ and $\zeta\in (2/3,1]$.
\end{theorem}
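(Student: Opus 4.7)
The plan is to mirror the proof of Theorem~\ref{thm_stability} (VNN Stability), but to replace the sample-vs-ensemble perturbation $\alpha_n$ by a quantity measuring how well a finite-dimensional coVariance filter approximates a continuous ``limit" filter acting on functions over $[0,1]$. This is the natural analogue, for covariance matrices of different sizes, of the graph-vs-graphon comparison that underlies transferability results for GNNs on graphon neural networks. The continuous representations $y_{\bx}$ and $\bW_{\bC}$ already provide the common function space in which outputs from $m_1$- and $m_2$-dimensional datasets can be directly compared.

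First I would introduce a continuous-domain coVariance filter that takes a function $y \in L^2([0,1])$ and a kernel $\bW \in L^2([0,1]^2)$ and produces $(\bH(\bW)y)(u) = \sum_{k=0}^{K} h_k (\bW^k y)(u)$, where $\bW^k$ denotes the $k$-fold iterated integral operator with kernel $\bW$. With this object in hand, $\Phi(\bx_m;\bC_m,\cH)$ and its continuous representation become operationally interchangeable through the piecewise-constant discretization tied to the intervals $\cU_i$ parameterized by $\rho_i$ in~\eqref{interval1}, so both VNNs admit a common continuous-domain description against which to compare.

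The core technical step is a single-filter, single-layer bound of the form
\begin{align}
\| y_{\bH(\bC_{m_1})\bx_{m_1}} - y_{\bH(\bC_{m_2})\bx_{m_2}} \|_2 \;\leq\; \beta_0 \Big(\tfrac{1}{m_1^{3\zeta/2-1}} + \tfrac{1}{m_2^{3\zeta/2-1}}\Big),
\end{align}
which I would prove by inserting the continuous limit filter $\bH(\bW)y_{\bx}$ as an intermediate object and applying the triangle inequality. Each resulting piece $\| y_{\bH(\bC_m)\bx_m} - \bH(\bW)y_{\bx}\|_2$ is controlled by combining (i) the polynomial structure of the filter, which reduces the bound to one on $\|\bW_{\bC_m}-\bW\|$ and $\|y_{\bx_m}-y_{\bx}\|$ via the triangle inequality monomial by monomial, and (ii) the discretization of~\eqref{interval1} together with the Lipschitz regularity of assumption A2 in~\eqref{lips}. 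The exponent $3\zeta/2-1$ arises because the kernel error enters quadratically (through $\bW\in[0,1]^2$) and the input error linearly (through $y_{\bx}\in[0,1]$), while interval widths scale like $m^{-\zeta}$; the condition $\zeta>2/3$ is exactly what makes the exponent positive.

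Once the single-filter bound is in hand, the multi-layer extension is standard. I would induct on $L$: at each layer the $F\times F$ filter bank in~\eqref{vfbnk} contributes a multiplicative factor of $F$, and the assumed $1$-Lipschitz nonlinearity $|\sigma(a)-\sigma(b)|\leq|a-b|$ preserves the bound, producing the $F^{L}$ prefactor. Summing the errors across the $L$ layers by telescoping yields the $LF^{L}$ multiplier in~\eqref{transfereq}, exactly as in Theorem~\ref{thm_stability}. The main obstacle I expect is the single-filter bound: controlling how $\bC_m$ differs from its continuous kernel when acting on inputs whose continuous representations are close, and obtaining the sharp rate $m^{1-3\zeta/2}$, requires balancing the nonuniform discretization widths $\rho_i$ against the Lipschitz modulus tightly. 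Everything downstream of that step is essentially the same chain of triangle inequalities and nonlinearity contractions used in the stability proof.
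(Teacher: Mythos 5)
Your proposal is correct in its overall architecture and reaches the stated rate, but it takes a genuinely different route on the key technical step. The paper also passes through a common continuous limit: it proves two discretization lemmas showing $\|\bW - \bW_{\bC_m}\|_2$ and $\|y - y_{\bx_m}\|_2$ are both $O(m^{1-3\zeta/2})$ under $(\Omega,\zeta)$-dominance and Lipschitz regularity, then bounds the single-filter transference $\|\Psi(y;\bW,\cH)-\Psi(y_{\bx_m};\bW_{\bC_m},\cH)\|_2$, and finally runs the same layer-wise recursion you describe to get the $LF^L$ factor. Where you diverge is inside the single-filter bound: you telescope the polynomial monomial by monomial in operator norm, reducing everything to $\|\bW_{\bC_m}-\bW\|_2$ and $\|y_{\bx_m}-y\|_2$ directly, whereas the paper works in the spectral domain of the graphon operator, splitting the error into a frequency-response term (controlled by eigenvalue perturbation, Proposition 4 of Ruiz et al.) and an eigenfunction term (controlled by a Davis--Kahan-type bound). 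The paper's route therefore needs an extra assumption that is invisible in the main-text statement of Theorem~\ref{transferthm}: a band-limited, non-expanding, Lipschitz frequency response with only $m_{\sf c}$ eigenvalues above the cutoff and an eigengap $\Delta_c$ between the two spectra, and these quantities enter the constant $\beta$. Your elementary route avoids A4 and the eigengap entirely and replaces that constant by something like $\sum_k k|h_k|$; it is legitimate here precisely because the piecewise-constant approximant agrees with the Lipschitz graphon at the grid points, so convergence holds in $L^2$ (hence in operator norm), not merely in cut norm --- the weaker regime for which the spectral machinery was originally designed. Two things you should still make explicit: (i) the argument needs a Lipschitz assumption on the common limit signal $y$ (the paper's appendix assumption on $\alpha_2$-Lipschitz graphon signals) and the existence of the common limit pair $(\bW, y)$, which the informal hypothesis ``$\|y_{\bx_{m_1}}-y_{\bx_{m_2}}\|_2$ bounded'' papers over; and (ii) your dimensional heuristic for the exponent (kernel error quadratic, signal error linear) is not how $3\zeta/2-1$ actually arises --- in the paper both discretization errors separately scale as $m\rho_1^{3/2}$ with $\rho_1\leq \Omega m^{-\zeta}$, so you would need to reproduce that computation rather than rely on the heuristic.
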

\begin{proof}
    See Appendix~\ref{pf_thm2}. 
\end{proof}
\noindent 
Theorem~\ref{transferthm} implies that continuous representations of all $F$ outputs of the respective final layers of VNNs $\Phi(\bx_{m_1};\bC_{m_1},\cH)$ and $\Phi(\bx_{m_2};\bC_{m_2},\cH)$ converge with increase in $m_1$ and $m_2$. Since the continuous representation $y_{m_1}[f]$ and VNN output $[\Phi(\bx_{m_1};\bC_{m_1},\cH)]_f$ are operationally interchangeable, we expect the measures of central tendency (e.g., mean, median) of outputs~$[\Phi(\bx_{m_1};\bC_{m_1},\cH)]_f$ and~$[\Phi(\bx_{m_2};\bC_{m_2},\cH)]_f$ to converge as well. By extension, we also expect the measures of central tendency  for $\Phi(\bx_{m_1};\bC_{m_1},\cH)$ and $\Phi(\bx_{m_2};\bC_{m_2},\cH)$ to converge if Theorem~\ref{transferthm} holds.  In this context, if the readout function for VNN is unweighted mean, we expect the statistical outcomes of VNNs  $\Phi(\bx_{m_1};\bC_{m_1},\cH)$ and $\Phi(\bx_{m_2};\bC_{m_2},\cH)$  to be close and this convergence to be stronger for large $m_1$ and $m_2$. The impact of Theorem~\ref{transferthm} is broad, as we have shown that the parameters $\cH$ can be ``scale-free" while preserving the performance over an inference task. Specifically, a VNN can be instantiated on a dataset of different dimensionality than the training dataset and the VNN recovers close statistical outcomes for the same parameters $\cH$ for both datasets, provided the data samples and covariance matrices of the training dataset and the new dataset are close in terms of their continuous representations. Thus, VNNs also offer a significant advantage over PCA-based analysis approaches as the principal components are restricted within the feature space of a dataset and do not provide any mathematical insight into the structure of another dataset of different dimensionality even when the datasets may be related. An overview of the transferability of VNNs is illustrated in Fig.~\ref{vnn_transfer_overview}. While the rigorous details behind the proof of Theorem~\ref{transferthm} are relegated to Appendix~\ref{pf_thm2}, we briefly discuss the major mathematical concepts and assumptions that have enabled us to establish~\eqref{transfereq} in Theorem~\ref{transferthm} in the highlighted text on page 19. 

Thus far, we have discussed the transferability of VNNs in terms of achieving statistical outcomes (e.g., outcome of a regression model) that are close using datasets of different dimensionalities, given that the datasets are aligned in some way. As discussed previously, a multi-scale neuroimaging dataset provides an ideal setting for validating the theoretical guarantees of Theorem~\ref{transferthm} in this context. However, note that the VNNs also provide expressivity at the feature-level at the final layer. For instance, if VNN is deployed for a regression task and the readout layer is a simple mean function, the VNN final layer outputs could be leveraged to characterize the contributions of each feature in the dataset to the final regression outcome. In applications based on neuroimaging, this observation can be of great interest, as each feature in a neuroimaging dataset is typically associated with a distinct brain region. Thus, VNNs naturally provide a feasible way to \emph{interpret} the final statistical outcomes.  Moreover, if such an interpretability is achieved using VNNs, we can further intertwine our analysis with the transferability of VNNs to test its spatial robustness across datasets of different dimensionalities. The observations made here motivated us to pursue investigating the utility of VNNs in characterizing the contributing brain regions to elevated $\Delta$-Age due to age-related neurodegeneration. 

\clearpage
\begin{figure}[t]
  \centering
  \includegraphics[scale=0.4]{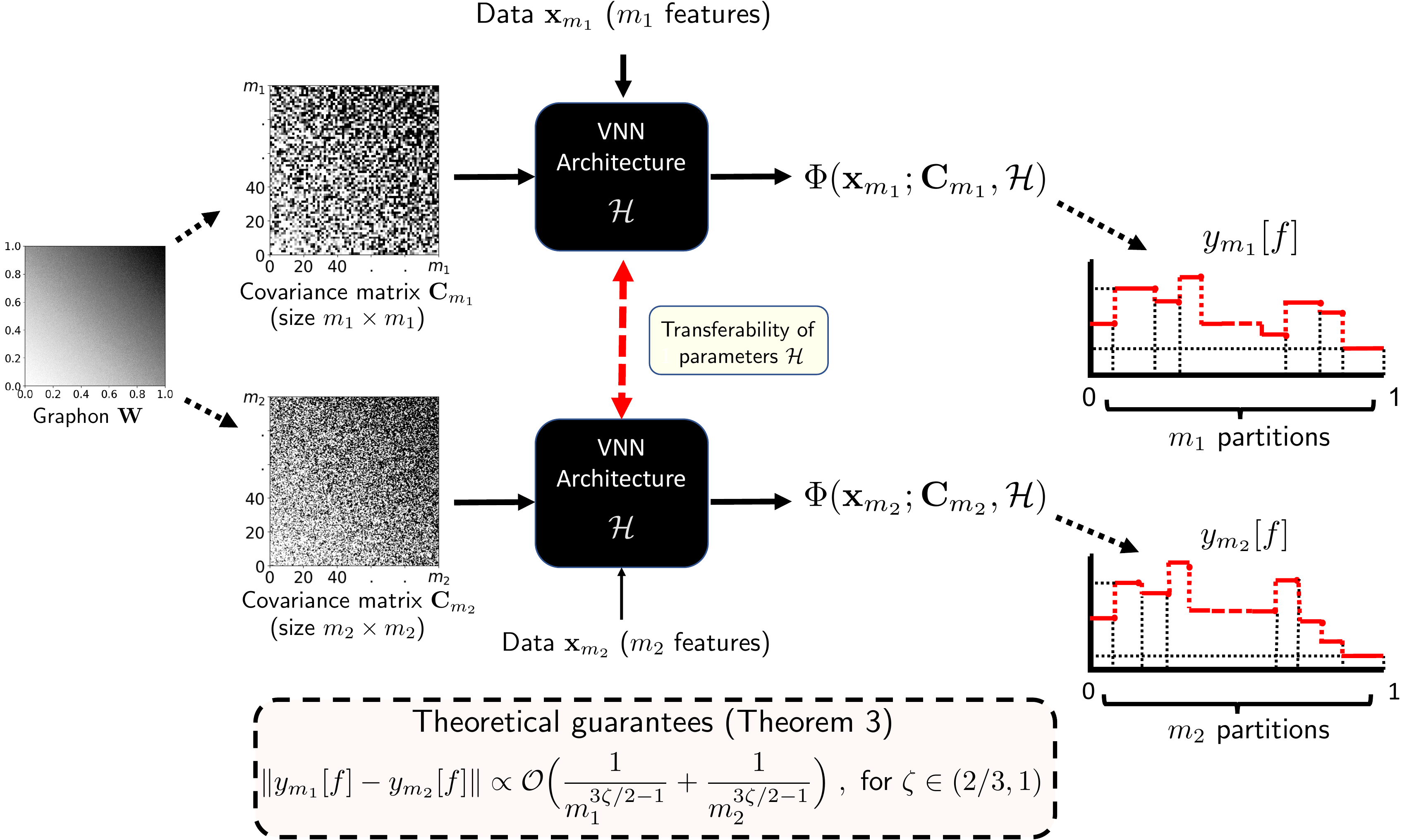}
   \caption{{\bf Overview of transferability of VNNs.} $y_{m_1}[f]$ is the continuous representation of $f$-th output of VNN $\Phi(\bx_{m_1};\bC_{m_1},\cH)$ that is instantiated on data and covariance matrix with $m_1$ features. Similarly,  $y_{m_2}[f]$ represents the $f$-th output of VNN $\Phi(\bx_{m_2};\bC_{m_2},\cH)$ that is instantiated on dataset with $m_2$ features. If the continuous counterparts of covariance matrices $\bC_{m_1}$ and $\bC_{m_2}$, i.e., $\bW_{\bC_{m_1}}$ and $\bW_{\bC_{m_2}}$, belong to a sequence that converges to a graphon $\bW$ (Definition~\ref{grphon}) and the continuous representations of inputs $\bx_{m_1}$ and $\bx_{m_1}$ are close, the convergence between $y_{m_1}[f]$ and $y_{m_2}[f]$ is characterized in terms of $m_1$ and $m_2$ in Theorem~\ref{transferthm}. }
   \label{vnn_transfer_overview}
\end{figure}

\begin{tcolorbox}[colback=yellow!5!white,colframe=red!75!black,title={Mathematical Foundations of Transferability},]
\footnotesize
\begin{multicols}{2}
The continuous representations of graph signals and graphs have previously been leveraged to study transferability of GNNs under the domain of graphon information processing~\cite{ruiz2021graphon}. Specifically, GNNs can be transferable between graphs belonging to a converging sequence if the graphs in this sequence converge to a limit object called \emph{graphon} as the number of nodes approaches infinity~\cite{borgs2008convergent}. In a similar fashion, we leverage the theory of graphons~\cite{borgs2008convergent} and graphon signal processing~\cite{ruiz2021graphon} to establish Theorem~\ref{transferthm}. Graphons are the limits of \emph{dense} graphs (i.e., graphs with number of edges of the order $\Theta(m^2)$)~\cite{lovasz2012large} and hence, appropriate to study limits of covariance matrices that are typically dense. The definition of a graphon is provided in Definition~\ref{grphon}.
\begin{definition}[Graphon]\label{grphon}
A graphon is a bounded, symmetric, measurable function ${\bW: [0,1]^2 \mapsto [-1,1]}$.   
\end{definition} 
\noindent
Under the setting where a covariance matrix is viewed as a weighted graph and normalized such that its largest eigenvalue is $1$, we claim that a sequence of covariance matrices $\{\bC_m\}$ being convergent implies that the sequence of their counterpart continuous representations, i.e., $\{\bW_{\bC_m}\}$, converges to some graphon $\bW$ if $\bW_{\bC_m}$ is appropriately constructed from $\bC_m$. This claim is based on generalizing~\cite[Corollary 3.9]{borgs2008convergent} to our setting, and the formal statement in this regard is included in Remark~\ref{grphnlimit} in Appendix~\ref{gip}. Theorem~\ref{transferthm} holds for any pair of covariance matrices in the converging sequence $\{\bC_m\}$ (under certain assumptions that will be discussed shortly) and thus, parameters $\cH$ can be transferred between any two VNNs instantiated on distinct covariance matrices in this sequence. The construction of $\bW_{\bC_m}$ relies on appropriately defining the intervals $\cU_i$ and is described in the following steps. 
\begin{itemize}[noitemsep,topsep=0pt,leftmargin=4mm]
\item[a.] Partition the interval $[0,1]$ into $m$ disjoint intervals $[\cU_1,\dots,\cU_m]$, such that, 
    \begin{align}\label{interval1}
        \cU_i = \begin{cases}
        [0,\rho_1] \quad \text{if} \quad i=1\\
        (\rho_{i-1}, \rho_i] \quad\text{if}\quad i\in\{2,\dots,m\} 
        \end{cases}\;,
    \end{align}
    where 
    \begin{align}
        \rho_i \dff \frac{1}{{\sf tr} (\bC_m)}\sum\limits_{j=1}^{i} [\bC_m]_{jj}\;,
    \end{align}
    and ${\sf tr}(\bC_m)$ is the trace of $\bC_m$. Clearly, $\rho_m = 1$.  
\item[b.] The relationship between feature $i$ and feature $j$ is given by $\bW_{{\bf C}_{m}}(u_i,u_j) = [\bC_m]_{ij}$ for $u_i \in \cU_i, u_j\in \cU_j$. 
\end{itemize}
If the continuous representation $\bW_{\bC_m}$ of $\bC_m$ is constructed according to the above steps, we refer to $\bW_{\bC_m}$ as the graphon approximation of $\bC_m$. Thus, the graphon limit $\bW$ forms the schema for which the covariance matrix $\bC_m$ represents the covariance realization at resolution $m$. Next, we note that the result~\eqref{transfereq} is contingent upon two main assumptions related to the covariance matrix $\bC_m$ and the graphon limit $\bW$. These assumptions are mentioned below. 
\begin{itemize}[noitemsep,topsep=0pt,leftmargin=7mm]
    \item[{\bf A1.}] ($(\Omega,\zeta)$-dominant property of covariance matrices) For the sequence $\{\bC_m\}$, there exist positive constants $\Omega$ and $\zeta$, such that, we have
    \begin{align}\label{dom_prop}
        \frac{1}{{\sf tr}(\bC_m)}\max_{j\in \{1,\dots,m\}}[\bC_m]_{jj} \leq \frac{\Omega}{m^{\zeta}}\;,
    \end{align}
    for all finite $m$. Our analysis in Appendix~\ref{pf_thm2} shows that Theorem~\ref{transferthm} holds for any $\Omega > 0$ and $\zeta \in (2/3,1]$. The property in~\ref{dom_prop} implies that 
\begin{align}\label{cnvrg}
\frac{1}{{\sf tr}(\bC_m)}\max_{j\in \{1,\dots,m\}}[\bC_m]_{jj} \rightarrow 0
\end{align}
as $m \rightarrow \infty$. We refer to the covariance matrix $\bC_m$ satisfying the property in~\eqref{dom_prop} as being $(\Omega,\zeta)$-dominant. 
We also note that~\eqref{cnvrg} suggests that the variance profile of individual features in the dataset, characterized by their corresponding diagonal elements in the covariance matrix, must not be concentrated within a small subset of features. 

\item[{\bf A2.}] (Lipschitz continuity of Graphon.) If $\bW$ is the limit of the sequence $\{\bW_{\bC_m}\}$ as $m\rightarrow \infty$, then for Theorem~\ref{transferthm} to hold,  $\bW$ must satisfy
    \begin{align}\label{lips}
        |\bW(u_1,v_1) - \bW(u_2,v_2)| \leq \alpha(|u_1-u_2| + |v_1 - v_2|)\;,
    \end{align}
    for any $u_1,v_1,u_2,v_2 \in [0,1]$. Any graphon satisfying~\eqref{lips} is termed as an $\alpha$-Lipschitz  graphon.
    The Lipschitz continuity of graphon $\bW$ determines the smoothness of the information present between any two coordinates $(u_1,v_1)$ and $(u_2,v_2)$. Therefore, it is intuitively expected that for a given $m$, a graphon $\bW$ with smaller Lipschitz constant $\alpha$ will be better approximated by $\bW_{\bC_m}$ as compared to that with higher Lipschitz constant. 
\end{itemize}
\end{multicols}
\end{tcolorbox}


\begin{figure}[!htbp]
  \centering
  \includegraphics[scale=0.4]{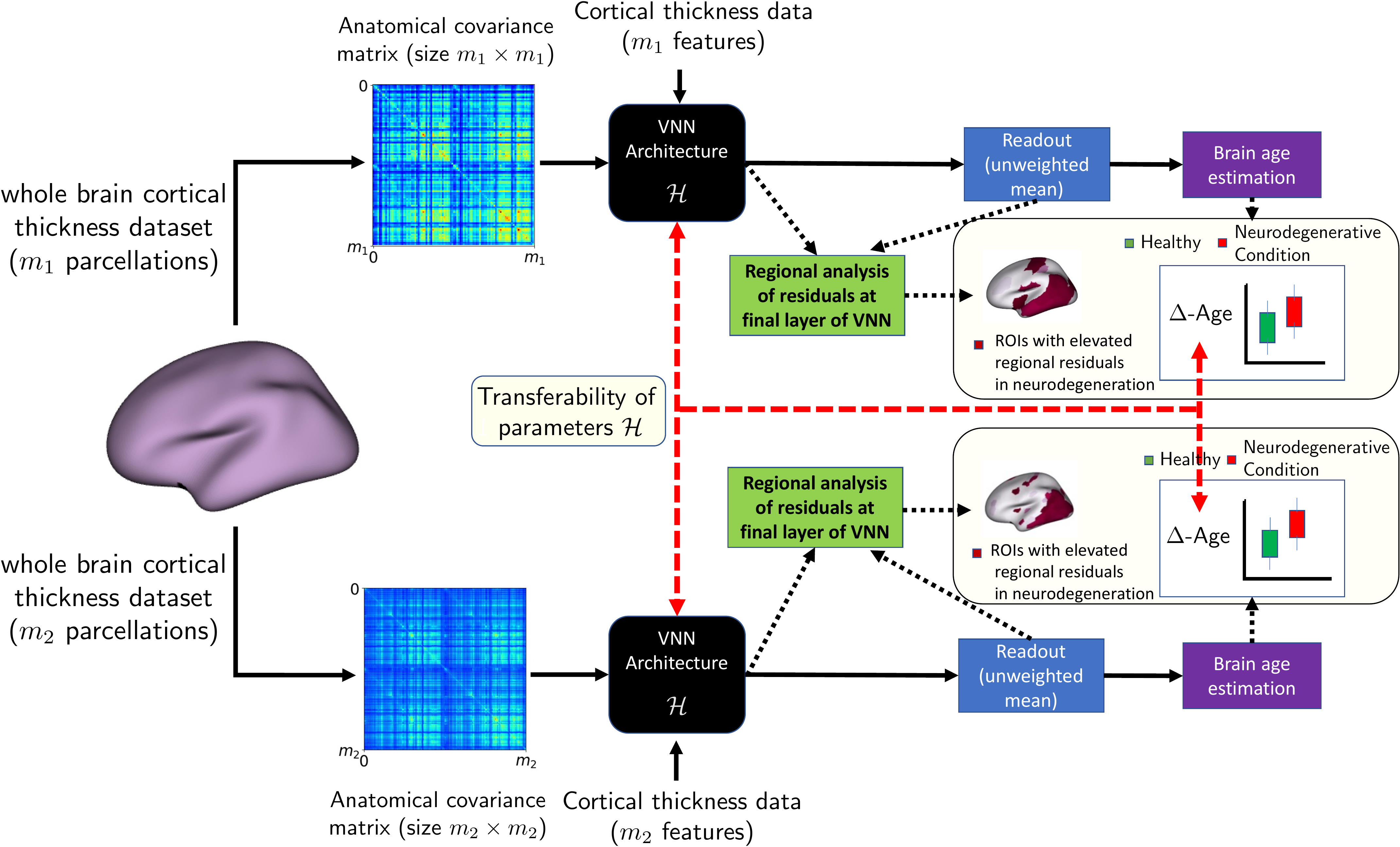}
   \caption{{\bf Overview of brain age prediction framework using VNNs.} The VNN is trained as a regression model by optimizing filter taps $\cH$ to predict chronological age using cortical thickness features for HC group. The VNN prediction is formed via a readout function that averages the outputs at the final layer of VNN. For brain age prediction, we replace the covariance matrix in the trained VNN model with the anatomical covariance matrix from the combined population of HC group and individuals with AD. For each individual, VNNs first form the scalar output by combining outputs at the final layer of VNN via the readout function, and then bias-correction is applied to finally evaluate the brain age. Statistical analysis of the final layer outputs of the VNN is performed  across the combined population to identify locally elevated residual effects in neurodegeneration. These locally elevated residual effects eventually contribute to the elevated $\Delta$-Age effect and hence, provide a regional profile to elevated $\Delta$-Age due to age-related neurodegeneration. If the transferability of performance on the chronological age prediction task holds for different datasets (according to Theorem~\ref{transferthm}), we expect to see similar $\Delta$-Age distributions across them without re-training. However, the regional profiles do not hinge on the performance of VNN in predicting chronological age of HC group and are derived before age-bias correction is applied to calculate $\Delta$-Age for an individual. Hence, if the regional profiles are characteristic of accelerated aging, we expect to observe similar regional profiles after transferring VNNs to different datasets even when Theorem~\ref{transferthm} may not hold (for instance when transferring the VNN model from FTDC datasets to OASIS-3 dataset).}
   \label{brain_age_overview}
\end{figure}
\section{Application: Brain Age Prediction}\label{age_intro}
$\Delta$-Age is a known biomarker of cognitive decline and neurodegeneration~\cite{franke2019ten,cole2017predicting}. However, in the absence of a ground truth, the notion of $\Delta$-Age is abstract and has a limited clinical utility without identification of the main contributors to the elevated brain age due to neurodegeneration. In this paper, we focus on Alzheimer's disease (AD) as an example of pathological age-related neurodegeneration. Age is a major risk factor for AD and hence, AD is characterized by biological traits that signify accelerated aging~\cite{jove2014metabolomics}. We leverage the architecture of VNNs to provide a regional perspective to brain age prediction and our results demonstrate that the elevated $\Delta$-Age in AD is accompanied by abnormalities in various regions of interest characteristic of AD. The description of datasets is included in subsection~\ref{data} in the Methods section. Next, we discuss how the analysis of outputs at the final layer of VNNs may capture the impact of age-related neurodegeneration on various regions of the brain. 

\subsection{Identification of Regions Associated with Neurodegeneration using VNN Architecture}
To start with, we used VNN as a regression model to fit multivariate cortical thickness data against chronological age for healthy controls (HC). Details on training and selected architecture of VNN models used for the brain age analysis are provided in the subsection~\ref{vnn_learn} in the Methods section. Thus, in our experiments, a VNN processed the cortical thickness data through a multi-layer architecture consisting of the anatomical covariance matrix derived from cortical thickness of the HC group. The final prediction by the VNN model was determined by unweighted aggregation of the final layer outputs which can be conceptualized as an unweighted mean of age predictions at individual brain regions. Therefore, VNN architecture allowed us to compute ``regional residuals" (scalar output at a given region derived from VNN final layer output - aggregated VNN output or age estimate formed by VNN) at each brain region to assess their contribution to the final output of VNN. We refer the reader to subsection~\ref{regbage} in the Methods section for concrete details regarding calculations of regional residuals.

The VNN model trained on the data from HC group according to the procedure in subsection~\ref{vnn_learn} captured the information about healthy aging from cortical thickness and associated anatomical covariance matrix. Note that this information may not be enough to predict chronological age accurately, i.e., we may not achieve a perfect fit on chronological age of HC group. However, our theoretical results regarding stability of VNNs in Theorem~\ref{thm_stability} and transferability of VNNs in Theorem~\ref{transferthm} dictate that the regression performance achieved by VNNs in this context is expected to be stable to perturbations in the anatomical covariance matrix due to change in sample size and transferable across datasets of different dimensionalities if their respective covariance matrices satisfy certain theoretical assumptions.

Next, we tested these trained VNN models on the combined groups of HC and AD by replacing the anatomical covariance matrix from the HC group with the anatomical covariance matrix from the combined group of HC and individuals with AD. Since capturing the accelerated aging is of interest, we hypothesized that the brain regions characteristic of AD (for instance, regions with higher cortical atrophy) with respect to HC group were likely to have elevated regional residuals for individuals with AD. Also, since outputs of VNNs are obtained via manipulation of the data according to principal components or eigenvectors of the covariance matrix (Section~\ref{mtvn}), we hypothesized that the regional residuals may be correlated with the principal components of the anatomical covariance matrix of the combined group. Furthermore, the stability of VNNs established in Theorem~\ref{thm_stability} implied that any group differences observed in regional residuals were expected to be stable to perturbations in the covariance matrix and thus, stable to the composition of the combined population of HC and individuals with AD used for estimating the anatomical covariance matrix.

The scale-free architecture of VNNs allowed us to gauge the spatial and statistical robustness of any observed elevated regional residuals in age-related neurodegeneration in scenarios when the transferability of performance was guaranteed (Theorem~\ref{transferthm} and Fig.~\ref{vnn_transfer_overview}) and when it was not guaranteed.  
Our discussion in Appendix~\ref{fig_converge} demonstrates that the cortical thickness datasets curated according to different parcellations of Schaefer's atlas (referred to as FTDC datasets and described in Section~\ref{data}) laid within the scope of theoretical guarantees on transferability as described in Fig.~\ref{vnn_transfer_overview}. By extension, the performance of VNN models trained for chronological age prediction of HC group was transferable across datasets organized according to different resolutions of Schaefer's atlas. Hence, assessments of the regions pertaining to elevated regional residuals across different scales of Schaefer's atlas could provide a proof of concept for the transferability property holding in brain age prediction. Experiments pertaining to this aspect are included in Appendix~\ref{explor}.


We focus on the setting where the theoretical guarantees of transferability of performance (Theorem~\ref{transferthm}) may not hold to assess whether transferring VNNs across datasets curated according to distinct brain atlases resulted in retaining the regional profiles pertinent to AD-related neurodegeneration. In this context,  we studied the regional profiles when VNNs were transferred from a dataset organized according to Schaefer's atlas to a dataset organized according to Destrieux (DKT)~\cite{destrieux2010automatic} atlas. The datasets in this context were derived from distinct populations. The dataset organized according to DKT atlas laid outside the purview of theoretical guarantees on transferability of VNNs trained on datasets organized according to different scales of Schaefer's atlas (see Fig.~\ref{cutmetrics} in Appendix~\ref{fig_converge} and associated discussion). Lack of theoretical guarantees on the VNN transferability between datasets organized according to DKT atlas and those according to Schaefer's atlas imply that VNNs may need to be augmented by a mapping that accounts for the differences between DKT atlas and Schaefer's atlas to achieve comparable performance on the chronological age prediction task. Hence, while the performance on the task of chronological age prediction was not guaranteed to be trivially transferable between datasets curated according to Schaefer's atlas and DKT atlas, we hypothesized that the observed effects of elevated regional residuals may be observed even after transferring VNNs if such effects were driven by cortical atrophy or neurodegeneration-related features in the anatomical covariance matrix. Therefore, evaluating the regional residuals after transferring VNNs between datasets curated according to distinct brain atlases could allow decoupling of the potential contributors of elevated brain age from the objective of achieving near perfect performance over chronological age prediction in the HC group~\footnote{We remark that the discussion here is not restricted to Schaefer's atlas and DKT atlas and could potentially be extended to any pair of datasets curated according to distinct brain atlases, such that, VNNs may not be trivially transferable between them.}. 

We further expanded the experiments above by investigating the correlations between regional residuals and clinical dementia rating (CDR) metrics. CDR sum of boxes scores are commonly used in clinical and research settings to stage dementia severity~\cite{o2008staging}. A higher CDR score is associated with more severe cognitive and functional status. If our hypothesis that the elevated regional residuals were driven by age-related neurodegeneration was valid, we expected to observe a significant alignment between the span of brain regions with elevated regional residuals and the span of brain regions whose regional residuals were correlated with CDR metrics. 




\subsection{Individual-level Brain Age Prediction}
Next, we focused on predicting the $\Delta$-Age for an individual from VNN outputs using a procedure that is consistent with other studies in the literature. The residuals evaluated via the difference between the chronological age and predicted age for individuals in the HC group by VNNs are prone to bias. Specifically, age for younger individuals tends to be over-estimated and that for older individuals tends to be under-estimated~\cite{beheshti2019bias,le2018nonlinear}. This phenomenon may appear, for example, when the correlation between the chronological age and predicted age is significantly smaller than 1. In order to correct for this age-related bias in the residuals, we used a linear regression based approach~\cite{de2020commentary} and evaluated $\Delta$-Age. We hypothesized $\Delta$-Age to be significantly elevated in the group of individuals with AD with respect to that for those in the HC group. The methodological details on age-bias correction and the procedure to evaluate $\Delta$-Age are included in subsection~\ref{brain_age} in the Methods section. Also, by extension of the transferability property  of VNNs, we have previously reported similar distributions of $\Delta$-Age across datasets curated according to different scales of a multi-scale brain atlas in~\cite{sihag2022predicting}.

\section{Methods}\label{methods}
\subsection{Data}\label{data}
We consider datasets from two independent populations in this paper, as described below.

\noindent
{\bf Multi-scale FTDC Datasets.} These datasets consist of the cortical thickness data extracted at different resolutions from healthy controls (HC; $n = 105$, age = $62.6 \pm 7.62$ years, 57 females). 
For each individual, the cortical thickness data was curated according to multi-resolution Schaefer atlas~\cite{schaefer2018local}, at 100 parcel, 300 parcel, and 500 parcel resolutions with finer resolution cortical thickness estimates with increasing number of parcellations.  The ANTs cortical thickness pipeline~\cite{tustison2014large,das2009registration} was used to derive mean cortical thickness within each atlas parcel using 3T T1-weighted MRIs (~1mm isotropic resolution). We report results on three datasets: FTDC100, FTDC300 and FTDC500, that constitute the cortical thickness datasets corresponding to 100, 300 and 500 cortical thickness features, respectively. Also, the FTDC100, FTDC300, and FTDC500 datasets are jointly referred to as FTDC datasets.   


\noindent


\noindent 

\noindent
{\bf OASIS-3 Dataset}. This dataset was derived from publicly available
freesurfer estimates of cortical thickness (hosted on \url{central.xnat.org}), as previously reported~\cite{lamontagne2019oasis}, and comprised of cognitively normal individuals (HC; $n = 652$, age = $67.76\pm 7.88$ years, $382$ females) and individuals with AD dementia diagnosis and at various stages of cognitive decline ($n=209$, age = $74.61 \pm 7.13$ years, $102$ females). The cortical thickness features were curated according to the Destrieux (DKT) atlas~\cite{destrieux2010automatic}(consisting of $148$ cortical regions). In the context of transferability, OASIS-3 provided a dataset curated according to a distinct brain atlas than the multi-scale Schaefer's atlas for FTDC datasets and hence, allowed us to investigate the VNN transferability beyond the setting with multi-resolution datasets. For clarity of exposition of the brain age prediction method, any dementia staging to subdivide the group of individuals with AD dementia diagnosis into mild cognitive impairment (MCI) or AD was not performed and we use the label AD+ to refer to this group. The individuals in AD+ group were significantly older than those in HC group (t-test: $p$-value = $4.15\times 10^{-27}$).  The boxplots for the distributions of chronological age for HC and AD+ groups are included in Fig.~\ref{age_dist}. For $206$ individuals in the AD+ group, the CDR sum of boxes scores evaluated within one year (365 days) from the MRI scan were available (CDR sum of boxes = $3.38\pm 1.73$). The CDR sum of boxes scores for this population were evaluated according to~\cite{morris1993clinical}.

The data from individuals in HC group for FTDC and OASIS-3 were leveraged to investigate transferability of VNNs as governed by the theoretical framework provided in Section~\ref{vnn_intro}. Furthermore, we primarily used the OASIS-3 dataset to derive conclusions about $\Delta$-Age as a marker of accelerated aging in the AD+ group. The observations made after transferring VNNs between FTDC and OASIS-3 datasets were used to gain methodological insights regarding the role of training VNNs to predict chronological age in the brain age prediction framework. Besides the datasets described above, we also extended the transferability aspect of VNNs to the brain age prediction framework on a small, multi-scale dataset consisting of individuals with AD diagnosis (independent of OASIS-3) that was curated according to different scales of the Schaefer's atlas (Appendix~\ref{explor}) in order to demonstrate the spatial robustness of the regional profiles obtained by our brain age prediction framework across multiple scales and validation of the findings on OASIS-3 dataset.

\subsection{VNN Learning}\label{vnn_learn}
We use the VNN model for a regression task where a multi-variate feature set is regressed to a scalar quantity. Note that the VNN output of the architecture represented by $\Phi(\bx;\hat\bC,\cH)$\footnote{We use the notation $\hat\bC$ for covariance matrix in the Methods section as the VNN architecture is modeled on the sample covariance matrix in practical implementation.} for one $m$-dimensional input is of dimension $m \times F$ if the VNN architecture has $F$ $m$-dimensional outputs in the final layer. The regression output is determined by a readout layer which evaluates an unweighted mean of all outputs of the final layer of VNN. Therefore, the regression output for an input $\bx$ is given by
\begin{align}
    \hat y = \frac{1}{F m}\sum\limits_{j = 1}^m\sum\limits_{f=1}^F[\Phi(\bx;\hat\bC,\cH)]_{jf}\;.
\end{align}
Prediction using unweighted mean at the output implies that the VNN model exhibits permutation-invariance (i.e., the final output is independent of the permutation of the input features and covariance matrix) and transferability. Although the performance of the VNN model could potentially be improved by adding a learnable or an adaptive readout function (e.g., weighted mean or a single layer perceptron) that maps the final layer outputs of VNN to scalar $\hat y$ via a learnable mapping, our subsequent experiments implied a negative impact on the interpretability of VNN model in the context of brain age prediction. For a regression task, the training dataset $\{\bx_i, y_i\}_{i=1}^n$ is leveraged to learn the filter taps in $\cH$ for the VNN $\Phi(\cdot;\hat\bC,\cH)$ such that they minimize the training loss, i.e.,
\begin{align}
    \cH_{\sf opt} = \min_{\cH} \frac{1}{n}\sum\limits_{i=1}^n \ell(\hat y_i, y_i)\;,
\end{align}
where 
\begin{align}
    \hat y_i = \frac{1}{F m} \sum\limits_{j =1}^m\sum\limits_{f=1}^F[\Phi(\bx_i;\hat\bC,\cH)]_{jf}\;,
\end{align}
and $\ell(\cdot)$ is the mean-squared error (MSE) loss function. 

In our experiments, we trained four sets of VNN models; one each for the HC group in FTDC100, FTDC300, FTDC500, and OASIS-3 datasets. The training process was similar for all VNNs. We randomly split the dataset into an approximately $90/10$ train/test split. Thus, the test sets for FTDC datasets consisted of $10$ individuals and that in OASIS-3 dataset consisted of $65$ individuals. The sample covariance matrix was evaluated using all samples in the training set ($n=95$ for FTDC datasets and $n=587$ for OASIS-3 dataset) and we had the sample covariance matrix $\hat\bC$ of size $m\times m$ (where $m = 100$ for FTDC100, $m=300$ for FTDC300, $m=500$ for FTDC500, and $m=148$ for OASIS-3). Furthermore, for all datasets, $\hat\bC$ was normalized such that its maximum eigenvalue was $1$. Next, the training set was randomly split internally, such that, the VNN was trained with respect to the mean squared error loss between the predicted age and the true age in $n = 84$ samples for FTDC datasets and $n=513$ samples for OASIS-3 dataset. The loss was optimized using batch stochastic gradient descent with Adam optimizer available in PyTorch library~\cite{paszke2019pytorch} for up to $100$ epochs. The batch size was $34$ for FTDC100 dataset, $8$ for FTDC300 dataset, $12$ for FTDC500 dataset, and $78$ for OASIS-3 dataset. The VNN model with the best minimum mean squared error performance on the remaining samples in the training set (which acted as a validation set) was included in the set of nominal models for this permutation of the training set.  For each dataset, we trained and validated the VNN models over $100$ permutations of the complete training set of $n = 95$ samples for each of the FTDC datasets and $n=587$ samples for OASIS-3 dataset, thus, leading to $100$ trained VNN models (also referred to as nominal models) per dataset.

The hyperparameters for the VNN architecture and learning rate of the optimizer were chosen according to a hyperparameter search procedure using the package Optuna~\cite{akiba2019optuna}. For FTDC100, the VNN had a $L = 2$-layer architecture, with a filter bank such that we had $F =  26$ and $2$ filter taps in each layer. The learning rate for the optimizer was $0.058$. The number of learnable parameters for this VNN was $1456$. For FTDC300, the VNN had a $L = 2$-layer architecture, with a filter bank such that we had $F =  39$ and $3$ filter taps in the first layer and $2$ filter taps in the second layer. The learning rate for the optimizer was $0.0241$. The number of learnable parameters for this VNN was $3237$. For FTDC500, the VNN model had a $L = 2$-layers with a filter bank such that we had $F = 27$ and $4$ filter taps in the first layer and $2$ filter taps in the second layer. The number of learnable parameters for this VNN was $1620$.  The learning rate for the Adam optimizer was set to $0.0631$. For OASIS-3, the VNN model had a $L = 2$-layers with a filter bank such that we had $F = 5$ and $6$ filter taps in the first layer and $10$ filter taps in the second layer. The learning rate for the Adam optimizer was set to $0.059$. The number of learnable parameters for this VNN was $290$.

Note that the parameters of all VNNs were learned on the HC group in their respective datasets and no subsequent training was performed for brain age prediction. The above procedure for training the VNNs by splitting the datasets into training/validation/test sets was performed to ensure that the VNNs were not overfit on the training set. The performance of VNNs on test sets for different datasets are reported in Appendix~\ref{vnn_test_perf}. However, our results primarily focus on the settings of transferability and brain age prediction, both of which are reported on the complete datasets in the results section.

\subsection{Transferability of VNNs}\label{vnn_transfer}
In our experiments, we empirically studied the transferability of the VNN models across different datasets described in Section~\ref{data}. In general, transferring a VNN model from dataset A to dataset B implies that the VNN was trained for an inference task on dataset A and is being tested for the same inference task on dataset B. The scale-free aspect of VNN architecture allows transferring of VNNs between datasets of different dimensionalities.  

We first describe the procedure for evaluating VNN transferability on multi-scale cortical thickness datasets derived from the same population. Let $\hat\bC_{m_1}$  denote the sample covariance matrix of size $m_1\times m_1$ from the dataset on which the VNN was trained. Here, $\hat\bC_{m_1}$ was normalized such that its maximum eigenvalue was $1$ in order to reconcile with the definition of graphon in Definition~\ref{grphon}.  Consider an individual whose chronological age is $y$ and cortical thickness data available as $\bx_{m_1}$ of size $m_1\times 1$ and $\bx_{m_2}$ of size $m_2\times 1$. Therefore, the VNN model prediction for the individual formed by the trained VNN model is given by
\begin{align}\label{teq1}
    \hat y_1 = \frac{1}{Fm_1} \sum\limits_{j=1}^{m_1} \sum\limits_{f=1}^F [\Phi(\bx_{m_1};\hat\bC_{m_1},\cH)]_{jf}\;,
\end{align}
where the set of filter taps $\cH$ are determined by training on the cortical thickness dataset with $m_1$ features and we use the notation $\Phi(\bx_{m_1};\hat\bC_{m_1},\cH)$ to denote the output at the final layer of the VNN. To empirically validate the theoretical results on the transferability of VNNs, we aim to evaluate the change in performance when the covariance matrix $\hat\bC_{m_1}$ is replaced with $\hat\bC_{m_2}$, where $\hat\bC_{m_2}$ is generated from dataset with $m_2$ cortical thickness features. 
Therefore, the predicted age based on the cortical thickness features $\bx_{m_2}$ for the same individual is given by
\begin{align}
    \hat y_{2} = \frac{1}{Fm_2} \sum\limits_{j=1}^{m_2} \sum\limits_{f=1}^F [\Phi(\bx_{m_2};\hat\bC_{m_2},\cH)]_{jf}\;,
\end{align}
where the filter taps $\cH$ are the same as those in~\eqref{teq1}. If the VNN model was transferable from a dataset with $m_1$ cortical thickness features to that with $m_2$ features, we expected the predictions $\hat y_1$ and $\hat y_2$ to be close. 

The procedure to investigate VNN transferability between datasets of different dimensionalities and distinct populations is similar as above with the following extensions. In principle, to evaluate the transferability from dataset B to dataset A, we compared the performance of VNNs that were trained on dataset A and the VNNs that were transferred from dataset B to dataset A. Clearly, if the VNNs that were transferred from dataset B to dataset A achieved comparable performance as that of the VNNs that were trained on dataset A, we could conclude that VNNs exhibited transference from dataset B to dataset A.

\subsection{Brain Age Prediction}
The VNN models trained as regression models for predicting chronological age using the cortical thickness data from healthy controls were expected to capture the effect of healthy aging in cortical thickness features. Next, we describe our strategy to characterize brain age with regional interpretability in the context of AD. We focus on the brain age results determined for OASIS-3. Additional observations made on multi-scale datasets in the context of brain age are included in Appendix~\ref{explor}.

\subsubsection{Regional Analyses of VNN Residuals in Age-Related Neurodegeneration}\label{regbage}
The VNN architecture allows us to associate a scalar output with each dimension among the $m$ dimensions in the final layer. Specifically, we have
\begin{align}\label{trvnn1}
    \bp_i = \frac{1}{F} \sum\limits_{f=1}^F [\Phi(\bx_i;\hat\bC_m,\cH)]_f\;,
\end{align}
where $\bp_i$ is the vector denoting the mean of all final layer outputs associated with filters in the filter bank at the final layer. Note that the mean of all elements of $\bp_i$ is the prediction $\hat y_i$ formed in~\eqref{teq1}. In the context of cortical thickness datasets, we can associate each element of $\bp_i$ with a distinct brain region. Therefore, the vector $\bp_i$ is a vector of ``regional contributions" to the output $\hat y_i$ by the VNN. The parameters $\cH$ were learnt over the HC group as described previously and kept unchanged in the subsequent analyses. The subsequent details in this section reflect that we primarily focused on the OASIS-3 dataset to study brain age. For OASIS-3 dataset, we use the notation $\hat\bC_{148}$ for the covariance matrix formed by the cortical thickness features from HC group. 

Next, we leveraged~\eqref{trvnn1} to study the effect of neurodegeneration on brain regions. For this purpose, in the OASIS-3 dataset, we evaluated the covariance matrix $\hat\bC_{148}^{\sf AD+}$  from the combined cortical thickness data of HC and AD+ groups. 
As a consequence of the stability of Theorem~\ref{transferthm}, we expect the inference drawn from VNNs to be stable to be stable to perturbations in the covariance matrix. Therefore, subsequent evaluations using VNNs are expected to be stable to the composition of combined HC and AD+ groups used to estimate the anatomical covariance matrix $\hat\bC_{148}^{\sf AD+}$. 

For every individual in the combined dataset of HC and AD+ groups, we processed their cortical thickness data $\bx$ through the model $\Phi(\bx;\hat\bC_{148}^{\sf AD+}, \cH)$ where parameters $\cH$ were learnt in the regression task on the data from HC group as described previously. Hence, the vector of mean of all final layer outputs for cortical thickness input $\bx$ is given by
\begin{align}\label{trvnn2}
    \bp = \frac{1}{F} \sum\limits_{f=1}^F [\Phi(\bx;\hat\bC_{148}^{\sf AD+},\cH)]_f\;,
\end{align}
and VNN output is given by
\begin{align}\label{interpret}
    \hat y = \frac{1}{148} \sum\limits_{j=1}^{148} [\bp]_{j}\;.
\end{align}
Furthermore, we define the residual for feature  $a$  (or brain region represented by feature $a$ in this case) as
\begin{align}\label{res_dist}
    [\br]_a \dff [\bp]_a - \hat y\;. 
\end{align}
Thus, equation~\ref{res_dist} allows us to characterize the residuals with respect to the VNN output $\hat y$ at the regional level for individual brain regions for an individual with cortical thickness data $\bx$. Henceforth,  we refer to the residuals evaluated according to~\eqref{res_dist} as ``regional residuals". Therefore, when an individual was predicted to have a brain age higher than their chronological age due to neurodegenerative condition, we hypothesized such an observation to be an aggregated effect of contributions from certain biologically plausible brain regions. The brain regions contributing to the observed higher brain age could be characterized at a regional level by the analysis of regional residuals as defined in~\eqref{res_dist}. Thus, the elements of the residual vector $\br$ can potentially act as a biomarker that can enable the isolation of brain regions affected due to age-related neurodegeneration.

In our experiments, for a given VNN model, we evaluated the residual vector $\br$ for every individual in the dataset.  Also, for experiments on OASIS-3 dataset, we denote the population of residual vectors for healthy controls as $\br_{\sf HC}$, individuals in AD+ group as $\br_{\sf AD+}$. 
The length of the residual vectors is the same as the number of cortical thickness features in the dataset. Since each element of the residual vector is associated with a distinct brain region, we performed ANOVA to test for group differences between individuals in HC and AD+ groups. Also, since elevation in $\Delta$-Age is the biomarker of interest in this analysis, we hypothesized that the brain regions that exhibited higher means for regional residuals for AD+ group than HC group would be the most relevant to capturing accelerated aging. Hence, we report our results only for brain regions that showed elevated regional residual distribution in AD+ group with respect to HC group. Further, the group difference between AD+ and HC groups in the residual vector element for a brain region was deemed significant if it met the following criteria: i) the corrected $p$-value (Bonferroni correction) for the clinical diagnosis label in the ANOVA model was smaller than $0.05$, and  ii) the uncorrected $p$-value for clinical diagnosis label in ANCOVA model with age and sex as covariates was smaller than $0.05$. 
An example for this regional analysis is included in Appendix~\ref{regional_illust}. 

Recall that 100 distinct VNN models were trained as regression models on different permutations of the training set of cortical thickness features from HC group. We leveraged these trained models to establish the robustness of observed group differences in the distributions of regional residuals. This procedure is described next.

\noindent
{\bf  Robustness of findings from regional analyses.} We performed the regional analysis described above corresponding to each trained VNN model and tabulated the number of VNN models for which a brain region was deemed significant in the regional analysis described above. A higher number of VNN models isolating a brain region as significant suggested higher robustness of the effect observed for that brain region. For instance, a brain region with that exhibited elevated regional residual in AD+ group with respect to HC group across nearly all the 100 trained VNN models was likely to be a highly robust contributor to elevated $\Delta$-Age in AD+ group. We used the {\sf fsbrain} package in R to project the robustness of significantly elevated regional residuals for a brain region on the brain template~\cite{schafer2020fsbrain}.

The scale-free architecture of VNNs is facilitated by the coVariance filter (Definition~\ref{def1}). Also, the non-adaptive readout function allows us to transfer VNNs to process datasets of different dimensionalities without any changes to the architecture. We used these properties of our proposed framework to investigate ``transferability of interpretability" across different datasets irrespective of whether they laid within the purview of theoretical guarantees on the transferability of performance of VNNs. This aspect is described next.

\noindent
{\bf Transferability of interpretability.} Note that the analysis  of regional residuals is independent of the performance of VNN model in the regression task. 
Hence, we leveraged the scale-free aspect of VNNs to evaluate whether the statistical effects that facilitated certain brain regions to be deemed significant in the above analyses were preserved after transferring to a dataset of different dimensionality. For this purpose, we transferred a VNN model trained on FTDC datasets to OASIS-3 dataset and plotted the brain regions deemed significant from the analyses of regional residuals of AD+ and HC groups in OASIS-3 on a brain template. If accelerated aging characteristic of AD was the driving factor behind the observed statistical differences in regional residuals of AD+ and HC groups, we expected to observe qualitatively consistent findings on OASIS-3 dataset for VNNs trained on OASIS-3 dataset and those transferred from FTDC datasets to OASIS-3 dataset. This observation would provide evidence for ``transferability of interpretability" for VNNs in brain age across different datasets as VNN models trained on FTDC datasets could transfer their ability to construct interpretable regional residuals from FTDC to OASIS-3. Here, we clarify that qualitatively consistency refers to observing significant group differences between AD+ and HC groups in the same direction (but not necessarily the same effect size) for regional residuals obtained from VNNs trained on OASIS-3 and FTDC datasets. By investigating transferability of interpretability, we could also potentially decouple the contributing factors behind the observed accelerated aging effect in AD+ group from the expectation of achieving a near-perfect performance on chronological age prediction for HC group in existing literature~\cite{bashyam2020mri, yin2023anatomically}. We hypothesized that the transferability of interpretability could be driven by a combination of cortical atrophy and the ability of VNNs to exploit eigenvectors (principal components) of the covariance matrix $\hat\bC_{148}^{\sf AD+}$ that were most relevant to brain age prediction.   


\subsubsection{Individual-level Brain Age Prediction}\label{brain_age}
Existing studies primarily focus on the gap between estimated brain age and the chronological age as a biomarker of pathology~\cite{franke2012longitudinal}, where the brain age is a scalar quantity. VNNs provide an interpretable and systemic approach to brain age via regional analyses as described in Section~\ref{regbage}. However, using VNNs, we can also obtain a scalar estimate for the brain age through a procedure consistent with the existing studies in this domain. To evaluate the brain age from VNN regression output, we first addressed the systemic bias in the gap between $\hat y$ and $y$, where the age may be underestimated for older individuals and overestimated for younger individuals~\cite{beheshti2019bias}. Such bias can confound the interpretations of brain age. Therefore, to correct for this age-driven bias, we adopted a linear regression model based approach to correct any age bias in the VNN age estimates~\cite{de2020commentary,beheshti2019bias}. Under this approach, we followed the following bias correction steps on the VNN estimated age $\hat y$ to obtain the brain age $\hat y_{\sf B}$ for an individual with chronological age $y$ and cortical thickness data~$\bx$:

\noindent
{\bf Step 1.} Fit a linear regression model on the complete dataset to determine coefficients $\alpha$ and $\beta$ in the following linear model:
\begin{align}\label{s1}
     \hat y - y = \alpha y + \beta\;. 
\end{align}
\noindent
{\bf Step 2.} Evaluate brain age as follows:
\begin{align}\label{s2}
   \hat y_{\sf B} = \hat y - (\alpha y + \beta)\;. 
\end{align}
The gap between $\hat y_{\sf B}$ and $y$ is typically the biomarker of interest and is defined below. For an individual with cortical thickness $\bx$ and chronological age $y$, the brain age gap $\Delta$-Age is defined as
\begin{align}
     \Delta\text{-Age}\triangleq \hat y_{\sf B} - y\;,
\end{align}
where $\hat y_{\sf B}$ is determined from the VNN age estimate $\hat y$ from $\Phi(\bx;\bC,\cH)$ and $y$ according to steps in~\eqref{s1} and~\eqref{s2}. The age-bias correction in~\eqref{s1} and~\eqref{s2} was performed for only HC group to account for bias in the VNN estimates due to healthy aging and then applied to the AD+ group. Further, the distributions of $\Delta$-Age were obtained for all individuals in HC and AD+ groups. $\Delta$-Age for AD+ group was expected to be elevated as compared to HC group as a consequence of elevated regional residuals derived from the VNN model. To elucidate this, we consider a toy example where we have two individuals of the same chronological age $y$ with one belonging to the AD+ group and another to the HC group. Equation \eqref{s2} suggests that their corresponding VNN outputs (denoted by $\hat y_{\sf AD+}$ for individual in the AD+ group and $\hat y_{\sf HC}$ for individual in the HC group) are corrected for age-bias using the same term $\alpha y + \beta$. Hence, $\Delta$-Age for the individual in the AD+ group will be elevated with respect to that from the HC group only if the VNN prediction $\hat y_{\sf AD+}$ is elevated with respect to  $\hat y_{\sf HC}$. Since the VNN predictions  $\hat y_{\sf AD+}$  and  $\hat y_{\sf HC}$ can be perceived as unweighted aggregations of the estimates at the regional level (see~\eqref{interpret}), higher $\hat y_{\sf AD+}$ with respect to $\hat y_{\sf HC}$ is a direct consequence of the regional residuals (see~\eqref{res_dist}) being elevated in AD+ group with respect to HC group. When the individuals in this example have different chronological age, the age-bias correction is expected to remove any variance due to chronological age in $\Delta$-Age. Since the age distributions of AD+ and HC group are different, we also verified that the differences in $\Delta$-Age for AD+ and HC group were not driven by age or gender differences via ANCOVA with age and sex as covariates.


\section{Results}\label{results}

\subsection{Transferability of VNNs for regression task}\label{vnn_regress}
We evaluated the transferability of VNN models trained as regression models on the cortical thickness data of the HC group from FTDC datasets across different resolutions of Schaefer's atlas. To begin with, we remark that the series of covariance matrices formed by cortical thickness features extracted according to $100-500$ parcellation for HC group in FTDC datasets was converging (Appendix~\ref{fig_converge}). This assessment was pertinent as our theoretical results in Theorem~\ref{transferthm} hold for a converging sequence of covariance matrices. 
Furthermore, we also evaluated the distance of covariance matrices from FTDC datasets  from the covariance matrices derived from the HC group of OASIS-3 dataset ($\hat\bC_{148}$ for DKT atlas). Here, the distances between the covariance matrix derived from OASIS-3 dataset and those from FTDC datasets were significantly greater than the pairwise distances for the covariance matrices associated with different resolutions of Schaefer's atlas (Fig.~\ref{cutmetrics} in Appendix~\ref{fig_converge}). This observation could potentially be explained by the differences in constructions of Schaefer's atlas and DKT atlas. Specifically, DKT atlas is an anatomic atlas that maps both gyral and sulcal regions~\cite{destrieux2010automatic}. In contrast, Schaefer's atlas is derived from a gradient weighted Markov random field model using functional MRI data~\cite{schaefer2018local} that does not have anatomic boundaries per se. Due to these significant differences in construction, we did not expect the anatomical covariance matrices from the data curated according to DKT atlas to be a part of the converging sequence of covariance matrices from FTDC datasets. Thus, based on the discussion thus far, we expected the transferability property of VNNs to hold for statistical inference tasks on FTDC datasets. We also performed exploratory analysis for transferability from FTDC to OASIS-3 dataset for VNNs trained on FTDC datasets and vice-versa to assess the degradation in regression performance.

In order to investigate transferability of VNNs, we trained the VNN models for a regression task between cortical thickness and chronological age for individuals in HC group according to the procedure described in Section~\ref{vnn_learn} for FTDC100, FTDC300, FTDC500, and OASIS-3 datasets. This resulted in $100$ nominal VNN models for each dataset (each model trained on a different permutation of the training set). The readout layer in the VNNs was non-adaptive and it evaluated the unweighted mean of the outputs of the final VNN layer to form an estimate for chronological age. Therefore, the trained VNN could readily process a dataset with different number of features without any retraining or alteration in the architecture. The performance outcomes were quantified in terms of mean absolute error (MAE) and Pearson's correlation between the VNN output and ground truth.
 
\begin{table}[!htbp]
\caption{Transferability across datasets (MAE for VNN regression outputs with respect to the ground truth).}\label{transfer_tbl1}
\renewcommand{\arraystretch}{1.5}
\centering
{\small
\begin{tabular}{|c| c| c| c|c|c|}
\hline
\multirow{2}{*}{\backslashbox{Training}{Testing}} & \multirow{2}{*}{FTDC100 (HC)} & \multirow{2}{*}{FTDC300 (HC)} & \multirow{2}{*}{FTDC500 (HC)} & \multirow{2}{*}{OASIS-3 (HC)} \\ 
& & & &  \\
 \hline
FTDC100 (HC) & \cellcolor{red!25}{$5.39 \pm 0.084$} & $5.5 \pm 0.101$ & $5.61 \pm 0.132$&   $7.55\pm 0.356$ \\
 \hline
 FTDC300 (HC) & $5.39 \pm 0.193$ & \cellcolor{red!25}{$5.41 \pm 0.167$} &  {$5.47\pm 0.169$} & $7.22\pm 0.4$ \\
 \hline
FTDC500 (HC) & $5.43 \pm 0.2$ & $5.38 \pm 0.15$ &  \cellcolor{red!25}{$5.4\pm 0.169$} & $7.14\pm 0.6$ \\
 \hline
OASIS-3 (HC)  & $6.82 \pm 0.365$ & $6.98\pm 0.42$ &  {$6.47\pm 0.34$} & \cellcolor{red!25}{$5.72\pm 0.076$} \\ 
 \hline
\end{tabular}}
\end{table}

\begin{table}[!htbp]
\caption{Transferability across datasets (Pearson's correlation between VNN outputs and ground truth).}\label{transfer_tbl2}
\renewcommand{\arraystretch}{1.5}
\centering
{\small
\begin{tabular}{|c| c| c| c|c|c|}
\hline
\multirow{2}{*}{\backslashbox{Training}{Testing}} & \multirow{2}{*}{FTDC100 (HC)} & \multirow{2}{*}{FTDC300 (HC)} & \multirow{2}{*}{FTDC500 (HC)} & \multirow{2}{*}{OASIS-3 (HC)} \\ 
& & & & \\
 \hline
FTDC100 (HC)& \cellcolor{red!25}{$0.49 \pm 0.017$} & $0.47 \pm 0.018$ & $0.468 \pm 0.018$ & $0.387\pm0.021$ \\ 
 \hline
  FTDC300 (HC) & $0.498 \pm 0.05$ & \cellcolor{red!25}{$0.49 \pm 0.042$} &  {$0.486\pm 0.04$} & $0.374\pm 0.06$\\ 
  \hline
FTDC500 (HC) & $0.51 \pm 0.021$ & $0.509 \pm 0.02$ & \cellcolor{red!25}{$0.51\pm0.021$} & $0.374\pm 0.031$ \\
 \hline
OASIS-3 (HC)  & $0.454 \pm 0.006$ & $0.437\pm 0.004$ &  {$0.432\pm 0.003$} & \cellcolor{red!25}{$0.43\pm 0.012$} \\ 
 \hline
\end{tabular}}
\end{table}

\begin{figure}[!htbp]
  \centering 
  \includegraphics[scale=0.45]{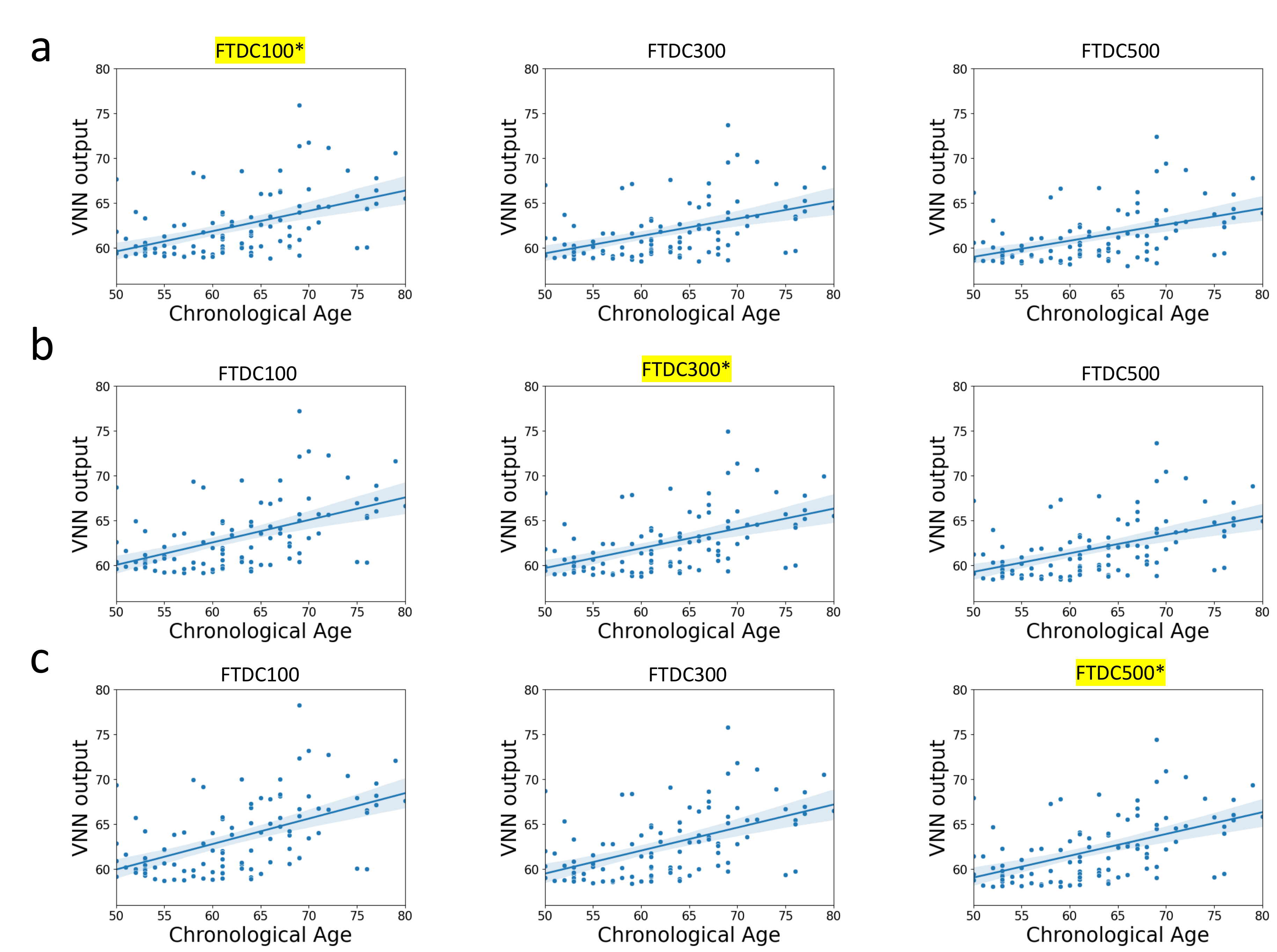}
   \caption{{\bf VNN regression outputs versus chronological age after transferring VNNs across FTDC datasets.} Panel~{\bf a} displays the results obtained by averaging the outputs of all 100 VNN models that were trained on FTDC100 dataset and tested on FTDC datasets curated according to different resolutions of Schaefer's atlas. Mean of regression outputs for all 100 VNN outputs versus the ground truth (chronological age) are displayed when the VNNs process FTDC100 dataset ($m=100$; marked by an asterisk since VNNs were trained on this dataset), FTDC300 dataset ($m=300$) and FTDC500 dataset ($m=500$). Panel~{\bf b} and Panel~{\bf c} displays similar results as in Panel~{\bf a} for the VNNs that were trained on FTDC300 and FTDC500 datasets, respectively. }
   \label{vnn_tranfer_fig}
\end{figure}

We tabulate MAE in Table~\ref{transfer_tbl1} and Pearson's correlation between ground truth and VNN output in Table~\ref{transfer_tbl2}. For both tables, the row ID provides the dataset on which VNN models were trained and the column ID indicates the dataset for which the VNN performance is reported (after transferring the VNNs if training and testing datasets are different). For instance, the element with row ID ``FTDC100" and column ID ``FTDC300" in Table~\ref{transfer_tbl1} represents the mean and standard deviation of MAE evaluated on FTDC300 dataset ($m=300$) for the 100 nominal VNN models trained on FTDC100 dataset ($m=100$). The elements with same row ID and column ID in Table~\ref{transfer_tbl1} and Table~\ref{transfer_tbl2} provide the baseline performance to gauge performance after transferring VNNs.

The results in Table~\ref{transfer_tbl1} and Table~\ref{transfer_tbl2} show that the performance of VNNs in terms of MAE and correlation between VNN output and ground truth was preserved after transferring VNNs across FTDC datasets that were curated according to different resolutions of Schaefer's atlas. The transferability of VNNs across FTDC datasets was corroborated by Fig.~\ref{vnn_tranfer_fig}, which illustrates the plots of the means of the outputs of $100$ VNN models that were trained on FTDC100 dataset (or FTDC300 or FTDC500 datasets) and used to process FTDC100, FTDC300, and FTDC500 datasets. We also remark that this experiment is not feasible for PCA-regression models as the principal components and the regression model from one dataset cannot be naively transferred to process another dataset that has a different number of features. Hence, the observations on FTDC datasets in Table~\ref{transfer_tbl1}, Table~\ref{transfer_tbl2}, and Fig.~\ref{vnn_tranfer_fig} validate our theoretical results regarding transferability of VNNs. The plot for VNN output versus chronological age for the HC group in OASIS-3 dataset is included in Fig.~\ref{brain_age_AD} in Appendix~\ref{brain_age_supp}.

Moreover, Table~\ref{transfer_tbl1} and Table~\ref{transfer_tbl2} also report the performance of VNN models trained on FTDC datasets after transferring to OASIS-3 dataset and vice-versa. As expected from the observations in Fig.~\ref{cutmetrics}b, the VNN performances degraded significantly when processing the datasets curated according to DKT atlas. 
However, the VNN models trained on FTDC100, FTDC300 or FTDC500 datasets retained significant correlation between predicted age and true age for OASIS-3 dataset (Table~\ref{transfer_tbl2}) and achieved an MAE of $<8$ years (Table~\ref{transfer_tbl1}). Similar observations were true for VNN models trained on OASIS-3 dataset when transferred to FTDC datasets. These observations indicated that the outputs of VNN models trained on FTDC datasets may not be scaled appropriately to transfer to OASIS-3 dataset or vice-versa and thus, a mapping or a transformation between Schaefer's atlas and DKT atlas is necessary to transfer regression models optimally. Nevertheless, our experiments imply that the VNN models trained on FTDC datasets (or OASIS-3 dataset) possessed at least some information about chronological age that could be transferred when processing the OASIS-3 datasets (or FTDC datasets). This aspect is explored further in our results in Section~\ref{vnn_age_eig}. Although not the focus of our paper, the observations above provide some context into why fine-tuning pre-trained deep learning models for a specific application may be useful for statistical inference from neuroimaging data~\cite{yang2022data,wein2021graph,ardalan2022transfer}.

\subsection{VNN regression model outputs for HC group in OASIS-3 are correlated with the first eigenvector of~$\hat\bC_{148}$}\label{vnn_age_eig}

Our results in Lemma~\ref{lm1} suggested that VNN based statistical inference draws conceptual similarities with PCA-driven analysis. Hence, we further investigated whether the regression performance by VNNs in Table~\ref{transfer_tbl1} could be characterized by contributions of the principal components of the anatomical covariance matrix. We focus our discussion here on the VNNs trained on OASIS-3 dataset and VNNs transferred from FTDC100 dataset to OASIS-3 dataset (among the FTDC datasets, FTDC100 dataset had the largest ratio between the number of samples and number of features). However, the observations made by experiments on VNNs trained on FTDC300 or FTDC500 datasets were similar to that for those trained on FTDC100 and are included in Appendix~\ref{inner_prod_all}.

Recall that the final regression output by VNNs is formed by an unweighted average function as a readout function. Thus, we can equivalently represent the functionality of the readout as a simple aggregation of the contributions of different features or brain regions to the final estimate formed by the VNN (see~\eqref{trvnn1}). Hence, for every individual, we evaluated the mean of the inner products (also equivalently referred to as dot product between vectors) between the vectors of contributions of every brain region with the principal component of the covariance matrix $\hat\bC_{148}$ for all $100$ VNN models. Note that a vector of regional contributions was of the same length as the number of cortical thickness features (i.e., 148 for OASIS-3) and therefore, each element of this vector was associated with a distinct brain region.  We use the notation $\bp_{\sf HC}$ to represent the population of vectors obtained from the HC group in OASIS-3. To evaluate the inner product, we used $\bar\bp_{\sf HC}$, which was obtained from  $\bp_{\sf HC}$ after normalization (norm = 1). We denote the population of inner products across  the HC group in OASIS-3 by $|\!<\!\bar \bp_{\sf HC}, \bv_i\!>\!|$ for an eigenvector $\bv_i$ of $\hat\bC_{148}$. Note that since all vectors in $\bar \bp_{\sf HC}$ were normalized to have norm $1$ and the eigenvectors of $\hat\bC_{148}$ were of length $1$ by default, $|\!<\!\bar \bp_{\sf HC}, \bv_i\!>\!|$ represented the population of the cosine of the angles between the vectors in $\bp_{\sf HC}$ and eigenvectors $\bv_i$ of $\hat\bC_{148}$ across the HC group in OASIS-3.   

Figure~\ref{vnn_hc_eig_fig}a plots the mean of the inner products observed across the HC group for the first $30$ eigenvectors of $\hat\bC_{148}$ for VNNs that were trained on OASIS-3 dataset. The alignment between the first eigenvector of $\hat\bC_{148}$ and the vectors of regional contributions to the VNN output was significantly stronger as compared to other eigenvectors ($0.991 \pm 0.0003$ across the HC group) with relatively smaller associations for eigenvectors $\bv_2$ ($0.04\pm 0.0029$), $\bv_3$ ($0.0746 \pm 0.0023$), and $\bv_4$ ($0.0716 \pm 0.0026$). Thus, it could be concluded that the VNNs primarily leveraged the information along the first principal component  of  $\hat\bC_{148}$ (i.e., $\bv_1$) to achieve the performance in Table~\ref{transfer_tbl1} and Table~\ref{transfer_tbl2}. Figure~\ref{vnn_hc_eig_fig}b illustrates the projection of $\bv_1$ on a brain template. In Fig.~\ref{vnn_hc_eig_fig}b, $\bv_1$ predominantly included bilateral anatomic brain regions in the parahippocampal gyrus, precuneus, inferior medial temporal gyrus, and precentral gyrus. 

Next, we discuss the correlations between the eigenvectors of $\hat\bC_{148}$ 
 and the regional contributions derived from VNNs that were transferred from FTDC100 to OASIS-3. Recall from Table~\ref{transfer_tbl1} and Table~\ref{transfer_tbl2} that the VNNs that were transferred from FTDC100 to OASIS-3 had a diminished transference of performance in the task of chronological age prediction on the OASIS-3 dataset. However, similar to Fig.~\ref{vnn_hc_eig_fig}a, the first principal component $\bv_1$ had the largest correlation with the regional contributions derived from the VNNs in this setting ($0.985 \pm 0.0015$) and relatively smaller associations were observed for $\bv_3$ ($0.072 \pm 0.004$) 
 and $\bv_4$ ($0.111\pm 0.005$). Figure~\ref{vnn_hc_eig_fig}c displays the mean of the inner products observed across the HC group for the first $30$ eigenvectors of $\hat\bC_{148}$ obtained by the VNNs that were transferred from FTDC100 to OASIS-3 dataset. Thus, despite diminished quantitative transferability between FTDC and OASIS-3 datasets (Table~\ref{transfer_tbl1} and Table~\ref{transfer_tbl2}), the VNNs that were transferred from the FTDC100 dataset to OASIS-3 dataset leveraged the first eigenvector of $\hat\bC_{148}$ to form the chronological age predictions for HC group in OASIS-3 dataset. This observation indicated that the VNNs trained to predict chronological age for one dataset could gain the ability to leverage the relevant eigenvectors of the anatomical covariance matrix when transferred to process another dataset irrespective of the transferability of performance over the task of chronological age prediction.
\begin{figure}[!htbp]
  \centering 
  \includegraphics[scale=0.5]{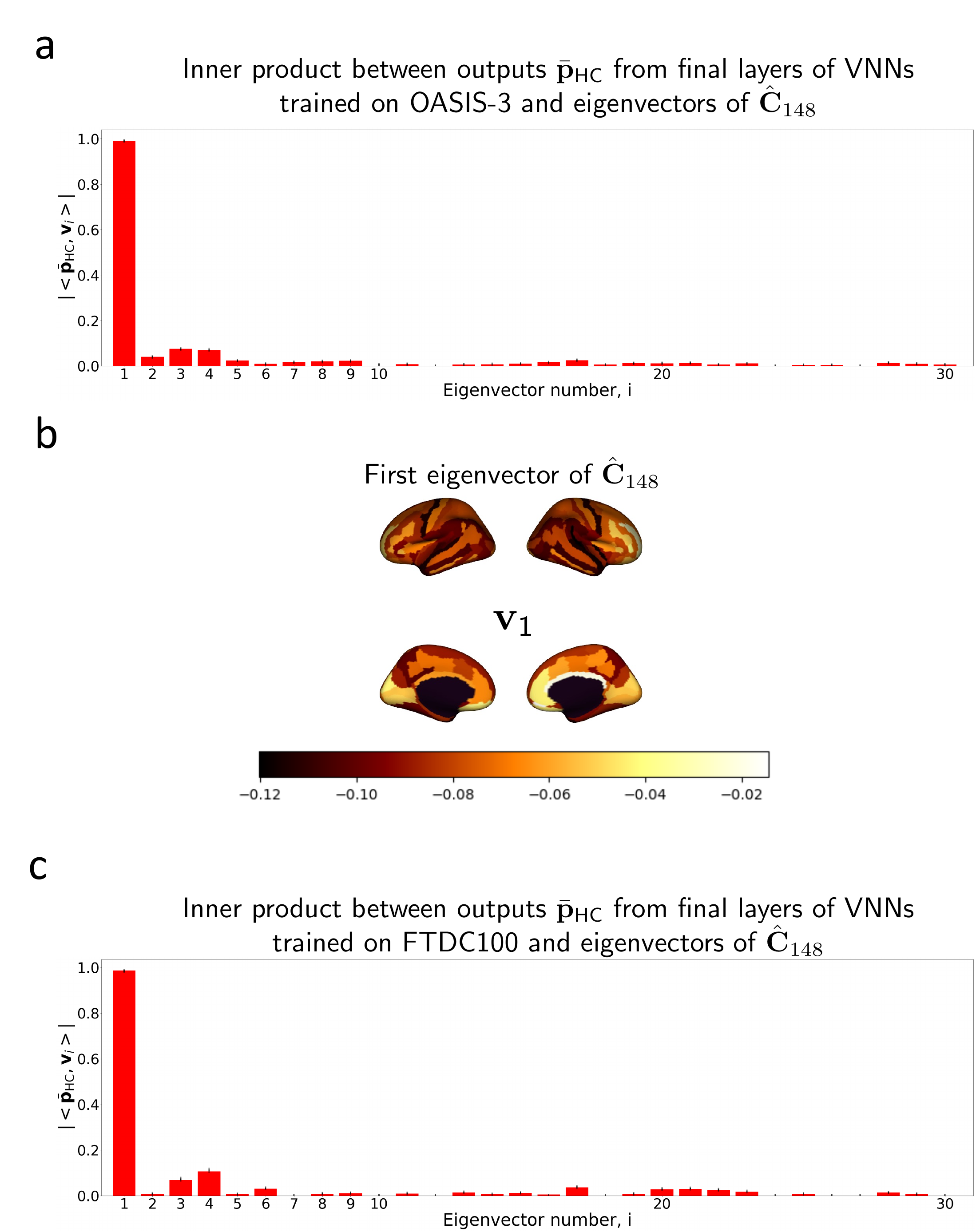}
   \caption{{\bf Inner product between the normalized vector of regional contributions to the VNN outputs ($\bar\bp_{\sf HC}$) and eigenvectors of $\hat\bC_{148}$ (anatomical covariance matrix for HC group in OASIS-3).} Panel~{\bf a} illustrates a bar plot for $|\!<\!\bar\bp_{\sf HC}, \bv_i\!>\!|$ for $i\in \{1,\dots, 30\}$, where $\bv_i$ is the $i$-th eigenvector (principal component) of covariance matrix $\hat\bC_{148}$ and associated with $i$-largest eigenvalue in terms of magnitude and the vectors of regional contributions, $\bar\bp_{\sf HC}$ were obtained by VNNs that were trained on OASIS-3 dataset. The inner product results for eigenvectors with coefficient of variation greater than $30\%$ across the HC group of OASIS-3 were excluded (and hence, their respective entries set as $0$). For every individual in HC group, the associations between their corresponding vector of regional contributions, $\bar\bp_{\sf HC}$ and eigenvectors of $\hat\bC_{148}$ were evaluated over $100$ nominal VNN models. The first eigenvector ($\bv_1$) had the largest association with $\bar\bp_{\sf HC}$ ($0.991 \pm 0.0003$ across HC group). The eigenvector $\bv_1$ is plotted on a brain template in Panel~{\bf b}. Panel~{\bf c} displays a bar plot for $|\!<\!\bar\bp_{\sf HC}, \bv_i\!>\!|$ for the first $30$ eigenvectors of $\hat\bC_{148}$ and the vectors of regional contributions, $\bar\bp_{\sf HC}$ in the scenario when VNNs were transferred from FTDC100 to OASIS-3 dataset.}
   \label{vnn_hc_eig_fig}
\end{figure}

\subsection{Brain Age Prediction using VNNs in OASIS-3}
Next, we performed the regional analyses of residuals obtained from VNN models for individuals in HC and AD+ groups according to the procedure discussed in Section~\ref{regbage}. We remark that several existing studies on brain age prediction have utilized models that achieved better MAE on the HC group than our results in Table~\ref{transfer_tbl1} and Table~\ref{transfer_tbl2}~\cite{yin2023anatomically,bashyam2020mri,couvy2020ensemble,liem2017predicting}. 
However, our primary focus here is on demonstrating that the outputs of the final layers of VNNs trained on data from HC group enabled us to associate contributions to brain regions that were characteristic of neurodegeneration in AD. 
Thus, this section summarizes the insights provided by VNNs that have neither been explored nor feasible for most existing brain age evaluation frameworks based on black box implementation of deep learning models. 

To start with, we leveraged the VNN models trained on data from HC group in OASIS-3 in Section~\ref{vnn_regress} to process a combined cortical thickness dataset from HC and AD+ groups. Also, in this scenario, the covariance matrix was formed from the combined dataset ($\hat\bC_{148}^{\sf AD+}$ from cortical thickness data from HC and AD+ groups). This choice was made in order to incorporate the neurodegeneration-driven changes in the anatomical covariance into the model. Next, for every VNN model, we evaluated the distribution of regional residuals in AD+ and HC groups as described in Section~\ref{regbage}. An example of the outputs of the regional analysis of a VNN model is included in Appendix~\ref{regional_illust}. 

We note that the regional residuals derived from VNNs are independent of the MAE performance of VNN models in predicting chronological age for the individuals in  HC group. 
Therefore, besides analyzing brain age in OASIS-3 dataset, we also performed exploratory regional analyses to assess the interpretability offered by VNN models that were trained on FTDC datasets and transferred to process OASIS-3 dataset. We aimed to discern the interpretability offered by VNNs by analyzing the group differences in regional residuals between AD+ and HC groups and the association of regional residuals with CDR sum of boxes in the AD+ group.

\subsubsection{Regional analyses of outputs of VNNs reveal brain regions characteristic of Alzheimer's disease}\label{results_regional}
Our results for the OASIS-3 dataset showed that the brain regions characteristic of AD had significant differences in regional residual distributions (defined in~\eqref{res_dist}) for AD+ group as compared to HC group. Note that the results presented in this section were checked for robustness using the family of 100 nominal models trained to predict chronological age of individuals in the HC group in OASIS-3 (described in Section~\ref{vnn_regress}). 

Figure~\ref{roi_AD}a illustrates the significant group differences in cortical thickness for AD+ and HC groups (t-test, t-statistic for regions with Bonferroni corrected $p$-value $<0.05$ are projected on the brain template). Since AD+ group is characterized by accelerated loss of cortical thickness in various brain regions, we expected the outcomes of the analysis of regional residuals to have significant overlap with cortical atrophy patterns in Fig.~\ref{roi_AD}a. Figure~\ref{roi_AD}b displays the robustness (determined via analyses of 100 VNN models) for various brain regions in having an elevated regional effect in their corresponding residual elements for AD+ group with respect to HC group. The significance of brain regions in each VNN model was determined via comparisons between HC and AD+ groups and from VNN final layer outputs, where the parameters of the VNNs were determined via training to predict chronological age for the HC group in OASIS-3. The most significant regions with elevated regional residuals in AD+ with respect to HC were concentrated in bilateral inferior parietal, temporal, entorhinal, parahippocampal, precuneus, subcallosal, and precentral regions.  All these brain regions, except for precentral and subcallosal, mirrored the cortical atrophy in Fig.~\ref{roi_AD}a, and these regions are known to be highly  interconnected with hippocampus~\cite{lindberg2012subcallosal}. 



Our findings here are also consistent with the existing studies that study anatomic changes in AD. Inferior parietal and medial temporal regions are well known regions of atrophy for the typical variant of Alzheimer's disease~\cite{ossenkoppele2015behavioural,ferreira2020biological}. Cortical atrophy in these regions is also accompanied by tau protein deposition~\cite{hansson2017tau}. Precuneus and parahippocampal regions are implicated in the early stages of AD~\cite{ryu2010measurement, echavarri2011atrophy} and precuneus is also a region of interest in individuals with early-onset AD~\cite{karas2007precuneus}. The aforementioned brain regions are also assigned interpretability in the study in~\cite{yin2023anatomically}. 

Although the results in Fig.~\ref{roi_AD}b provided a meaningful regional profile for AD+ group, we further performed exploratory analysis to check whether the regional residuals had any clinical significance. To this end, we evaluated the correlations between CDR sum of boxes and the regional residuals derived from final layer VNN outputs for the AD+ group for all 100 VNN models. 
Figure~\ref{roi_AD}c plots the means of the Pearson's correlations observed between regional residuals and CDR sum of boxes across the 100 VNN models for AD+ group on the brain template. Interestingly, the brain regions with the most significant correlations aligned with the most robust brain regions in the regional profile corresponding to elevated regional residuals in the AD+ group in Fig.~\ref{roi_AD}b. We clarify that our analysis in this context was not restricted to the highlighted brain regions in Fig.~\ref{roi_AD}b. Thus, the regional profile extracted from VNN final layer outputs in Fig.~\ref{roi_AD}b also captured information about the dementia severity. Next, we checked if the regional findings in Fig.~\ref{roi_AD}b and Fig.~\ref{roi_AD}c translated into elevation in $\Delta$-Age and the correlation between $\Delta$-Age and CDR sum of boxes for AD+ group.

\begin{FPfigure}
  \centering 
  \includegraphics[scale=0.45]{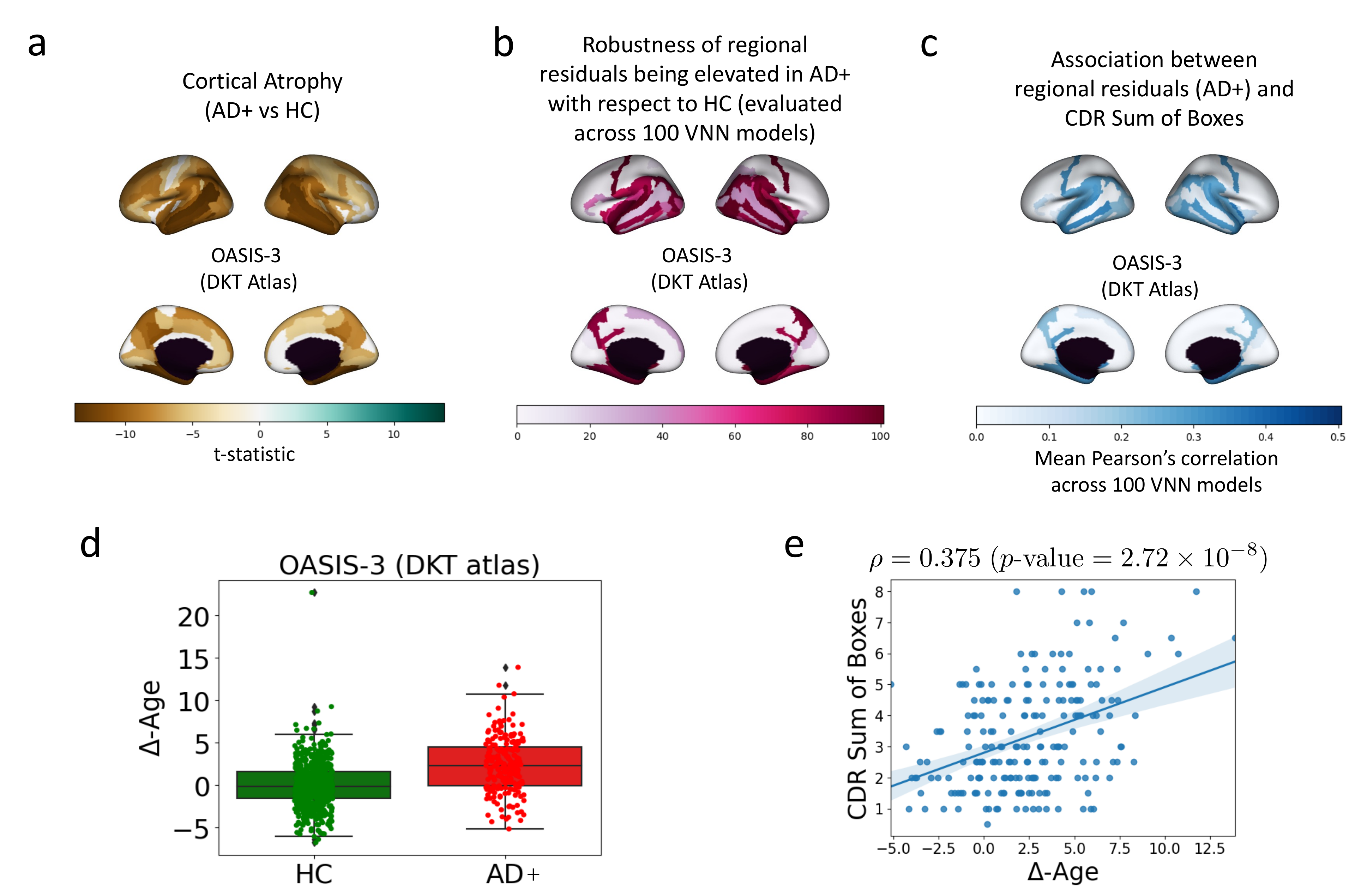}
   \caption{{\bf Interpretable $\Delta$-Age evaluation in OASIS-3 dataset.} Panel~{\bf a} displays the results of group differences in cortical thickness between AD+ and HC groups. Regions with significant differences (two-sided t-test, Bonferroni corrected $p$-value $<0.05$) are identified and the corresponding $t$-statistics are projected on a brain template. Negative $t$-statistic for a brain region suggests that the AD+ group had significant cortical atrophy in that region as compared to HC group. Panel~{\bf b} displays the robustness of the significantly elevated regional residuals for AD+ group with respect to HC group for different brain regions. For every VNN model in the set of $100$ nominal VNN models that were trained on HC group in OASIS-3, the analyses of regional residuals helped isolate brain regions that corresponded to significantly elevated regional residuals in AD+ group with respect to HC group (ANOVA: Bonferroni corrected $p$-value $<0.05$, ANCOVA with age and sex as covariates: $p$-value $< 0.05$). After performing this experiment for $100$ VNN models, the robustness of the observed significant effects in a brain region was evaluated by calculating the number of times a brain region was identified to have significantly elevated regional residuals in AD+ group with respect to HC group. The number of times a brain region was linked with significantly elevated regional residuals in AD+ group with respect to HC group is projected on the brain template and can be perceived as a marker of its robustness as a contributor to accelerated aging in AD+ group.  Panel~{\bf c} projects the mean Pearson's correlation between regional residuals (derived for each VNN model in the set of 100 nominal VNN models) and CDR sum of boxes for AD+ group on the brain template. Panel~{\bf d} displays the distribution of $\Delta$-Age for HC and AD+ groups. The gap between the mean of $\Delta$-Age for AD+ and HC is $2.36$ years and significant (Cohen's $d$ = 0.797). The elevated brain age effect here is characterized by regional profile in Panel~{\bf b}. Panel~{\bf e} displays the scatter plot for CDR sum of boxes versus $\Delta$-Age in AD+ group.  The correlation between $\Delta$-Age and CDR sum of boxes could be attributed to the correlations observed between regional residuals and CDR sum of boxes in Panel~{\bf c}.}
   \label{roi_AD}
\end{FPfigure}

\subsubsection{$\Delta$-Age is elevated in AD+ group and correlated with CDR}
We evaluated the $\Delta$-Age for HC and AD+ groups in OASIS-3 according to the procedure specified in Section~\ref{brain_age}. We also investigated the Pearson's correlation between $\Delta$-Age and CDR sum of boxes scores in AD+ group. Figure~\ref{roi_AD}d illustrates the distributions for $\Delta$-Age for HC and AD+ groups ($\Delta$-Age for HC: $0\pm 2.66$ years, $\Delta$-Age for AD+: $2.36\pm 3.22$ years). The difference in $\Delta$-Age for AD+ and HC groups was significant (Cohen's $d$ = 0.797, ANCOVA with age and sex as covariates: $p$-value = $3.98 \times 10^{-22}$, partial $\eta^2 = 0.103$). Also, the $p$-value associated with age in ANCOVA was $0.61$ and hence, group difference in $\Delta$-Age was not driven by the difference in the distributions of chronological age for HC and AD+ groups. Figure~\ref{roi_AD}e plots the scatter plot between CDR sum of boxes score and $\Delta$-Age for the AD+ group. The Pearson's correlation between CDR sum of boxes score and $\Delta$-Age was $0.375$ ($p$-value $= 2.72 \times 10^{-8}$), thus, implying that the $\Delta$-Age evaluated for AD+ group captured information about dementia severity. Hence, as expected, the $\Delta$-Age for AD+ was likely to be larger with an increase in CDR sum of boxes scores. For instance, the mean of $\Delta$-Age for individuals with CDR sum of boxes greater than 4 was $3.76$ years, and for those with CDR sum of boxes smaller than or equal to 4 was $1.79$ years.

Given that the age-bias correction procedure is a linear transformation of VNN outputs, it can readily be concluded that the statistical patterns for regional residuals in Fig.~\ref{roi_AD}b and Fig.~\ref{roi_AD}c lead to elevated $\Delta$-Age in Fig.~\ref{roi_AD}d and correlation between $\Delta$-Age and CDR sum of boxes scores in Fig.~\ref{roi_AD}e. Thus, a major implication of the findings in Fig.~\ref{roi_AD} is that certain regions contribute abnormally in the context of AD-related neurodegeneration as compared to HC group to the age estimate formed by the VNN models and therefore, the analysis of regional residuals derived from the outputs at the final layer of a VNN model provides a feasible way to characterize accelerated biological aging in AD+ group with a regional profile. Additional figures and details pertaining to VNN outputs and brain age before and after the age-bias correction was applied are included in Appendix~\ref{brain_age_supp} and they demonstrate the distributions of $\Delta$-Age and brain age across HC and AD+ groups in OASIS-3.

We have also previously reported in~\cite{sihag2022predicting} that the gap between $\Delta$-Age for AD, MCI and HC groups was preserved across different resolutions of Schaefer's atlas after transferring VNN models. The results in our previous work in~\cite{sihag2022predicting} were a direct consequence of the transferability of VNNs across different scales of Schaefer's atlas. For completeness, we further demonstrate that there was spatial consistency between regional profiles for AD that were evaluated across different resolutions of Schaefer's atlas for a distinct dataset in Appendix~\ref{explor}.

\subsection{Robustness of regional profiles of elevated $\Delta$-Age in AD+ group hinges on the correlations between regional residuals and principal components of anatomical covariance matrix $\hat\bC_{148}^{\sf AD+}$}
Thus far, we have independently shown that VNNs trained as regression models are transferable across multi-scale datasets without any re-training and VNNs provide an interpretable perspective to accelerated brain age in neurodegeneration. However, despite lack of quantitative transferability between FTDC and OASIS-3 datasets, our results in Fig.~\ref{vnn_hc_eig_fig} showed that the VNNs that were transferred from FTDC100 to OASIS-3 and the VNNs trained on OASIS-3 exploited similar eigenvectors of the anatomical covariance matrix to form predictions for chronological age for HC group.  Here, we explore whether the regional interpretability provided by VNNs for $\Delta$-Age depends on the principal components of the anatomical covariance matrix~$\hat\bC_{148}^{\sf AD+}$.

 \subsubsection{Regional residuals derived from VNNs trained on OASIS-3 are correlated with principal components of anatomical covariance matrix}
We started by investigating the relationship between regional residuals derived from VNNs trained on OASIS-3 and the principal components of $\hat\bC_{148}^{\sf AD+}$. For this purpose, we evaluated the inner product of normalized residual vectors (norm = 1) obtained from VNNs trained on OASIS-3 and the eigenvectors of the covariance matrix $\hat\bC_{148}^{\sf AD+}$ for the individuals in AD+ group to determine whether any specific eigenvectors (principal components) of $\hat\bC_{148}^{\sf AD+}$ were instrumental to recover the findings in Fig.~\ref{roi_AD}b. The normalized residual vector is denoted by $\bar\br_{\sf AD+}$. For every individual, the mean of the absolute value of the inner product $|\!<\!\bar\br_{\sf AD+}, \bv_i\!>\!|$ (where $\bv_i$ is the $i$-th eigenvector of $\hat\bC_{148}^{\sf AD+}$) was evaluated for the 100 VNN models that were used to derive the results in Fig.~\ref{roi_AD}. Figure~\ref{corr_eig_plots_oasis}a plots the mean inner product for eigenvectors associated with $50$ largest eigenvalues of $\hat\bC_{148}^{\sf AD+}$. The three largest mean correlations with the regional residuals in AD+ group were observed for the fourth principal component of $\hat\bC_{148}^{\sf AD+}$ (${|\!<\!\bar\br_{\sf AD+}, \bv_4\!>\!| = 0.6\pm 0.006}$), third principal component ($|\!<\!\bar\br_{\sf AD+}, \bv_3\!>\!| = 0.384\pm 0.007$), and the first principal component  ($|\!<\!\bar\br_{\sf AD+}, \bv_1\!>\!| = 0.294\pm 0.0003$). These principal components are plotted on a brain template in Fig.~\ref{corr_eig_plots_oasis}b. Interestingly, the largest association was observed for the fourth eigenvector of $\hat\bC_{148}^{\sf AD+}$, which spanned the subcallosal and medial frontal regions in the right hemispheres. The third eigenvector spanned the subcallosal region in the right hemisphere and temporal pole and parahippocampal region in both hemispheres. The first principal component of $\hat\bC_{148}^{\sf AD+}$ spanned brain regions similar to that by the first principal component of $\hat\bC_{148}$ (Fig.~\ref{vnn_hc_eig_fig}b). These observations suggested that the observed regional effects in subcallosal, parahippocampal and temporal pole regions were the most significant AD-related contributors to the observed elevated $\Delta$-Age effect in Fig.~\ref{roi_AD}. These brain regions were also implicated as contributors to elevated brain age in AD in an independent dataset in Appendix~\ref{explor}.


\begin{figure}[!htbp]
  \centering
  \includegraphics[scale=0.43]{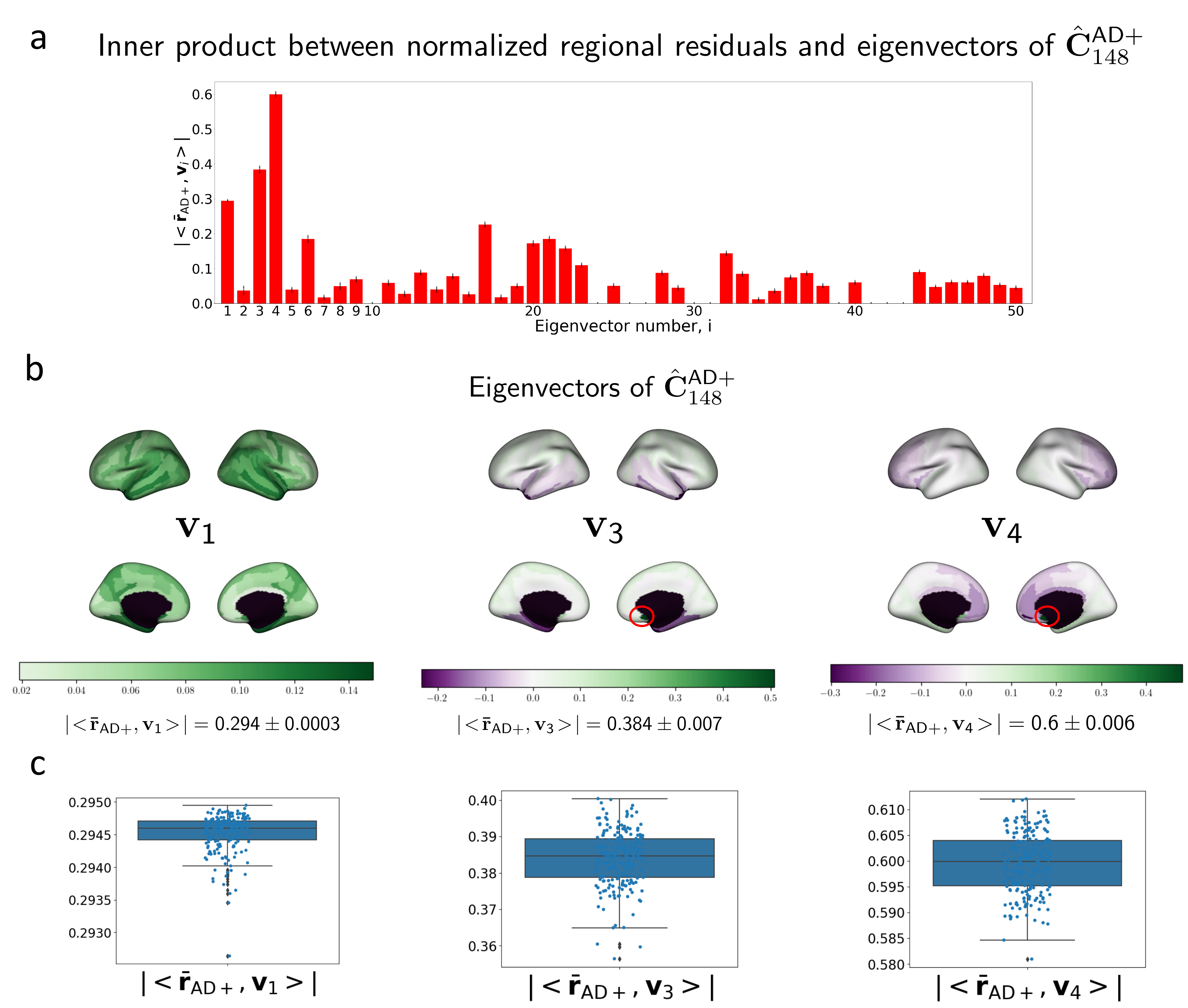}
   \caption{{\bf Mean inner product between the normalized vector of regional residuals (norm = 1) of VNN outputs (VNNs trained on OASIS-3) obtained from AD+ group and the eigenvectors of $\hat\bC_{148}^{\sf AD+}$ (covariance matrix of combined HC and AD+ group), and eigenvector plots.} Panel~{\bf a} illustrates a bar plot for $|\!<\!\bar\br_{\sf AD+},\bv_i\!>\!|$ for $i\in \{1,\dots, 50\}$, where $\bv_i$ is the $i$-th eigenvector of covariance matrix $\hat\bC_{148}^{\sf AD+}$ associated with its $i$-largest eigenvalue. The bars are evaluated from the mean of  $|\!<\!\bar\br_{\sf AD+},\bv_i\!>\!|$ obtained for individuals in the AD+ group (results for eigenvectors associated with coefficient of variation of $|\!<\!\bar\br_{\sf AD+},\bv_i\!>\!|$ larger than $30\%$ across the AD+ group excluded). For every individual in AD+ group, the associations of its regional residuals with eigenvectors of $\hat\bC_{148}^{\sf AD+}$ were evaluated as the mean of those over $100$ nominal VNN models (trained on the OASIS-3 dataset).  The eigenvectors associated with top three largest values for $|\!<\!\bar\br_{\sf AD+},\bv_i\!>\!|$ are plotted on the brain template in Panel~{\bf b}. Subcallosal region in the right hemisphere was associated with the element with the largest magnitude in $\bv_3$ and $\bv_4$ and is highlighted with a red circle in the corresponding plots. Panel~{\bf c} displays the boxplots for the distributions of $|\!<\!\bar\br_{\sf AD+},\bv_i\!>\!|$ for $\bv_1$, $\bv_3$, and $\bv_4$ across the AD+ group.}
   \label{corr_eig_plots_oasis}
\end{figure}

\subsubsection{Regional residuals derived from VNNs transferred from FTDC100 to OASIS-3 are correlated with principal components of $\hat\bC_{148}^{\sf AD+}$}


For this set of experiments, we evaluated the regional residuals for all individuals in the OASIS-3 dataset using VNNs that were trained on FTDC100 dataset and transferred to process OASIS-3. Figure~\ref{corr_eig_plots_ftdc_oasis} displays the bar plot for the mean of the inner product between the regional residuals for AD+ group and first $30$ eigenvectors of $\hat\bC_{148}^{\sf AD+}$. Similar to Fig.~\ref{corr_eig_plots_oasis}a, the regional residuals derived from VNNs that were transferred from FTDC100 to OASIS-3 had the largest correlations with the first, third, and fourth eigenvectors of the anatomical covariance matrix $\hat\bC_{148}^{\sf AD+}$ (${|\!<\!\bar\br_{\sf AD+}, \bv_4\!>\!| = 0.56\pm 0.013}$, ${|\!<\!\bar\br_{\sf AD+}, \bv_3\!>\!| = 0.362\pm 0.014}$, ${|\!<\!\bar\br_{\sf AD+}, \bv_1\!>\!| = 0.29\pm 0.004}$ ). These observations implied that the VNN models trained on FTDC100 dataset were able to leverage the relevant eigenvectors for brain age prediction in OASIS-3 dataset in the formation of regional residuals. From the findings in Fig.~\ref{corr_eig_plots_oasis}a and Fig.~\ref{corr_eig_plots_ftdc_oasis}, we can conclude that the VNN models trained to predict chronological age in both OASIS-3 and FTDC100 were able to leverage the principal components of $\hat\bC_{148}^{\sf AD+}$ that were associated with the regional profile of elevated $\Delta$-Age in Fig.~\ref{roi_AD}. The first, third, and fourth eigenvectors of $\hat\bC_{148}^{\sf AD+}$ were also the three most significantly associated with the regional residuals derived from VNNs that were transferred from FTDC300 or FTDC500 datasets to OASIS-3 dataset. The results in this context are included in Fig.~\ref{inner_prod_reg_profile} in Appendix~\ref{inner_prod_all}.

\begin{figure}[t]
  \centering
  \includegraphics[scale=0.14]{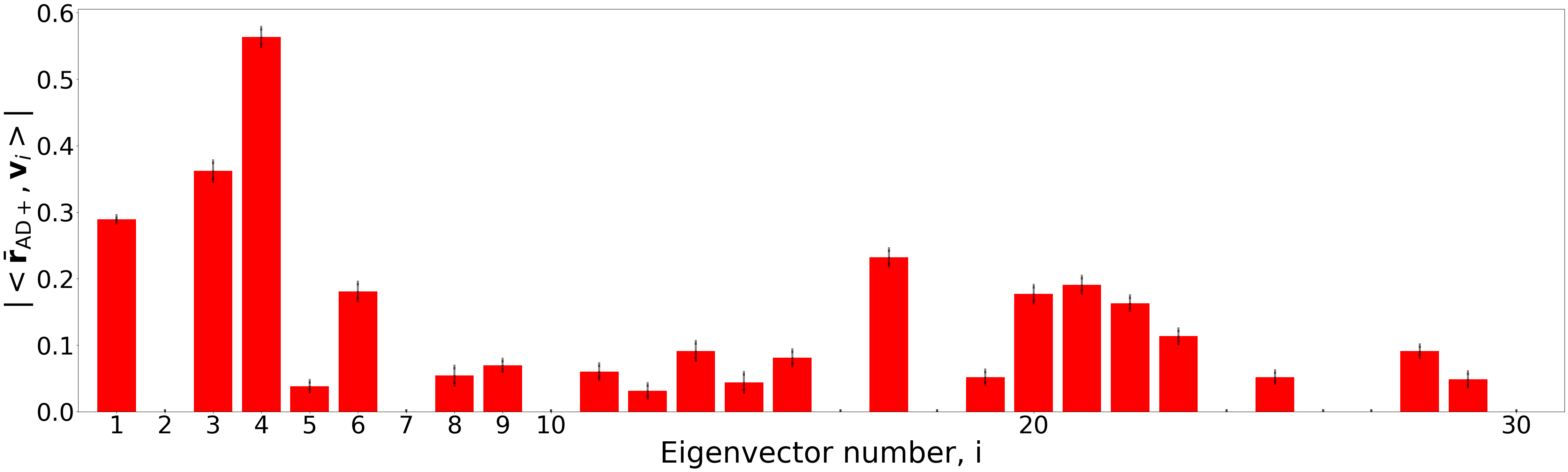}
   \caption{{\bf Mean inner product between the normalized vector of regional residuals (norm = 1) of VNN outputs (transferred from FTDC100 to OASIS-3) obtained from AD+ group and the eigenvectors of $\hat\bC_{148}^{\sf AD+}$ (covariance matrix of combined HC and AD+ group).} This figure illustrates a bar plot for $|\!<\!\bar\br_{\sf AD+},\bv_i\!>\!|$ for $i\in \{1,\dots, 30\}$, where $\bv_i$ is the $i$-th eigenvector of covariance matrix $\hat\bC_{148}^{\sf AD+}$ associated with its $i$-largest eigenvalue. The bars are evaluated from the mean of  $|\!<\!\bar\br_{\sf AD+},\bv_i\!>\!|$ obtained for individuals in the AD+ group (results for eigenvectors associated with coefficient of variation of $|\!<\!\bar\br_{\sf AD+},\bv_i\!>\!|$ larger than $30\%$ excluded). For every individual in AD+ group, the association of its regional residuals with eigenvectors of $\hat\bC_{148}^{\sf AD+}$ were evaluated over $100$ nominal VNN models (trained on the FTDC100 dataset). }
   \label{corr_eig_plots_ftdc_oasis}
\end{figure}

\begin{figure}[!htbp]
  \centering
  \includegraphics[scale=0.5]{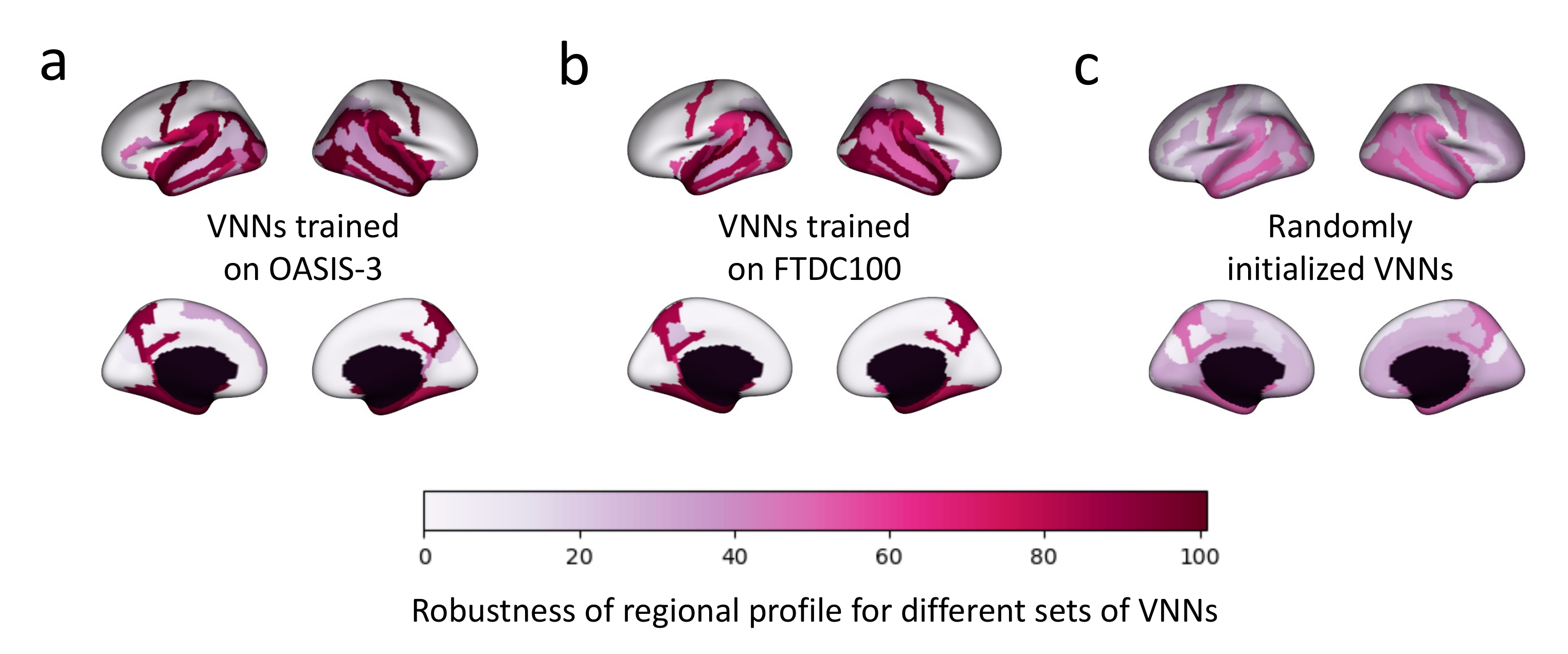}
   \caption{{\bf Regional profiles derived from the robustness of regional residuals being elevated for AD+ group with respect to HC group in OASIS-3 using different sets of VNNs.} Panel~{\bf a} projects the robustness of the regional residuals being elevated for AD+ with respect to HC group across the 100 VNN models on a brain template, where the VNNs were trained on the OASIS-3 dataset. The number of VNN models for which a brain region was deemed to have a significantly higher regional residual for AD+ group with respect to HC group quantifies the robustness of that brain region as a contributor to the observed elevated $\Delta$-Age in AD+ with respect to HC. Panel~{\bf b} displays the regional profiles obtained from the analysis of regional residuals for AD+ and HC groups, where the regional residuals were derived from $100$ VNNs trained to predict chronological age for HC group in FTDC100. Panel~{\bf c} displays the corresponding results derived from $100$ VNNs that were randomly initialized. }
   \label{oasis_ftdc_transfer}
\end{figure}
\subsubsection{Regional profiles for brain regions with elevated regional residuals in AD+ group are qualitatively preserved when VNNs are transferred
from FTDC100 to OASIS-3}
Next, we checked whether the consistency in correlations with principal components in Fig.~\ref{corr_eig_plots_oasis}a and Fig.~\ref{corr_eig_plots_ftdc_oasis} resulted in qualitative consistency in the regional profiles derived from the VNNs trained on FTDC100 dataset for regions associated with elevated regional residuals in AD+ group with respect to HC group in OASIS-3. Figure~\ref{oasis_ftdc_transfer}b illustrates the projection of the robustness of the observed significantly higher regional residuals for AD+ group with respect to HC group on a brain template, where the robustness was evaluated using $100$ nominal VNN models in a similar fashion as for the results in Fig.~\ref{roi_AD}b. For clarity, the regional profile for elevated $\Delta$-Age obtained from the VNNs that were trained on OASIS-3 dataset have been included in Fig.~\ref{oasis_ftdc_transfer}a. 

The regional profiles obtained using VNNs trained on FTDC100 in Fig.~\ref{oasis_ftdc_transfer}b were largely consistent with that in Fig.~\ref{oasis_ftdc_transfer}a. This observation implied that the VNNs trained on FTDC100 dataset possessed the ability to extract regional profiles characteristic of accelerated aging in OASIS-3 dataset. This observation could be attributed to the fact that the VNNs trained on FTDC100 were able to exploit similar principal components of $\hat\bC_{148}^{\sf AD+}$ as the VNNs that were trained on OASIS-3 dataset.

By training a VNN to predict chronological age in healthy individuals, we aim to gain the information about healthy aging, which is expected to be instrumental in detecting patterns of accelerated aging. Our results thus far have shown that the markers of accelerated aging identified by VNNs trained in this manner can be linked to the principal components of the anatomical covariance matrix irrespective of quantitative transferability. Thus, we conjecture that training the VNNs to predict chronological age helps fine tune their ability to manipulate the input data using principal components that were relevant to the elevated $\Delta$-Age effect in AD+ group. To further support this conjecture, we evaluated the regional profiles for brain regions that exhibited elevated regional residuals for AD+ group with respect to HC group in OASIS-3 using 100 VNNs that were randomly initialized (i.e., not trained whatsoever) and had the same architecture as the VNNs that were trained on OASIS-3. Figure~\ref{oasis_ftdc_transfer}c demonstrates the regional profiles derived from randomly initialized VNNs on OASIS-3 dataset. Figure~\ref{oasis_ftdc_transfer}c shows the robustness of the elevated regional residuals for AD+ group with respect to HC group was severely diminished with respect to the parallel results in Fig.~\ref{oasis_ftdc_transfer}a and Fig.~\ref{oasis_ftdc_transfer}b.


Our experiments here have demonstrated that the ability of VNNs to extract information about accelerated aging from cortical thickness data relies on their ability to exploit the eigenvectors or principal components of $\hat\bC_{148}^{\sf AD+}$ and not necessarily the performance on the task of predicting chronological age of healthy individuals. Thus, training the VNNs to predict chronological age in HC group helped fine tune their parameters to exploit the eigenvectors or principal components of $\hat\bC_{148}^{\sf AD+}$ relevant to $\Delta$-Age in AD+ group. The observations above facilitate the decoupling of the task of brain age prediction from the objective of achieving near perfect performance on the task of chronological age prediction in the HC group.

\noindent
{\bf Additional Experiments}: The associations between regional residuals and eigenvectors of $\hat\bC_{148}^{\sf AD+}$ and the regional profiles for brain regions with elevated regional residuals in AD+ group when VNNs were transferred from FTDC300 or FTDC500 are included in Fig.~\ref{inner_prod_reg_profile} in Appendix~\ref{inner_prod_all}. The results in Appendix~\ref{inner_prod_all} further demonstrate that the most significant correlations observed in Fig.~\ref{vnn_hc_eig_fig}a for VNNs trained on OASIS-3 were largely preserved when VNNs were transferred from FTDC300 or FTDC500 datasets to OASIS-3. Additional experiments in Appendix~\ref{el_stab} help establish the stability of the regional profiles in Fig.~\ref{roi_AD}b to perturbations in the covariance matrix~$\hat\bC_{148}^{\sf AD+}$ and demonstrate that the VNN models used here were not overfit on the composition of the population used to estimate the anatomical covariance matrix. In Appendix~\ref{vnn_adaptive}, we report that improving the performance of VNNs on the chronological age prediction task using an adaptive readout may penalize the regional interpretability offered by VNNs in the context of brain age (besides sacrificing the scale-free characteristic and the transferability guarantees for VNNs).

\section{Discussion}\label{discussion}
Graph convolution operator on covariance matrix, termed as a coVariance filter, forms the backbone of VNN architecture. The coefficients of the coVariance filter characterize its ability to manipulate the data according to the eigenspectrum of the covariance matrix to achieve a learning objective. Thus, statistical inference using VNNs draws similarities with PCA-driven statistical approaches. However, PCA conventionally operates within the feature space of a given dataset and hence, does not provide any notion of similarity between principal components of datasets with different number of features. In this paper, we have studied the key property of transferability of VNN models, which allows VNNs to be transferable between datasets with similar characteristics but different number of features. The notion of similarity between datasets consisting of different number of features is borrowed from the existing theory of graphons that studies limits of dense graphs~\cite{lovasz2012large}. Specifically, our theoretical results show that if there exists a sequence of covariance matrices that converges to a continuous limit object in the asymptote of infinite number of features, the VNNs can be transferred between any two covariance matrices of such a sequence for statistical inference. The underlying theoretical results rely on the convergence of the eigenspectrum of a continuous approximation of covariance matrices, which result in convergence of the coVariance filter outputs for covariance matrices belonging to a converging sequence, and subsequently, the convergence of VNN outputs. Our experiments pertain to dense anatomical covariance matrices and therefore, graphon model-based analyses were certainly appropriate to study transferability of VNNs. We also note that the parallel results on transferability of GNNs in~\cite{ruiz2021graphon} are restricted to graphs with no self-loops or unlabeled graphs, and these results can be recovered from Theorem~\ref{transferthm} when we have $\zeta = 1$ (i.e., all features in the dataset have equal variance). Furthermore, we remark that sparse covariance matrices are also of practical interest as they can help manage computational complexity~\cite{bien2011sparse}. Therefore, studying VNN transferability over sparse covariance matrices is a future direction of interest. 

We also observed that functions of VNN outputs were correlated with several eigenvectors of the underlying covariance matrix (Fig.~\ref{corr_eig_plots_oasis}), thus, validating our hypothesis based on Lemma~\ref{lm1} that VNNs manipulate the data according to the eigenspectrum of the covariance matrix. Hence, information processing with VNNs draws similarities with PCA based analysis. Unlike PCA, VNNs are guaranteed to be stable to perturbations in the underlying covariance matrix (Theorem~\ref{thm_stability}). Previously, we have empirically validated the stability of regression performance of VNNs to perturbations in the covariance matrix in~\cite{sihag2022covariance}. Figure~\ref{oasis_stability} demonstrates that the stability of VNNs to perturbations in the covariance matrix also extended to the observations relevant to gauging the interpretability of VNN outputs in the context of brain age.

From a data analytics perspective, we empirically showed that VNNs were able to extract information about chronological age from cortical thickness features of healthy controls and this information was transferable with minimal difference across cortical thickness datasets curated according to different scales of Schaefer's atlas. These observations were predicted by our theoretical results as the covariance matrices derived from cortical thickness features at different scales of Schaefer's atlas formed a converging sequence. Although we used Schaefer's atlas to demonstrate the transferability of VNNs within the scope of our theoretical results, our findings are not restricted to Schaefer's atlas and can readily be extended to any other brain atlases that accommodate multi-scale parcellations. We also note that this transferability of performance seemingly came at the cost of performance. Specifically, in Appendix~\ref{vnn_adaptive}, we report that the regression performance of VNNs could be improved significantly using adaptive readouts. Use of adaptive readouts sacrifices end-to-end transferability and scale-free aspect of VNNs if the number of learnable features in the readout function is dependent on the dimensionality of the data. Therefore, further study of VNNs with adaptive readouts may be necessary for performance-oriented applications.

Despite growing interest in similar multi-scale datasets in neuroscience~\cite{zeighami2021association,luppi2021combining, royer2022open}, the analyses strategies that optimally identify or leverage the redundancy in such datasets across different scales are currently lacking. In this context, VNNs also provide a novel analysis framework that can readily transfer across multiple scales. Analyses on datasets curated according to different brain atlases also revealed the limits of transferability of VNNs (Table~\ref{transfer_tbl1} and Table~\ref{transfer_tbl2}). For instance, the VNNs trained on cortical thickness features from FTDC100 datasets (Schaefer's atlas) did not retain regression performance on OASIS-3 dataset. These observations imply that VNNs may need to be augmented with mappings between brain atlases to optimally generalize their performance on chronological age prediction beyond the multi-scale cortical thickness datasets.

\textcolor{black}{In our experiments, the VNNs were trained on datasets extracted from a population of older adults (age $>$ 50 years). However, the changes in the brain are not uniform across the anatomical regions for different age groups - some regions grow more in development~\cite{lenroot2006brain, reardon2018normative} and atrophy more in aging than other regions~\cite{coffey1992quantitative, raz1997selective}. Thus, factors related to the impacts of development and neurodegeneration on the brain may render the VNNs used in this paper inadequate for capturing aging related information from cortical thickness data for a population with a broader or a different age profile than the datasets studied here. From a technical perspective, such biological factors may lead to violation of the assumptions necessary for the theoretical guarantees on the transferability of VNNs when assessing transference between neuroimaging datasets consisting of healthy controls from different age groups. Moreover, similar considerations may be necessary for longitudinal analyses where the VNNs trained on the data from a specific age group may not be equipped to perform accurate inference at all stages of the lifespan of an individual.}


While our focus in this paper has been strictly on cortical thickness datasets that were derived from structural MRI, there exist multiple neuroimaging modalities, such as diffusion MRI, functional MRI, and electroencephalogram (EEG) that provide distinct insights into brain activity and structure. These modalities typically have smaller signal-to-noise ratio (SNR) as compared to structural MRI~\cite{polzehl2016low, bennett2010reliable}. Hence, it is imperative to investigate the transferability of VNNs over different modalities of neuroimaging in order to gauge widespread applicability of their transferability across multi-scale datasets.




Brain age prediction task provides a unique statistical challenge as the brain age has no ground truth and the machine learning models used must be able to capture the changes driven by age-related neurodegeneration that lead to accelerated aging. In this context, our study of VNNs provides a foundational contribution to the methodology of brain age prediction. Our results showed that besides facilitating transferability, the non-adaptive readout function was instrumental to characterizing the regional interpretability of VNN models in the brain age prediction task. Specifically, the final regression output could be written as a mean of entities of an $m$-dimensional vector (as in~\eqref{interpret}) and therefore, allowed us to evaluate the contribution of each brain region to the final regression output. Thus, VNN models could also provide a regional profile if an elevated brain age was observed in morbidity. Furthermore, the regional profiles extracted by VNN models were correlated with certain principal components of the anatomical covariance matrix. Importantly, training the VNNs to predict chronological age helped fine tune their parameters to exploit the relevant principal components of the anatomical covariance matrix. Thus, the role of the age-bias correction step was restricted to projecting the VNN outputs onto a space where one could observe biological aging with respect to the chronological age from a layman's perspective. 

Interestingly, the VNNs transferred from FTDC datasets to OASIS-3 datasets were able to exploit the same brain age-related principal components of the anatomical covariance matrix of OASIS-3 dataset as the VNNs trained on OASIS-3 dataset, which resulted in consistent regional profiles for the brain regions with elevated regional residuals in the AD+ group for the two sets of VNNs. This observation further implied that the regional profiles identified by VNNs were robust to various factors characterizing the heterogeneity across FTDC and OASIS-3 datasets (such as distinct quality of neuroimaging data, contrasts across scanners in MRI acquisition, and different cohort compositions). Thus, our findings suggest that the convolution operation modeled by coVariance filters of VNNs provides a useful analytic tool to derive interpretable, spatially robust, and reproducible information from cortical thickness datasets that is relevant for brain age prediction.

The results in Appendix~\ref{vnn_adaptive} showed that improving the performance on the chronological age prediction task may not necessarily improve brain age prediction, either in terms of higher $\Delta$-Age or better characterization of brain regions perceived to be the contributors to elevated $\Delta$-Age. Brain age is a coarse metric that is expected to be elevated as compared to chronological age in various neurodegenerative conditions but may not have enough discriminability to discriminate between them. Therefore, by associating $\Delta$-Age with a regional profile, VNNs also provide a feasible tool to distinguish pathologies if the distributions of $\Delta$-Age for them are overlapping.

Currently, there is no clear benchmark for an acceptable performance of a machine learning model on the chronological age prediction task in order for it to predict brain age as most existing approaches in this domain lack interpretability. For instance, the study in ~\cite{butler2021pitfalls} recognizes brain age prediction using `loosely fitted' models on chronological age as a potential pitfall in the analyses while another reports better brain age prediction using a `moderately' fitted deep learning model~\cite{bashyam2020mri}. The study in~\cite{yin2023anatomically} associates voxel-wise interpretability to brain age but the link between the interpretability patterns and accuracy of chronological age prediction by their approach is not discussed. One motivation to perform brain age prediction with models that have high quality fit on the chronological age is presumably to mitigate the impact of age-bias correction step on the final result~\cite{butler2021pitfalls}. This criticism appears to be justified when brain age estimation approaches cannot isolate the abnormalities that lead them to predict elevated $\Delta$-Age. However, the simplicity of VNN models allows us to analyze the deviations in the intermediate steps of brain age estimation before age-bias correction is applied. In this context, we note that VNNs with unweighted mean as a readout function could not achieve perfect prediction of chronological age for healthy controls and yet, their brain age predictions in individuals with AD were associated with robust, transferable regional profiles. 
Thus, VNNs seem to be methodologically adept at finding neurodegeneration-driven factors that contribute to elevated $\Delta$-Age. The insights provided above in the context of VNNs and brain age are simply infeasible for brain age prediction approaches if they rely on complex and non-transparent deep learning models despite having millions of learnable parameters.


Based on the discussion thus far, we conclude that learning to predict chronological age in healthy controls is instrumental for VNNs to provide interepretability to elevated $\Delta$-Age. However, a near-prefect chronological age prediction for healthy controls by itself may not be a determinant of the quality of brain age prediction in neurodegeneration. From a broader perspective, brain age prediction even for healthy controls is a complex task due to various factors that can contribute to accelerated aging in the absence of an adverse health condition~\cite{ronan2016obesity, mareckova2020maternal,westlye2012effects}.    While we do not claim that the VNNs provide the `best' brain age prediction on any metric, our experiments have convincingly demonstrated that VNNs are able to extract the sufficient, robust information from cortical thickness datasets for anatomically interpretable and justified brain age prediction in neurodegeneration. It is possible that VNN based brain age predictions and associated regional interpretations could further be optimized or improved upon in some manner. For instance, incorporating the metrics of aging from DNA methylation aging~\cite{horvath2013dna} in the training of VNNs is a promising future direction that can help expand our understanding of aging.

Existing studies, including this paper, fall short at concretely defining the notion of optimal brain age prediction. However, we note that fine-tuning the fit of machine learning models on chronological age in order to gain the desired $\Delta$-Age in neurodegeneration or correlations of $\Delta$-Age with auxiliary measures potentially makes such models overfit on the $\Delta$-Age itself. Hence, a larger focus is needed on principled statistical approaches for brain age prediction that can capture the factors that lead to accelerated aging. Locally interpretable and theoretically grounded deep learning models such as VNNs can provide a feasible, promising future direction to build statistically and conceptually legitimate brain age prediction models in broader contexts.

\section{Data and Code Availability}\label{data_code}
MRI and clinical data for individuals in the FTDC datasets may be requested through \url{https://www.pennbindlab.com/data-sharing} and upon review by the University of Pennsylvania Neurodegenerative Data Sharing Committee, access will be granted upon reasonable request. OASIS-3 dataset is publicly available and hosted on~\url{central.xnat.org}. Code for demonstrating transferability of VNNs and brain age evaluation is available at~\url{https://github.com/pennbindlab/VNN_Brain_Age}. Requests for details regarding IDs of individuals in OASIS-3 and source data for all figures may be sent to \url{sihags@pennmedicine.upenn.edu}. 

\section{Acknowledgements}
The MRI data for FTDC datasets were provided by the Penn Frontotemporal Degeneration Center (NIH AG066597) and Penn Institute on Aging. Cortical thickness data
were made available by Penn Image Computing and Science Lab at University of Pennsylvania. OASIS-3 data were provided by Longitudinal Multimodal Neuroimaging: Principal Investigators: T. Benzinger, D. Marcus, J. Morris; NIH P30 AG066444, P50 AG00561, P30 NS09857781, P01 AG026276, P01 AG003991, R01 AG043434, UL1 TR000448, R01 EB009352.

\clearpage

\begin{appendices}

\section{An Abstract Overview of VNN-based Brain Age Prediction}\label{layman_vnn}
Figure~\ref{brain_age_tut} provides an abstract overview of the general procedure of evaluating brain age using machine learning (ML) models. From Fig.~\ref{brain_age_tut}, we note that if the ML model is a black box, it may be infeasible to capture the contributors to elevated age-gap in Step 3. Furthermore, in this context, it is also unclear whether age-bias correction step influences final $\Delta$-Age prediction through some statistical artifact~\cite{butler2021pitfalls}. Hence, it can be desirable to minimize the role of age-bias correction in $\Delta$-Age evaluation by selecting an ML model that achieves a near perfect fit on chronological age of healthy controls in Step 1. However, there is no guarantee that achieving an `perfect fit' on true age of healthy controls will enable the ML model to capture the impact of neurodegeneration in individuals with neurodegeneration.
\begin{figure}[!htbp]
  \centering
  \includegraphics[scale=0.45]{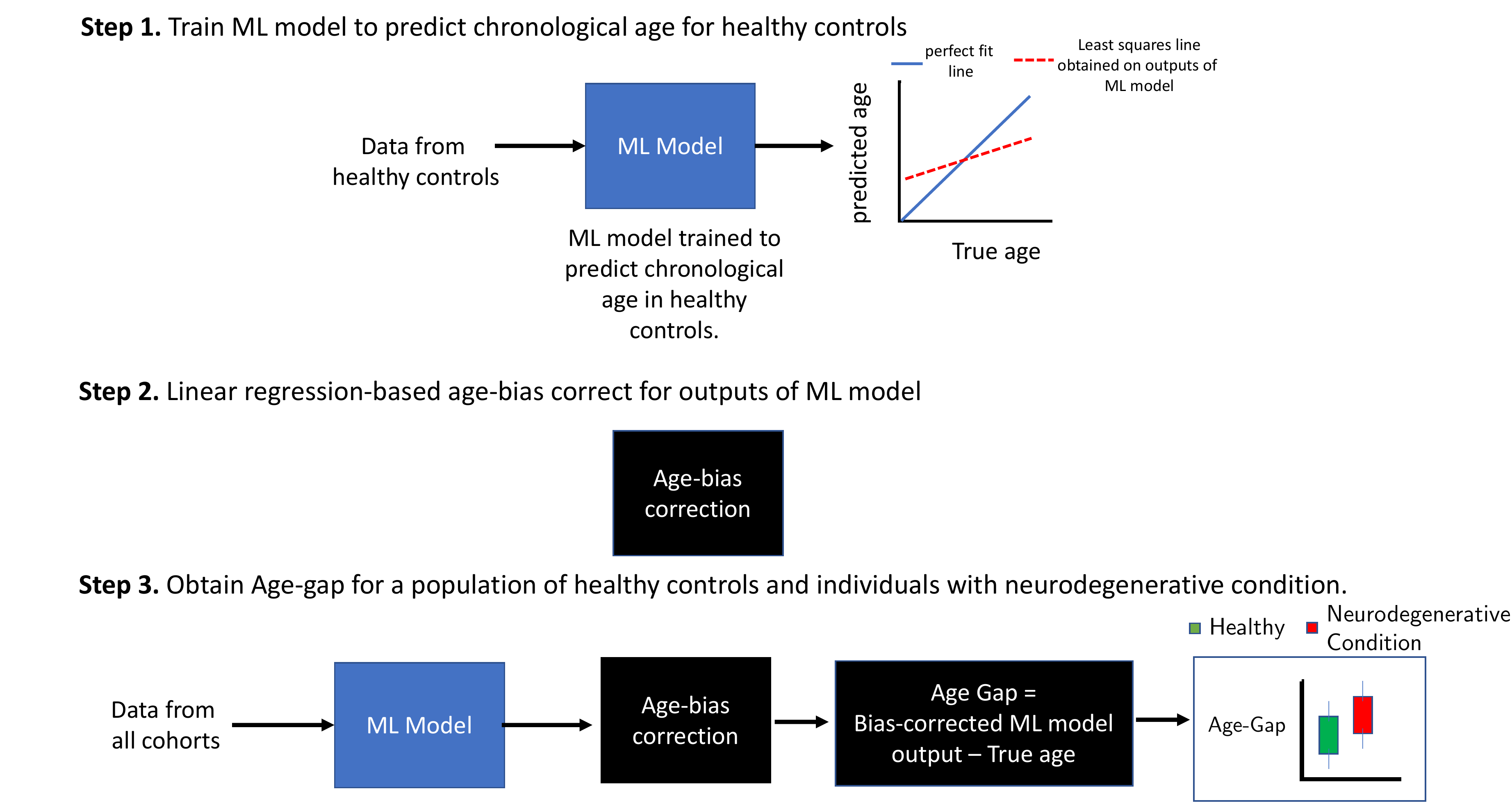}
   \caption{{\bf A general overview of brain age evaluation using machine learning algorithms in the existing literature.} {\bf Step 1} consists of training a machine learning (ML) model to predict chronological age (true age) for healthy controls. If the correlation between predicted age and true age is smaller than $1$, an age-bias exists in ML model outputs as the age for older individuals tends to be under-estimated and that for younger individuals tends to be over-estimated. To correct for this bias, a linear regression based model is applied on the ML model outputs in {\bf Step 2}. Under the hypothesis that ML model can capture accelerated aging in age-related neurodegeneration, it is expected that $\Delta$-Age for individuals with neurodegeneration will be significantly higher than those of healthy controls ({\bf Step 3}).}
   \label{brain_age_tut}
\end{figure}

VNNs allow us to analyze the contribution of each feature (brain region) to the final output. Hence, by analyzing the elevations in contributions of different brain regions via studying group differences in regional residuals, we are able to characterize the brain regions that contribute to accelerated aging (or larger $\Delta$-Age) in individuals with age-related neurodegeneration (Fig.~\ref{brain_age_vnn_tut}). Thus, we can verify that VNNs captured neurodegeneration-driven effects that eventually led to elevated $\Delta$-Age for an individual. Our experiments show that VNNs do not obtain a perfect fit on chronological age of healthy individuals. Hence, age-bias correction is important to appropriately project the VNN model outputs via a linear model into an appropriate space such that a clinician can observe an elevated $\Delta$-Age effect in individuals with neurodegenerative condition (AD in this paper). Based on these observations, we remark that VNNs provide an interpretable framework for brain age prediction. 

\begin{figure}[!htbp]
  \centering
  \includegraphics[scale=0.5]{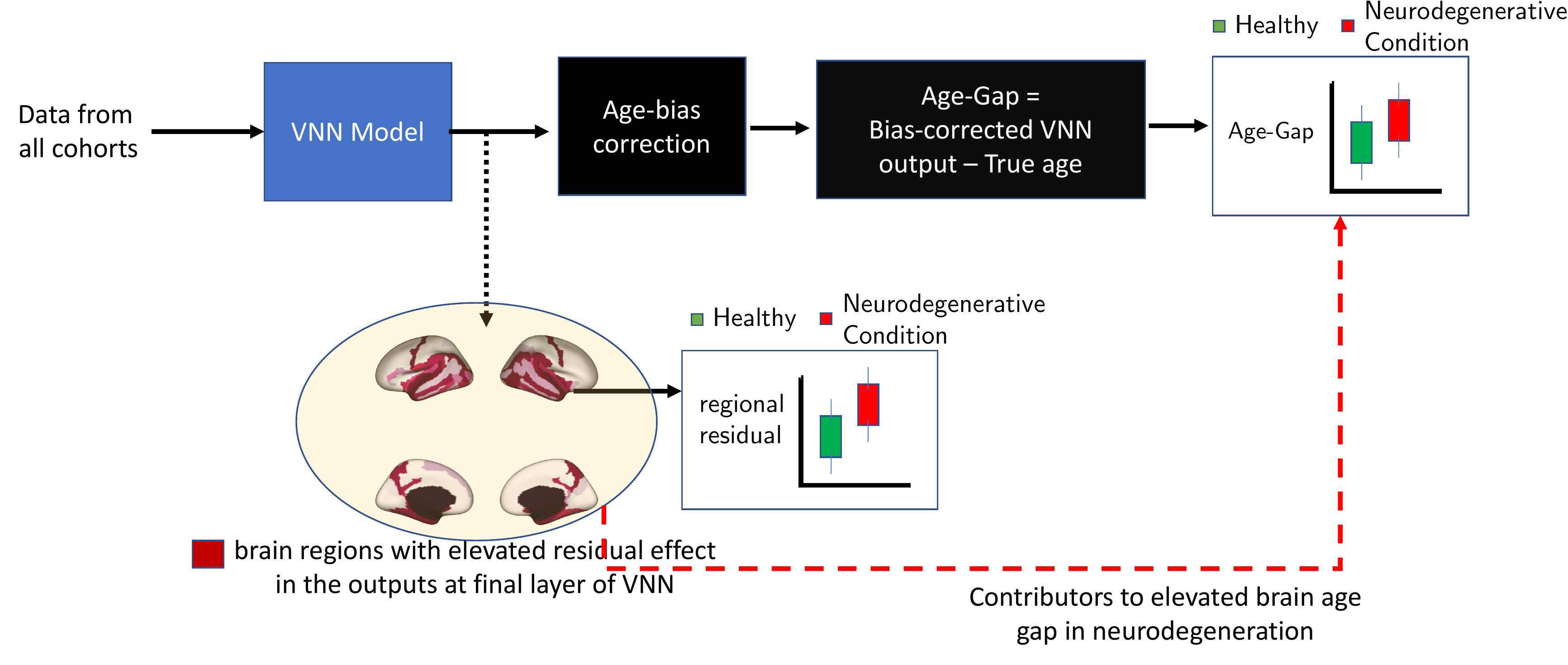}
   \caption{{\bf Interpretability offered by VNNs in brain age prediction.} By analyzing the final layer outputs of VNNs, we can isolate brain regions that have larger regional residuals for individuals with AD with respect to healthy controls. Furthermore, the elevated regional residuals in these brain regions eventually contribute to elevated $\Delta$-Age after age-bias correction.}
   \label{brain_age_vnn_tut}
\end{figure}

\clearpage
\section{Proof of Theorem~\ref{thm_stability}}\label{pf_thm1}
We reprise the proof from~\cite{sihag2022covariance}. To start with, we note that the coVariance filters with respect to $\hat\bC$ and $\bC$ are given by
\begin{align}
    \bH(\hat\bC) = \sum\limits_{k=0}^{m}h_k\hat\bC^k \quad \text{and}\quad  \bH(\bC) = \sum\limits_{k=0}^{m}h_k\bC^k\;.
\end{align}
We start by characterizing the perturbation of sample covariance matrix $\hat\bC$
 with respect to $\bC$ in Lemma~\ref{lm1}. To this end, we define
 \begin{align}
     \bE \dff \hat\bC - \bC\;,
 \end{align}
and $\bI_m$ as an $m\times m$ identity matrix. Also, the eigenvalue decomposition of sample covariance matrix $\hat\bC$ is given by
\begin{align}\label{eigensample}
    \hat \bC = \hat\bV \hat\Lambda \hat\bV^{\sf T}\;,
\end{align}
where $\hat\bV = [\hat\bv_1,\dots,\hat\bv_m]$ is the matrix constituted by orthonormal eigenvectors of $\hat \bC$ and $\hat\Lambda = {\sf diag}(\hat\lambda_1,\dots,\hat\lambda_m)$ is the diagonal matrix of eigenvalues of $\hat \bC$, such that, $\hat\lambda_1\geq \hat\lambda_2\dots\geq \hat\lambda_m$. Clearly, the eigenvalues and eigenvectors of $\hat\bC$ are estimates of the eigenvalues and eigenvectors of the true covariance matrix $\bC$. 
\begin{lemma}\label{lm}
Consider an ensemble covariance matrix $\bC$ with the eigendecomposition in~\eqref{sample_eig} and a sample covariance matrix $\hat\bC$ with the eigendecomposition in~\eqref{eigensample}. For any eigenvalue $\lambda_i > 0$ of $\bC$, the perturbation $\bE$ satisfies
\begin{align}
    \bE \bv_i = \beta_i\delta\bv_i + \delta\lambda_i \bv_i + ( \delta\lambda_i\bI_m -\bE) \delta\bv_i
\end{align}
where 
\begin{align}
    \beta_i \dff  (\lambda_i \bI_{m} - \bC),\quad \delta\bv_i \dff \hat\bv_i-\bv_i,\quad \delta\lambda_i \dff \hat\lambda_i-\lambda_i\;.
\end{align}

\end{lemma}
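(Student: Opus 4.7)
The identity is an exact algebraic rearrangement of the perturbed eigenvalue equation, so the plan is to write $\hat\bC \hat\bv_i = \hat\lambda_i \hat\bv_i$ directly in terms of the increments $\bE$, $\delta\bv_i$, and $\delta\lambda_i$, cancel the leading order using the ensemble eigen-equation $\bC\bv_i = \lambda_i \bv_i$, and then solve for $\bE\bv_i$. No norm estimate, Weyl-type bound, or spectral-gap argument is required at this step; the lemma is a deterministic identity.

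Concretely, substituting $\hat\bC = \bC + \bE$, $\hat\bv_i = \bv_i + \delta\bv_i$, and $\hat\lambda_i = \lambda_i + \delta\lambda_i$ into the sample eigenvalue equation and expanding both sides term by term yields
\begin{align*}
\bC\bv_i + \bC\,\delta\bv_i + \bE\bv_i + \bE\,\delta\bv_i \;=\; \lambda_i \bv_i + \lambda_i\,\delta\bv_i + \delta\lambda_i\,\bv_i + \delta\lambda_i\,\delta\bv_i.
\end{align*}
Cancelling the zeroth-order pair $\bC\bv_i = \lambda_i\bv_i$, moving $\bC\,\delta\bv_i$ and $\bE\,\delta\bv_i$ to the right-hand side, and isolating $\bE\bv_i$ produces
\begin{align*}
\bE\bv_i \;=\; (\lambda_i\bI_m - \bC)\,\delta\bv_i + \delta\lambda_i\,\bv_i + (\delta\lambda_i\bI_m - \bE)\,\delta\bv_i,
\end{align*}
which is the claim once one recognises $\beta_i = \lambda_i\bI_m - \bC$.

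There is essentially no obstacle to surmount: the entire derivation is a few lines of symbolic manipulation. The only point requiring care is the bookkeeping of the two second-order cross terms $\bE\,\delta\bv_i$ and $\delta\lambda_i\,\delta\bv_i$, which are deliberately grouped into the single bracketed factor $(\delta\lambda_i\bI_m - \bE)\,\delta\bv_i$; this grouping is what makes the identity convenient for downstream use. Specifically, in the broader argument for Theorem~\ref{thm_stability}, one inverts $\beta_i$ on the orthogonal complement of $\bv_i$ to express $\delta\bv_i$ in terms of $\bE$ and the relevant spectral gap of $\bC$, and combines this with standard Weyl-type control on $\delta\lambda_i$ to obtain the filter-level bound~\eqref{alpha_n}; the pointwise Lipschitz property of $\sigma(\cdot)$ and a layerwise telescoping argument then propagate this into the VNN-level bound~\eqref{vnn_stab}.
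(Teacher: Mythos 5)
Your proposal is correct and follows exactly the paper's own derivation: substitute the perturbed quantities into $\hat\bC\hat\bv_i = \hat\lambda_i\hat\bv_i$, cancel $\bC\bv_i = \lambda_i\bv_i$, and rearrange to isolate $\bE\bv_i$, grouping the cross terms into $(\delta\lambda_i\bI_m - \bE)\delta\bv_i$. The observation that this is a purely deterministic algebraic identity requiring no spectral-gap or probabilistic input is also consistent with how the paper uses it.
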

\begin{proof}

Note that from the definition of eigenvectors and eigenvalues, we have
\begin{align}\label{egic}
   \hat\bC \hat\bv_i = \hat\lambda_i\hat\bv_i\;.
\end{align}
We can rewrite~\eqref{egic} in terms of perturbations with respect to the ensemble covariance matrix $\bC$ and the outputs of its eigendecomposition as follows:
\begin{align}\label{prt}
   (\hat\bC -\bC)  (\bv_i + \delta\bv_i) + \bC (\bv_i + \delta\bv_i) = (\lambda_i + \delta\lambda_i)(\bv_i + \delta\bv_i)\;,
\end{align}
where we have used $\hat\lambda_i = \lambda_i + \delta\lambda_i$ and $\hat\bv_i = \bv_i + \delta\bv_i$. Using the fact that $\bC \bv_i = \lambda_i\bv_i$ and rearranging the terms in~\eqref{prt}, we have
\begin{align}\label{prt2}
    (\hat\bC -\bC) \bv_i = (\lambda_i \bI_{m} - \bC)\delta\bv_i + \delta\lambda_i  (\bv_i + \delta\bv_i) - (\hat\bC -\bC) \delta\bv_i\;.
\end{align}
By setting $\bE = \hat\bC -\bC$ and $\beta_i = \lambda_i \bI_{m} - \bC$, we can rewrite~\eqref{prt2} as
\begin{align}
    \bE\bv_i = \beta_i\delta\bv_i + \delta\lambda_i \bv_i + (\delta\lambda_i\bI_m - \bE) \delta\bv_i\;.
\end{align}
\end{proof}
\noindent
We next establish the first order approximation for $\hat\bC^k $ in terms of $\bC$ and $\bE$. The first order approximation of $\hat\bC^k$ is given by
\begin{align}\label{pf2}
    (\bC + \bE)^k = \bC^k + \sum\limits_{r=0}^{m} \bC^r \bE \bC^{k-r-1} + \tilde\bE\;,
\end{align}
where $\tilde \bE$ satisfies $\|\tilde \bE\| \leq \sum\limits_{r=2}^k {k\choose r}\|\bE\|^r\|\bC\|^{k-r}$. Using~\eqref{pf2}, we have
\begin{align}
    \bH(\hat\bC) - \bH(\bC) &= \sum\limits_{k=0}^{m} h_k [(\bC + \bE)^k - \bC^k]\;,\\
    &= \sum\limits_{k=0}^{m} h_k \sum\limits_{r=0}^{k-1} \bC^r \bE \bC^{k-r-1} + \tilde \bE\;,\label{pf3}
\end{align}
where $\tilde\bE$ satisfies $\|\tilde\bE\|^2 = {\cal O}(\|\bE\|^2)$~\cite{gama2020stability}. The focus of our subsequent analysis will be the first term in~\eqref{pf3}. For a random data sample $\bx = [x_1,\dots,x_m]^{\sf T}$, such that, $\|\bx\|_2 \leq 1$ and $\bx \in \mR^{m\times 1}$,  its Fourier transform with respect to $\bC$ is given by $\tilde\bx = \bV^{\sf T} \bx$, where $\tilde\bx = [\tilde x_1,\dots,\tilde x_m]^{\sf T}$~\cite{sihag2022covariance}. The relationship $\tilde \bx $ and $\bx$ can be expressed as
\begin{align}\label{pf5}
   \bx = \sum\limits_{i=1}^m \tilde x_i \bv_i\;.
\end{align}
Multiplying both sides in~\eqref{pf3} by $\bx$ and by leveraging~\eqref{pf5}, we get
\begin{align}
    [\bH(\hat\bC) - \bH(\bC) ]\bx &= \sum\limits_{k=0}^{m} h_k \sum\limits_{r=0}^{k-1} \bC^r \bE \bC^{k-r-1} \bx + \tilde \bE\bx\;,\\
    &= \sum\limits_{i=1}^m \tilde x_i \sum\limits_{k=0}^{m} h_k \sum\limits_{r=0}^{k=1} \bC^r \bE \bC^{k-r-1}  \bv_i + \tilde \bE\bx\;,\label{t1}\\
    &=  \sum\limits_{i=1}^m \tilde x_i \sum\limits_{k=0}^{m} h_k \sum\limits_{r=0}^{k-1} \bC^r\lambda_i^{k-r-1} \bE\bv_i + \tilde \bE\bx\;,\label{t2}
\end{align}
where we have used $\bC \bv_i = \lambda_i\bv_i$ in the transition from~\eqref{t1} to~\eqref{t2}. 
We focus only on the first term in~\eqref{t2} and leverage the result from Lemma~\ref{lm1} that expands $\bE \bv_i$ to get
\begin{align}\label{pf6}
    \sum\limits_{i=1}^m \tilde x_i \sum\limits_{k=0}^{m} h_k \sum\limits_{r=0}^{k-1} \bC^r\lambda_i^{k-r-1} \bE\bv_i &= \underbrace{ \sum\limits_{i=1}^m \tilde x_i \sum\limits_{k=0}^{m} h_k \sum\limits_{r=0}^{k-1} \bC^r\lambda_i^{k-r-1} \beta_i\delta\bv_i}_\text{Term 1} \nonumber\\
    & \enskip + \underbrace{ \sum\limits_{i=1}^m \tilde x_i \sum\limits_{k=0}^{m} h_k \sum\limits_{r=0}^{k-1} \bC^r\lambda_i^{k-r-1} \delta\lambda_i\bv_i }_\text{Term 2}\nonumber\\
    &\enskip +  \underbrace{ \sum\limits_{i=1}^m \tilde x_i \sum\limits_{k=0}^{m} h_k \sum\limits_{r=0}^{k-1} \bC^r\lambda_i^{k-r-1} ( \delta\lambda_i\bI_m -\bE)\delta\bv_i }_\text{Term 3}\;.
\end{align}
Next, we analyze term 1, term 2, and term 3 in~\eqref{pf6} separately. 

\noindent
\textbf{Analysis of Term 1 in~\eqref{pf6}}. In the analysis of term 1, we start by noting that
\begin{align}
    \beta_i &= \lambda_i\bI_m - \bC \;,\\
    &= \sum\limits_{j = 1}^m (\lambda_i - \lambda_j)\bv_j\bv_j^{\sf T}\;,\label{t11}\\
    &=  \bV (\lambda_i\bI_m - \Lambda)\bV^{\sf T}\;.\label{t12}
\end{align}
Using~\eqref{t12} and $\delta\bv_i = \bu_i - \bv_i$ in term 1 in~\eqref{pf6}, we have
\begin{align}
     \sum\limits_{i=1}^m \tilde x_i \sum\limits_{k=0}^{m} h_k \sum\limits_{r=0}^{k-1} \bC^r\lambda_i^{k-r-1} \bV (\lambda_i\bI_m - \Lambda)\bV^{\sf T} (\hat\bv_i - \bv_i)\;.\label{t13}
\end{align}
Using $\bC^r = \bV \Lambda^r \bV^{\sf T}$ in~\eqref{t13}, term 1 in~\eqref{pf6} is equivalent to
\begin{align}
    &\sum\limits_{i=1}^m \tilde x_i \sum\limits_{k=0}^{m} h_k \sum\limits_{r=0}^{k-1} \lambda_i^{k-r-1} \bV \Lambda^r (\lambda_i\bI_m - \Lambda)\bV^{\sf T} (\hat\bv_i - \bv_i)\;,\\
    &= \sum\limits_{i=1}^m \tilde x_i \bV \bL_i \bV^{\sf T} (\hat\bv_i - \bv_i)\;,\label{pf7}
\end{align}
where $\bL_i$ is a diagonal matrix whose $j$-th element is given by
\begin{align}
    [\bL_i]_j &= \sum\limits_{k=0}^{m} h_k \sum\limits_{r=0}^{k-1} (\lambda_i - \lambda_j) \lambda_i^{k-r-1} \lambda_j^r\;,\\
    &= \sum\limits_{k=0}^{m} h_k (\lambda_i - \lambda_j)\frac{\lambda_i^k - \lambda_j^k}{\lambda_i - \lambda_j}\;,\\
    & = \sum\limits_{k=0}^{m}h_k \lambda_i^k - \sum\limits_{k=0}^{m}h_k \lambda_j^k\;,\\
    &= h(\lambda_i) - h(\lambda_j)\;,\label{t15}
\end{align}
where $ h(\lambda_i)$ is the frequency response of the coVariance filter and is defined in~\eqref{vvf}. Therefore, we have $\bL_i = {\sf diag}([h(\lambda_i) - h(\lambda_j)]_j)$. Next, in~\eqref{pf7}, we note that 
\begin{align}\label{t14}
    \bV^{\sf T}(\hat\bv_i - \bv_i) = [\bv_1^{\sf T}(\hat\bv_i - \bv_i) , \cdots, \bv_m^{\sf T}(\hat\bv_i - \bv_i)]^{\sf T}\;.
\end{align}
Using~\eqref{t14} and~\eqref{t15} in~\eqref{pf7} and $\bv_j^{\sf T}\bv_i = 0, \forall j\neq i$, we deduce that the term 1 in~\eqref{pf6} is equivalent to
\begin{align}\label{t1f}
    \sum\limits_{i=1}^m \tilde x_i \bV \bL_i \bV^{\sf T} (\hat\bv_i - \bv_i) = \sum\limits_{i=1}^m \tilde x_i \bV \bJ_i\;,
\end{align}
where the $j$-th element of $\bJ_i$ is given by
\begin{align}
    [\bJ_i]_j = \begin{cases}
    &0\;, \quad \text{if }\enskip j = i\;,\\
    &(h(\lambda_i) - h(\lambda_j) )\bv_j^{\sf T} \hat\bv_i\;, \enskip \text{otherwise}
    \end{cases}\;.
\end{align}
For the stability analysis, we are interested in the norm of term 1. Therefore, by noting the equivalence between the term 1 in~\eqref{pf6} and~\eqref{t1f}, after taking the norm, we have
\begin{align}
    \left\lVert\sum\limits_{i=1}^m \tilde x_i \sum\limits_{k=0}^{m} h_k \sum\limits_{r=0}^{k-1} \bC^r\lambda_i^{k-r-1} \beta_i\delta\bv_i\right\rVert_2 &=\left\lVert\sum\limits_{i=1}^m \tilde x_i \bV \bJ_i\right\rVert_2 \;,\\
    & \leq \sqrt{m} \sum\limits_{i=1}^m |\tilde x_i| \max_{j, i\neq j} |h(\lambda_i)- h(\lambda_j)| |\bv_j^{\sf T} \hat\bv_i|\;.
\end{align}
Note that $\bv_j^{\sf T} \hat\bv_i$ is the inner product between the eigenvector $\bv_j$ of the ensemble covariance matrix $\bC$ and the eigenvector $\hat\bv_i$ of the sample covariance matrix $\hat\bC$. The bounds on $\bv_j^{\sf T} \hat\bv_i$ in terms of the number of data samples $n$ have been studied in the existing literature. Here, we leverage the result from~\cite[Theorem 4.1]{loukas2017close} to conclude that if ${\sf sgn}(\lambda_j-\lambda_i)2 \hat\lambda_j > {\sf sgn}(\lambda_j-\lambda_i)(\lambda_j-\lambda_i)$ for $\lambda_i\neq \lambda_j$, the condition
\begin{align}\label{term1r}
     \left\lVert\sum\limits_{i=1}^m \tilde x_i \sum\limits_{k=0}^{m} h_k \sum\limits_{r=0}^{k-1} \bC^r\lambda_i^{k-r-1} \beta_i\delta\bv_i\right\rVert_2 \leq \sqrt{m}\sum\limits_{i=1}^m |\tilde x_i|\max_{j, i\neq j} |h(\lambda_i)- h(\lambda_j)| \frac{2k_i}{n^{1/2  -\varepsilon}|\lambda_i - \lambda_j|}\;,
\end{align}
is true with probability at least $\left(1 - \frac{1}{n^{2\varepsilon}}\right)$ for some $\varepsilon \in (0,1/2]$, where $k_i = \Big(\mE[\|\bX\bX^{\sf T} \bv_i\|_2^2] - \lambda_i^2\Big)^{\frac{1}{2}}$. Furthermore, we note that the condition~$ {\sf sgn}(\lambda_j-\lambda_i)2 \hat\lambda_j > {\sf sgn}(\lambda_j-\lambda_i)(\lambda_j-\lambda_i)$ is satisfied with probability at least $1 - \frac{2k_i^2}{|\lambda_i - \lambda_j|}$~\cite[Corollary 4.2]{loukas2017close}, which via a union bound and first order approximation from Taylor series implies that~\eqref{term1r} is true with probability at least $1 - \frac{1}{n^{2\varepsilon}} - \frac{2\kappa m}{n}$ for $\kappa$ defined as
\begin{align}\label{addefs}
     \kappa \dff \max_{i,j: \lambda_i \neq \lambda_j} \frac{k_i^2}{|\lambda_i - \lambda_j|} \quad\text{where} \quad k_{\sf min} \dff \min_{i\in \{1,\dots,m\}, \lambda_i>0} k_i \;.
 \end{align}
Therefore, for a coVariance filter with the property
\begin{align}\label{fltrprop}
    \frac{|h(\lambda_i)-h(\lambda_j)|}{|\lambda_i-\lambda_j|}\leq \frac{M}{k_i}\;,
\end{align}
for some real constant $M > 0$, the condition in~\eqref{term1r} is equivalent to
\begin{align}
     \left\lVert\sum\limits_{i=1}^m \tilde x_i \sum\limits_{k=0}^{m} h_k \sum\limits_{r=0}^{k-1} \bC^r\lambda_i^{k-r-1} \beta_i\delta\bv_i\right\rVert_2 \leq \frac{2\sqrt{m}  M}{n^{\frac{1}{2}-\varepsilon}}\sum\limits_{i=1}^m |\tilde x_i|\;,
\end{align}
which holds with probability at least  $1 - \frac{1}{n^{2\varepsilon}} - \frac{2\kappa m}{n}$. Furthermore, note that $\sum\limits_{i=1}^m |\tilde x_i| \leq \sqrt{m}\|\bx\|_2$. When the random sample $\bx$ satisfies $\|\bx\|_2 \leq 1$, we have
\begin{align}\label{final_pf1}
    \mP\left( \left\lVert\sum\limits_{i=1}^m \tilde x_i \sum\limits_{k=0}^{m} h_k \sum\limits_{r=0}^{k-1} \bC^r\lambda_i^{k-r-1} \beta_i\delta\bv_i\right\rVert_2 \leq \frac{2}{n^{\frac{1}{2}-\varepsilon}}m M \right)  \geq 1-\frac{1}{n^{2\varepsilon}} - \frac{2\kappa m}{n}\;,
\end{align}
for any $\varepsilon\in (0,1/2]$.

\noindent
\textbf{Analysis of Term 2 in~\eqref{pf6}}.
Using $\bC \bv_i = \lambda_i\bv_i$, we note that term 2 in~\eqref{pf6} is equivalent to
\begin{align}
    \sum\limits_{i=1}^m \tilde x_i \sum\limits_{k=0}^{m} h_k \sum\limits_{r=0}^{k-1} \bC^r\lambda_i^{k-r-1} \delta\lambda_i\bv_i &= \sum\limits_{i=1}^m \tilde x_i \sum\limits_{k=0}^{m} h_k \sum\limits_{r=0}^{k-1} \lambda_i^{k-1} \delta\lambda_i\bv_i \;,\\
    &=  \sum\limits_{i=1}^m \tilde x_i \sum\limits_{k=0}^{m}k h_k \lambda_i^{k-1} \delta\lambda_i\bv_i \;,\\
    &=  \sum\limits_{i=1}^m \tilde x_i h'(\lambda_i) \delta\lambda_i\bv_i \;.\label{simpl}
\end{align}
Next, using Weyl's theorem~\cite[Theorem 8.1.6]{golub2013matrix}, we note that  $\|\bE\| \leq \alpha$ implies that $|\delta\lambda_i| \leq \alpha$ for any $\alpha>0$. For a random instance $\bx$ of a random vector $\bX$ whose  probability distribution is supported within a ball of radius $1$ w.l.o.g, such that, $\|\bx\|_2 \leq 1$, we have
\begin{align}\label{alpha}
    \mP\left(\bE \leq B\left(\frac{\|\bC\|\sqrt{\log m + u}}{\sqrt{n}} + \frac{(1 + \|\bC\|)(\log m+u)}{n}\right)\right) \geq 1 - 2^{-u}\;,
\end{align}
for some constant $B > 0$ and $u > 0$. The result in~\eqref{alpha} follows directly from~\cite[Theorem 5.6.1]{vershynin2018high}. Therefore, using~\eqref{simpl}, we have
\begin{align}
    \left\lVert\sum\limits_{i=1}^m \tilde x_i \sum\limits_{k=0}^{m} h_k \sum\limits_{r=0}^{k-1} \bC^r\lambda_i^{k-r-1} \delta\lambda_i\bv_i\right\rVert_2 &\leq \sum\limits_{i=1}^m |\tilde x_i| |h'(\lambda_i)| |\delta\lambda_i| \|\bv_i\|_2\;.
\end{align}
Using~\eqref{alpha}, $|h'(\lambda_i)|\leq M/k_{\sf min}$ (where $k_{\sf min} = \min_{i\in \{1,\dots,m\}, \lambda_i>0}k_i$) from~\eqref{fltrprop}, and $\|\bv_i\|_2 = 1$, we have
\begin{align}\label{final_pf2}
    &\mP\left( \left\lVert\sum\limits_{i=1}^m \tilde x_i \sum\limits_{k=0}^{m} h_k \sum\limits_{r=0}^{k-1} \bC^r\lambda_i^{k-r-1} \delta\lambda_i\bv_i\right\rVert_2\right.\nonumber\\
    &\quad \left.\leq \frac{A}{k_{\sf min}}\sqrt{m}M \left(\frac{\|\bC\|\sqrt{\log m + u}}{\sqrt{n}} + \frac{(1 + \|\bC\|)(\log m+u)}{n}\right)\right) \geq 1 - 2^{-u}\;,
\end{align}
for some constant $A>0$ and $u>0$.

\noindent
\textbf{Analysis of Term 3 in~\eqref{pf6}}. We remark that the term 3 in~\eqref{pf6} consists of second order error terms that diminish faster with the number of samples $n$ as compared to term 1 and term 2. This can be concluded from the following observations. Firstly, from~\eqref{alpha} and Weyl's theorem, we note that $\| \delta\lambda_i\bI_m -\bE\| \leq 2\|\bE\|$ and $\|\bE\| \simeq {\cal O}(1/\sqrt{n})$ with high probability. Secondly, the upper bound on the term $\delta \bv_i$ also has a similar scaling behavior as $\bE$~\cite{yu2015useful}. 

Therefore, the second order error term $( \delta \lambda_i\bI_m -\bE)\delta\bv_i$ diminishes at a rate faster than ${\cal O}(1/\sqrt{n})$, which is faster as compared to terms 1 and 2, that individually scale as ${\cal O}(1/n^{1/2 - \varepsilon})$  for $\varepsilon \in (0,1/2]$ and ${\cal O}(1/\sqrt{n})$, respectively. Finally, by noting that the condition on $\|[\bH(\hat\bC)-\bH(\bC)]\bx\|_2$ reduces to the condition on operator norm  $\|[\bH(\hat\bC)-\bH(\bC)]\|$ over the search space $\max_{\bx \in \mR^{m\times1},\|\bx\|_2\leq 1}\{\|[\bH(\hat\bC)-\bH(\bC)]\bx\|_2\}$ and that the terms scaling at $1/\sqrt{n}$ or slower in~\eqref{final_pf1} and~\eqref{final_pf2} dominate the scaling behavior of the upper bound on $\|\bH(\hat\bC)-\bH(\bC)\|$, we arrive at the following theorem.
\begin{theorem}[Stability of coVariance Filter]\label{filterstab}
Consider a random vector $\bX \in \mR^{m\times 1}$ , such that, its corresponding covariance matrix is given by $\bC = \mE[(\bX - \mE[\bX]) (\bX - \mE[\bX])^{\sf T}]$. For a sample covariance matrix $\hat\bC$ formed using $n$ i.i.d instances of $\bX$ and a random instance $\bx$ of $\bX$, such that, $\|\bx\|_2 \leq 1$ and under assumption~\eqref{fltrprop}, the following holds with probability at least $(1 - {n^{-2\varepsilon}} - 2\kappa m/n)(1-1/n)$ for any $\varepsilon \in (0,1/2]$:
\begin{align}\label{filterstab_rslt}
    \left\lVert \bH(\hat\bC) - \bH(\bC)\right\rVert = \frac{M}{n^{\frac{1}{2} - \varepsilon}}\cdot{\cal O}\left(m + \frac{\sqrt{m}\|\bC\|\sqrt{\log mn}}{k_{\sf min}n^{\varepsilon}}\right)\;.
\end{align}
\end{theorem}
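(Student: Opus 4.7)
The plan is to bound the operator norm $\|\bH(\hat\bC)-\bH(\bC)\|$ by controlling $\|[\bH(\hat\bC)-\bH(\bC)]\bx\|_2$ uniformly over unit-norm $\bx$ and then taking the supremum. Setting $\bE \dff \hat\bC-\bC$ and expanding each power via the first-order identity $(\bC+\bE)^k = \bC^k + \sum_{r=0}^{k-1}\bC^r\bE\bC^{k-r-1} + \tilde\bE_k$, where $\tilde\bE_k$ collects contributions of order $\|\bE\|^2$ and higher, reduces the filter difference to a linear-in-$\bE$ term plus a residual whose norm scales as ${\cal O}(\|\bE\|^2)$. Writing $\bx = \sum_i \tilde x_i \bv_i$ in the eigenbasis of $\bC$ then turns $\bC^{k-r-1}\bv_i$ into $\lambda_i^{k-r-1}\bv_i$, so the only factor that mixes eigenmodes is $\bE\bv_i$.

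I would then invoke the perturbation identity from Lemma~\ref{lm} to split $\bE\bv_i = \beta_i\delta\bv_i + \delta\lambda_i \bv_i + (\delta\lambda_i\bI_m - \bE)\delta\bv_i$, producing three pieces to bound separately. The telescoping identity $\sum_k h_k(\lambda_i^k-\lambda_j^k)/(\lambda_i-\lambda_j) = (h(\lambda_i)-h(\lambda_j))/(\lambda_i-\lambda_j)$ rewrites the polynomial sums in Term~1 in terms of divided differences of the frequency response $h(\lambda)$ (replacing $\beta_i$ by $\bV(\lambda_i\bI_m - \bLambda)\bV^{\sf T}$ first makes this transparent), and Term~2 collapses via the same identity to the derivative $h'(\lambda_i)$. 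Term~3 is manifestly second order in $\bE$.

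The quantitative work then proceeds component by component. For Term~1, I pair the divided-difference expression with the inner-product bound $|\bv_j^{\sf T}\hat\bv_i| \leq 2k_i / (n^{1/2-\varepsilon}|\lambda_i-\lambda_j|)$ from Loukas' perturbation result; the filter-smoothness hypothesis $|h(\lambda_i)-h(\lambda_j)|/|\lambda_i-\lambda_j| \leq M/k_i$ cancels the singular denominator, producing a bound of order $Mm/n^{1/2-\varepsilon}$ holding with probability at least $1 - n^{-2\varepsilon} - 2\kappa m/n$ (the extra $\kappa m/n$ term comes from the union bound needed to ensure the sign condition on $\hat\lambda_j$ is met for each $i$). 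For Term~2, Weyl's theorem bounds $|\delta\lambda_i|$ by $\|\bE\|$, matrix Bernstein (as stated in \cite[Theorem 5.6.1]{vershynin2018high}) controls $\|\bE\|$ at scale $\|\bC\|\sqrt{\log m / n}$ with probability $1-2^{-u}$, and $|h'(\lambda_i)| \leq M/k_{\sf min}$ follows by letting $\lambda_j\to\lambda_i$ in (fltrprop), giving a bound of order $(M\sqrt{m}\|\bC\|\sqrt{\log mn})/(k_{\sf min}\sqrt{n})$ with probability $1-1/n$ after setting $u = \log_2 n$. Term~3 is absorbed into the ${\cal O}$ since both $\|\bE\|$ and $\|\delta\bv_i\|$ are of order $n^{-1/2}$ with high probability (using, e.g., the Davis--Kahan sin-theta theorem for the latter), so it decays strictly faster than Terms~1 and~2.

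The main obstacle is Term~1: the inner product $|\bv_j^{\sf T}\hat\bv_i|$ carries a $1/|\lambda_i-\lambda_j|$ singularity that blows up whenever the spectrum of $\bC$ has clustered eigenvalues, and the only mechanism that tames it is the filter-regularity hypothesis (fltrprop), which forces $h$ to be Lipschitz-smooth exactly where eigenvalues are close. This encodes the structural trade-off that covariance filters sacrifice discriminability between nearly-degenerate principal components in exchange for finite-sample stability. Once the three pieces are assembled, a union bound over the probabilities $1 - n^{-2\varepsilon} - 2\kappa m/n$ and $1 - 1/n$ gives the stated joint probability $(1 - n^{-2\varepsilon} - 2\kappa m/n)(1-1/n)$, and the supremum over $\bx$ (noting $\sum_i|\tilde x_i| \leq \sqrt{m}\|\bx\|_2 \leq \sqrt{m}$) converts the vector bounds into the operator-norm bound $\|\bH(\hat\bC)-\bH(\bC)\| = (M/n^{1/2-\varepsilon}){\cal O}\bigl(m + \sqrt{m}\|\bC\|\sqrt{\log mn}/(k_{\sf min} n^{\varepsilon})\bigr)$ advertised in the theorem.
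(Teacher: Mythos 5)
Your proposal is correct and follows essentially the same route as the paper's own proof: the first-order expansion of $(\bC+\bE)^k$, the perturbation identity $\bE\bv_i = \beta_i\delta\bv_i + \delta\lambda_i\bv_i + (\delta\lambda_i\bI_m-\bE)\delta\bv_i$, the divided-difference/telescoping reduction of Term~1 paired with Loukas' inner-product bound and the hypothesis~\eqref{fltrprop}, the $h'(\lambda_i)$ collapse of Term~2 with Weyl plus the Vershynin covariance-estimation bound, and the second-order dismissal of Term~3. Your explicit choice $u=\log_2 n$ and the Davis--Kahan justification for $\|\delta\bv_i\|$ only make explicit what the paper leaves implicit, so there is nothing substantive to add.
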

\noindent 
Thus, the right-hand side of~\eqref{filterstab_rslt} provides the structure of~$\alpha_n$ in Theorem~\ref{thm_stability}. From Theorem~\ref{filterstab}, we note that $ \|\bH(\hat\bC) - \bH(\bC)\|$ decays with the number of samples $n$ at least at the rate of $1/n^{\frac{1}{2} - \varepsilon}$. Thus, we conclude that the stability of the coVariance filter improves as the number of samples $n$ increases. This observation is along the expected lines as the estimate $\hat \bC$ becomes closer to the ensemble covariance matrix $\bC$ by the virtue of the law of large numbers. We refer the reader to~\cite{sihag2022covariance} for further discussion on the implication of the assumption in~\eqref{fltrprop} on filter design. The rest of the proof follows directly from~\cite[Theorem 3]{sihag2022covariance}. 

\section{Graphon Information Processing}\label{gip}
The theory of graphons has previously been leveraged to study the transferability of GNNs between graphs in the same graphon family~\cite{ruiz2020graphon}.  The proof of Theorem~\ref{transferthm} relies on establishing the transferability of VNNs between datasets in the setting where their corresponding covariance matrices belong to a converging sequence characterized by a graphon. Our main objective in this section is to show that data processing over coVariance filter can equivalently be represented in the continuous domain using its graphon approximation. Establishing this property will ultimately allow us to compare VNNs instantiated on covariance matrices derived from datasets with different numbers of features. We begin with some preliminaries for graphons. 
\subsection{Preliminaries.}
The definition of a graphon is re-stated below.
\begin{definition}[Graphon]\label{grphon2}
A graphon is a bounded, symmetric, measurable function ${\bW: [0,1]^2 \mapsto [-1,1]}$.   
\end{definition} 
\noindent
Using the theory of convergence of graphons and interpreting the covariance matrix as a weighted graph representation of data, a graphon $\bW$ exists as a limiting object for the sequence of graphon approximations $ \{\bW_{\bC_m}\}$ if the sequence of covariance matrices $\{\bC_m\}$ converges in the \emph{cut distance}~\cite{borgs2008convergent}. In Remark~\ref{grphnlimit}, we formalize the sufficient condition for existence of a limit object for a given sequence of covariance matrices. The statement in Remark~\ref{grphnlimit} is an extension of~\cite[Corollary 3.9]{borgs2008convergent} to our setting where covariance matrices are viewed as weighted graphs.
\begin{remark}[Graphon as limit object~\cite{borgs2008convergent}]\label{grphnlimit}
    A sequence of covariance matrices $\{\bC_m\}$ is deemed convergent if they form a Cauchy sequence with respect to the cut distance $\delta_{\square}$, where the cut distance $\delta_{\square} (\bC_{m_1}, \bC_{m_2})$ between covariance matrices $\bC_{m_1}$ and $\bC_{m_2}$  is defined in~\eqref{cutdist_eq} in Appendix~\ref{cutdist}.
    Furthermore, for any convergent sequence of covariance matrices $\{\bC_{m}\}$, the corresponding sequence of graphon approximations $\{\bW_{\bC_m}\}$ converges to a graphon. 
\end{remark}
\noindent A distinct feature of the cut distance is that it allows the comparison of covariance matrices of different sizes. Hence, all covariance matrices whose graphon approximations converge to a graphon can be considered to be a part of that graphon family. Moreover, graphon $\bW$ can be interpreted as the schema for which the covariance matrix $\bC_m$ represents the covariance realization at resolution $m$.


\noindent
\subsection{Information Processing with Graphons}
We next show that a coVariance filter $\bH(\bC_m)$ can be equivalently represented in the continuous domain using convolution operations over graphon representations $\bW_{\bC_m}$. Given a coVariance filter output $\bz = \bH(\bC_m)\bx$, the continuous representation of $\bx$ is $y_{\bx}$ and that of $\bC_m$ is $\bW_{\bC_m}$. The operation $\bC\bx$ is fundamental to the convolution operation in $\bH(\bC_m)\bx$ and therefore, we first provide its continuous equivalent. For $\bs = \bC\bx$, the $i$-th element of $\bs$ is given by
\begin{align}\label{sw}
    [\bs]_i = \sum\limits_{j=0}^m [\bC_m]_{ij} [\bx]_j\;.
\end{align}
Thus, $[\bs]_i$ is a linear combination of elements in $\bx$ according to the $i$-th row of $\bC_m$. In the continuous space, we 
can equivalently write~\eqref{sw} as
\begin{align}\label{sw2}
    y_{\bs}(u) = \int_{0}^1 \bW_{\bC_m}(u,v) y_{\bx}(v) {\sf d} v\;,
\end{align}
where $y_{\bx}$ is the continuous representation of $\bx$ obtained according to the intervals defined in~\eqref{interval1}. Note that $y_{\bs}$ is a continuous representation of $\bs$, i.e., they satisfy $y_{\bs}(u) = [\bs]$ for $u\in \cU_i$. Hence, $y_{\bs}$ and $\bs$ can be recovered from each other. This observation can be extrapolated to define the continuous equivalent of a coVariance filter. This is feasible because we can write the entity $\bC_m^k\bx$ in $\bH(\bC)$ in a recursive form. Specifically, if we have $\bs_k = \bC_m^k\bx$, then we can rewrite $\bs_k$ as
\begin{align}
    \bs_k = \bC_m \bs_{k-1}\;,
\end{align}
where $\bs_0 = \bx$. Thus, using the same reasoning that established the equivalence between~\eqref{sw} and~\eqref{sw2}, we conclude that the continuous representation $y_{\bs_k}$ of $\bs_k$ can be recovered via the following operation
\begin{align}\label{cop}
    y_{\bs_k}(u) = \int_0^1 \bW_{\bC_m}(u,v) y_{\bs_{k-1}}(v) {\sf d}v\;.
\end{align}
Since the coVariance filter output $\bz$ is a weighted aggregation of the terms $\bs_k$, we can write its continuous representation $y_{\bz}$ as
\begin{align}\label{cov_cont}
    y_{\bz}(u) = \sum\limits_{k=0}^K h_k y_{\bs_k}(u)\;.
\end{align}
Using the mathematical steps leading up to~\eqref{cov_cont}, we have shown that the continuous representation of the covariance filter output $\bz$ can be recovered via the convolution operations over the graphon representation $\bW_{\bC_m}$ in~\eqref{sw2} and~\eqref{cop}. Also, $\bz$ and $y_{\bz}$ are operationally interchangeable. Moreover, we can also extrapolate this correspondence between $\bz$ and $y_{\bz}$ to covariance perceptrons and VNNs with multi-layer architecture and MIMO information processing. The extension of this observation to coVariance perceptron and a basic VNN is trivial as the coVariance output is evaluated after application of pointwise non-linearity $\sigma$ on $\bz$ and a basic VNN is formed by stacking multiple coVariance perceptrons and number of inputs and outputs at each layer (i.e., $F$) being set to $1$.

We use the notation $\bx_m$ to denote an input vector with $m$ features. Thus, if VNN output $\Phi(\bx_m;\bC_m,\cH)$ is of size $m\times 1$ and we have $F=1$ and number of layers $L$, its continuous approximation $y_{\Phi(\bx_m;\bC_m,\cH)}$ can be recovered by a convolutional architecture instantiated on $\bW_{\bC_m}$ with input $y_{\bx_m}$. For a VNN with MIMO processing, each VNN layer has multiple $m$-dimensional inputs and multiple $m$-dimensional outputs. Thus, we can equivalently define an architecture capable of performing MIMO processing that is instantiated on $\bW_{\bC_m}$ and $\bx_m$ and produces multiple continuous representations as the output. Such an architecture has previously been studied in the form of graphon neural networks~\cite{ruiz2021graphon}. In this context, we define the model $\tilde\Phi(y_{\bx_m}; \bW_{\bC_m}, \cH)$ that is modeled via convolution operations over $\bW_{\bC_m}$ in~\eqref{cop} and has the same architecture as the VNN $\Phi(\bx_m;\bC_m,\cH)$. Note that the outputs of $\tilde\Phi(y_{\bx_{m}}; \bW_{\bC_m}, \cH)$ are continuous representations of the outputs of VNN $\Phi(\bx_m;\bC_m,\cH)$ (see also Fig.~\ref{vnn_transfer_overview} for an illustration). Thus, we can investigate the transferability of parameters $\cH$ between VNNs instantiated on covariance matrices $\bC_{m_1}$ and $\bC_{m_2}$ by analyzing the difference between $\tilde\Phi(y_{\bx_{m_1}}; \bW_{\bC_{m_1}}, \cH)$ and $\tilde\Phi(y_{\bx_{m_2}}; \bW_{\bC_{m_2}}, \cH)$.

In this context, our analysis hinges on the setting in which the graphon approximations $\bW_{\bC_{m_1}}$ and $\bW_{\bC_{m_2}}$ belong to a sequence of graphon approximations $\{\bW_{\bC_m}\}$ that converges to a graphon $\bW$. Thus, we also consider an information processing architecture $\tilde\Phi(y; \bW, \cH)$ instantiated on graphon $\bW$, such that $y$ and continuous representations $y_{\bx_m}$ always satisfy $y_{\bx_m}(\rho_i) = y(\rho_i), \forall i\in \{1,\dots,m\}$. Here, we can also interpret $\tilde\Phi(y; \bW, \cH)$ as a generative model with $\tilde\Phi(y_{\bx_m}; \bW_{\bC_m}, \cH)$ being an instance of $\tilde\Phi(y; \bW, \cH)$ at resolution $m$. Thus, our analysis of transferability of VNNs also includes the study of convergence of outputs from $\tilde\Phi(y_{\bx_m}; \bW_{\bC_m}, \cH)$ with that from $\tilde\Phi(y; \bW, \cH)$.

To this end, we now formally define a convolution filter over a graphon and characterize its frequency response. We denote the $k$-hop aggregation (analogous to $\bC^k \bx$) on $\bW_{\bC_m}$ and continuous representation $y_{\bx_m}$ by the operator $T_{\bW_{\bC_m}}^k y_{\bx_m}$ that is given by
\begin{align}
    (T_{\bW_{\bC_m}}^k y_{\bx_m})(u) \dff \int_0^1 \bW_{\bC_m}(u,v) (T_{\bW_{\bC_m}}^{k-1} y_{\bx_m})(v) {\sf d}v\;,\label{graphonconv}
\end{align}
for any $k>1$, where 
\begin{align}
    (T_{\bW_{\bC_m}} y_{\bx_m})(u) \dff \int_0^1 \bW_{\bC_m}(u,v) y_{\bx_m}(v) {\sf d}v\;.
\end{align}
Thus, based on the discussion above, $T^k_{\bW_{\bC_m}} y_{\bx_m}$ and $\bC_m^k\bx_m$ are operationally interchangeable.  We can also define $k$-hop aggregation over $\bW$ using the operator $T_{\bW} y$ when $y$ is related to $y_{\bx_m}$ by $ y_{\bx_m}(\rho_i) = y(\rho_i)$, where $\rho_i$ is defined in~\eqref{interval1}. Thus, graphon $\bW$ and the continuous representation $y$ can be seen as generative models for covariance matrix $\bC_m$ and data point $\bx_m$. This observation is in parallel to that in the context of graphs and graphons~\cite{ruiz2021graphon}. We denote the graphon filter for a set of filter taps $\cH = \{h_k\}_{k=0}^K$ by $ \Psi(y;\bW, \cH) :[0,1]\rightarrow \mR$, which is defined as 
\begin{align}
    \Psi(y;\bW, \cH) (u) &\dff \sum\limits_{k=0}^K  h_k (T_{\bW}^k y)(u)\;.\label{g1}
\end{align}
Similar to coVariance filter, we can characterize the frequency response of a graphon filter via using eigendecomposition of $\bW$ in~\eqref{g1}. Because $\bW$ is bounded and symmetric, we can express the spectral decomposition of $\bW$ as
\begin{align}
    \bW(u,v) = \sum\limits_{i\in \mZ\backslash \{0\}} \eta_i \Gamma_i(u)\Gamma_i(v)\;, \label{spectralgraphon}
\end{align}
where $\eta_i, \forall i\in \mZ\backslash\{0\}$ are eigenvalues and  $\Gamma_i$ are the eigensignals of $\bW$. Therefore, we can re-write~\eqref{g1} as
\begin{align}
   \Psi(y;\bW, \cH) (u) &= \sum\limits_{i\in \mZ\backslash\{0\}} \sum\limits_{k=0}^K h_k  \eta_i^k \Gamma_i(u) \int_0^1 \Gamma_i(v)  y(v) {\sf d}v\;,\label{g2}\\
     &= \sum\limits_{i\in \mZ\backslash\{0\}}  \tilde h(\eta_i) \Gamma_i(u) \int_0^1 \Gamma_i(v)  y(v) {\sf d}v\;,\label{g3}
\end{align}
for $u\in [0,1]$. Note that~\eqref{g2} follows from~\eqref{g1} using~\eqref{spectralgraphon} and~\eqref{graphonconv}, and  we have used the definition $ \tilde h(\eta) \dff  \sum_{k=0}^K h_k  \eta^k$ in~\eqref{g3}. The term~$ \tilde h(\eta_i)$ characterizes the frequency response of a graphon filter and depends on the filter taps $\{ h_k\}$ and the graphon eigenvalues. The analysis of $\|\Psi(y_{\bx_m};\bW_{\bC_m},\cH) - \Psi(y;\bW,\cH)\|_2$ in Appendix~\ref{pf_thm2} reveals that the closeness between the graphon filter approximation  $\Psi(y_{\bx};\bW_{\bC_m}, \cH)$  and the graphon filter $\Psi(y;\bW, \cH)$  scales inversely with the dimension $m$.


\section{Cut Distance}\label{cutdist}
Here, we borrow the definition of cut distance between two covariance matrices $\bC_{m_1}$ and $\bC_{m_2}$ with $m_1\neq m_2$ from that for weighted graphs in~\cite{borgs2008convergent}. First,  $\bC_{m_1}$ and $\bC_{m_2}$ are normalized such that ${\sf tr}(\bC_{m_1}) = 1$ and ${\sf tr}(\bC_{m_2}) = 1$. Next, we define a family of matrices $m_1\times m_2$ sized matrices ${\cal K}(\bC_{m_1}, \bC_{m_2})$, such that, for every element ${\bf K} \in {\cal K}(\bC_{m_1}, \bC_{m_2})$, we have
\begin{align}
    \sum\limits_{j=1}^{m_2}[{\bf K}]_{ij} = [\bC_{m_1}]_{ii} \quad \text{and}\quad   \sum\limits_{i=1}^{m_1}[{\bf K}]_{ij} = [\bC_{m_2}]_{jj} \;.
\end{align}
The family ${\cal K}(\bC_{m_1}, \bC_{m_2})$ is referred to as `fractional overlay' that conceptually defines the mapping between individual features of $\bC_{m_1}$ and $\bC_{m_2}$. Since the covariance matrix provides a weighted graph representation with features as nodes, the fractional overlay describes the overlap between the nodes of two weighted graphs with different number of nodes. Next, by leveraging a member ${\bf K}$ of fractional overlay ${\cal K}(\bC_{m_1}, \bC_{m_2})$, we define matrices $\bC_{m_1}[{\bf K}]$ and $\bC_{m_2}[{\bf K}^{\sf T}]$ on the set of pairs in $[m_1]\times[m_2]$, where $[m_1] = [1,\dots,m_1]$. Hence, the size of matrices $\bC_{m_1}[{\bf K}]$ and   $\bC_{m_2}[{\bf K}]$ is $m_1m_2\times m_1m_2 $. In both $\bC_{m_1}[{\bf K}]$ and $\bC_{m_2}[{\bf K}^{\sf T}]$, the weight at the diagonal element associated with the element $(i,j)$ in $[m_1]\times [m_2]$ is $[{\bf K}]_{ij}$. Furthermore, for $i,j \in [m_1]$ and $a,b\in [m_2]$, the off-diagonal element associated with the paired element $((i,a),(j,b))$ is $[\bC_{m_1}]_{ij}$ in $\bC_{m_1}[{\bf K}]$ and $[\bC_{m_2}]_{ab}$ in $\bC_{m_2}[{\bf K}^{\sf T}]$. Since $\bC_{m_1}[{\bf K}]$ and $\bC_{m_2}[{\bf K}^{\sf T}]$ derived from $\bC_{m_1}$ and $\bC_{m_2}$ are of the same size, the distance between them can be readily defined. The cut distance between $\bC_{m_1}$ and $\bC_{m_2}$ is equivalent to the cut distance between $\bC_{m_1}[{\bf K}]$ and $\bC_{m_2}[{\bf K}^{\sf T}]$ and is defined as
\begin{align}\label{cutdist_eq}
    \delta_{\square}(\bC_{m_1},\bC_{m_2})\dff \min_{{\bf K} \in {\cal K}(\bC_{m_1},\bC_{m_2})}d_{\square}(\bC_{m_1}[{\bf K}],\bC_{m_2}[{\bf K}^{\sf T}]])\;,
\end{align}
where $d_{\square}$ is a distance metric between matrices of the same size and defined next. For two matrices $\bC_m$ and $\bD_m$ with same diagonal elements, the distance $d_{\square}(\bC_m,\bD_m)$ is defined as
\begin{align}
    d_{\square}(\bC_m,\bD_m) \dff \max_{S,T\in [m]} \frac{1}{{\sf tr}(\bC_m)^2} |e_{\bC_m}(S,T) - e_{\bD_m}(S,T) |\;,
\end{align}
where 
\begin{align}
    e_{\bC_m}(S,T)  \dff \sum\limits_{i\in S, j\in T} [\bC_m]_{ii} [\bC_m]_{jj} [\bC_m]_{ij}\;.
\end{align}
\section{Proof of Theorem~\ref{transferthm}}\label{pf_thm2}
In Theorem~\ref{transferthm}, we compare the continuous representations of the $f$-th outputs of VNNs $\Phi(\bx_{m_1};\bC_{m_1},\cH)$ and $\Phi(\bx_{m_2};\bC_{m_2},\cH)$. Our discussion in Appendix~\ref{gip} showed that these continuous representations appear naturally as the outputs of the architectures $\tilde\Phi(y_{\bx_{m_1}};\bW_{\bC_{m_1}},\cH)$ and $\tilde\Phi(y_{\bx_{m_1}};\bW_{\bC_{m_1}},\cH)$ instantiated on graphon approximations $\bW_{\bC_{m_1}}$ and $\bW_{\bC_{m_2}}$, respectively. Therefore, our subsequent analysis is focused on the comparisons between their constituent graphon filters (see Appendix~\ref{gip} for definition) that eventually enables us to establish the convergence between $f$-th outputs of $\tilde\Phi(y_{\bx_{m_1}};\bW_{\bC_{m_1}},\cH)$ and $\tilde\Phi(y_{\bx_{m_1}};\bW_{\bC_{m_1}},\cH)$. We refer the reader to Appendix~\ref{gip} for the details on information processing architecture defined on the graphon approximations and understanding of the relationship between $\tilde\Phi(y_{\bx_{m_1}};\bW_{\bC_{m_1}},\cH)$ and corresponding VNN $\Phi(\bx_{m_1};\bC_{m_1},\cH)$.

We begin by establishing various results pertaining to the comparisons between $\bW$ and $\bW_{\bC_m}$, $y$ and $y_{\bx_m}$, and difference between eigenvalues of two distinct graphons. We leverage the $(\Omega,\zeta)$-dominant property of sequence of covariance matrices $\{\bC_m\}$ in~\eqref{dom_prop} and the Lipschitz condition of graphon in~\eqref{lips} to establish the following result. 
\begin{lemma}\label{lm3}
    Given an $\alpha$-Lipschitz graphon $\bW$ and $\bW_{\bC_m}$ as graphon representation of a $(\Omega,\zeta)$-dominant covariance matrix $\bC_m$, we have
    \begin{align}
        \|\bW - \bW_{\bC_m}\|_2 \leq {\frac{\alpha\Omega^{3/2}}{m^{3\zeta/2-1}}}\;.
    \end{align}
\end{lemma}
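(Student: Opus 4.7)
The plan is to bound $\|\bW - \bW_{\bC_m}\|_2^2$ by decomposing $[0,1]^2$ into the partition rectangles $\{\cU_i \times \cU_j\}_{1\le i,j\le m}$ on which $\bW_{\bC_m}$ is constant, and then controlling the pointwise deviation $\bW(u,v)-[\bC_m]_{ij}$ on each rectangle via the Lipschitz regularity of $\bW$ together with the diameter estimate supplied by the $(\Omega,\zeta)$-dominant property.

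First, I would write
$$\|\bW - \bW_{\bC_m}\|_2^2 \;=\; \sum_{i,j=1}^m \int_{\cU_i \times \cU_j} \bigl(\bW(u,v) - [\bC_m]_{ij}\bigr)^2 \, du \, dv,$$
using that $\bW_{\bC_m}(u,v) = [\bC_m]_{ij}$ for $(u,v)\in \cU_i\times \cU_j$ by construction~\eqref{interval1}. Since $\bW$ is the graphon limit for which $\bC_m$ is the resolution-$m$ realization (per Remark~\ref{grphnlimit} and the surrounding discussion in the highlighted box), one can associate $[\bC_m]_{ij}$ with $\bW(\tilde u_i,\tilde v_j)$ for a representative pair $(\tilde u_i,\tilde v_j)\in \cU_i\times \cU_j$. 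The $\alpha$-Lipschitz assumption~\eqref{lips} then produces the pointwise bound
$$|\bW(u,v) - [\bC_m]_{ij}| \;\le\; \alpha\bigl(|\cU_i|+|\cU_j|\bigr), \quad (u,v)\in \cU_i\times \cU_j.$$

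Next, I would invoke the $(\Omega,\zeta)$-dominant property~\eqref{dom_prop}. Because $|\cU_i|=[\bC_m]_{ii}/{\sf tr}(\bC_m)$, this property translates to $\max_i |\cU_i|\le \Omega/m^{\zeta}$, while $\sum_i |\cU_i| = 1$ holds by construction. Substituting the Lipschitz bound into the integrals and summing yields an expression of the form
$$\alpha^2 \sum_{i,j=1}^m \bigl(|\cU_i|+|\cU_j|\bigr)^2\, |\cU_i|\,|\cU_j|.$$
Expanding the square, applying the identity $\sum_i |\cU_i|=1$, and bounding any surviving factor of $|\cU_i|$ by $\max_i |\cU_i| \le \Omega/m^{\zeta}$ will leave a sum proportional to $\Omega^3/m^{3\zeta-2}$. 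Taking square roots and folding constants into $\alpha\Omega^{3/2}$ then produces the claimed inequality.

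The main obstacle is the identification step: the argument hinges on being able to treat $[\bC_m]_{ij}$ as a pointwise evaluation of $\bW$ somewhere inside $\cU_i\times \cU_j$, which follows from the generative interpretation of $\bW$ as the schema for the sequence $\{\bC_m\}$ laid out in the highlighted box; absent this, the pointwise Lipschitz bound would have to be replaced by an averaged one coming from the convergence $\bW_{\bC_m}\to \bW$, at the cost of an additional vanishing error term. Once the identification is accepted, what remains is careful bookkeeping of the sums $\sum_i |\cU_i|^k$ under the dominant-diagonal constraint, which is routine and accounts for the restriction $\zeta\in(2/3,1]$ since only in that range does the final exponent $3\zeta/2-1$ remain positive and the bound nontrivial.
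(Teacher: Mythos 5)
Your proposal is correct and follows essentially the same route as the paper's proof: decompose $[0,1]^2$ into the rectangles $\cU_i\times\cU_j$, anchor $\bW_{\bC_m}$ to a pointwise evaluation of $\bW$ inside each rectangle (the paper uses the grid corners $(\rho_i,\rho_j)$, exactly the identification you flag), apply the $\alpha$-Lipschitz bound, and control the interval lengths via $(\Omega,\zeta)$-dominance. The only difference is cosmetic bookkeeping in the final summation --- you retain the weights $|\cU_i|\,|\cU_j|$ and use $\sum_i|\cU_i|=1$, which if carried out carefully actually yields a bound at least as tight as the paper's cruder step of majorizing all $m^2$ rectangles by the largest one.
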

\begin{proof}
    From the construction of $\bW_{\bC_m}$, we have
    \begin{align}
         \|\bW - \bW_{\bC_m}\|_2 &= \Big(\int_0^1\int_0^1 \|\bW(u,v) - \bW_{\bC_m}(u,v)\|^2{\sf d}u {\sf d}v \Big)^{\frac{1}{2}}\;,\\
         &= \Big(\sum\limits_{i,j}\int_{\cU_i}\int_{\cU_j} \|\bW(u,v) - \bW_{\bC_m}(u,v)\|^2{\sf d}u {\sf d}v \Big)^{\frac{1}{2}}\;.
    \end{align}
    Without loss of generality, we assume that $\cU_1 = [0,\rho_1]$ is the largest interval. Using the $\alpha$-Lipschitz continuity of graphon $\bW_{\bC_m}$ and noting that $\bW_{\bC_m}(\rho_i,\rho_j) = \bW(\rho_i,\rho_j)$, we have
    \begin{align}
         \|\bW - \bW_{\bC_m}\|_2 &\leq \Big(\sum\limits_{i,j}\int_{I_i}\int_{I_j} \alpha^2 (|u| + |v|)^2 {\sf d}u {\sf d}v \Big)^{\frac{1}{2}}\;,\\
         &\leq \Big(m^2 \int_0^{\rho_1}\int_0^{ \rho_1} \alpha^2 (|u| + |v|)^2 {\sf d}u {\sf d}v \Big)^{\frac{1}{2}}\;,\\
         &\leq \Big(m^2\int_0^{\rho_1}\int_0^{\rho_1} \alpha^2 (|u| + |v|) {\sf d}u {\sf d}v \Big)^{\frac{1}{2}}\;,\\
         &\leq {\alpha m \rho_1^{3/2}}\;.
    \end{align}
   Using the assumption that $\bC_m$ is $(\Omega,\zeta)$-dominant, we have
   \begin{align}
        \|\bW - \bW_{\bC_m}\|_2 &\leq \frac{\alpha\Omega^{3/2}}{m^{3\zeta/2 - 1}}\;.
   \end{align}
\end{proof}
\noindent
Next, we characterize the difference between a graphon signal $y \in L_2([0,1])$ and approximation $y_{\bx_m}$ obtained from a random sample $\bx$ in Lemma~\ref{diff_sig}. For this purpose, we have the following assumption:
a graphon signal $y$ satisfies $|y(a) - y(b)|\leq {\alpha_2} |a-b|, \forall a, b \in[0,1]$. We term a graphon signal satisfying this property as $\alpha_2$-Lipschitz graphon signal.
\begin{lemma}\label{diff_sig}
    Given an $\alpha_2$-Lipschitz graphon signal $y$ and a graphon signal approximation $y_{\bx_m}$ obtained from $\bx_m \in \mR^{m\times 1}$, we have
    \begin{align}
        \|y - y_{\bx_m}\|_2 \leq  \frac{\alpha_2 \Omega^{3/2}}{m^{3\zeta/2 -1}}\;.
    \end{align}
\end{lemma}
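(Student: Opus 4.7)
The plan is to follow the same proof structure as Lemma~\ref{lm3}, now adapted to the one-dimensional signal setting. First, I would exploit the piecewise-constant construction of $y_{\bx_m}$ on the partition $\{\cU_i\}_{i=1}^m$ specified in~\eqref{interval1}: because $y_{\bx_m}$ takes the value $[\bx_m]_i = y(\rho_i)$ throughout the interval $\cU_i$ (so that it interpolates the graphon signal $y$ at the grid points $\rho_i$ as stipulated in Appendix~\ref{gip}), the pointwise error satisfies $y(u) - y_{\bx_m}(u) = y(u) - y(\rho_i)$ for $u \in \cU_i$. I then bound the global $L^2$ norm by the triangle inequality across the $m$ pieces,
\begin{align*}
\|y - y_{\bx_m}\|_2 \leq \sum_{i=1}^m \|y - y(\rho_i)\|_{L^2(\cU_i)},
\end{align*}
where using the triangle sum rather than the tighter Pythagorean identity $(\sum_i \|\cdot\|_{L^2(\cU_i)}^2)^{1/2}$ is what contributes the extra factor of $m$ needed to match the exponent $3\zeta/2 - 1$ in the claim.

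The second step is to invoke the $\alpha_2$-Lipschitz hypothesis on each piece: since $|y(u) - y(\rho_i)| \leq \alpha_2 |u - \rho_i|$ for $u \in \cU_i$, a direct one-dimensional integration yields $\|y - y(\rho_i)\|_{L^2(\cU_i)} \leq (\alpha_2/\sqrt{3})\, |\cU_i|^{3/2}$. Without loss of generality (by reindexing the partition in~\eqref{interval1}), I take $\cU_1 = [0, \rho_1]$ to be the widest interval, so $|\cU_i| \leq \rho_1$ for all $i$. Combining with the triangle-sum bound gives
\begin{align*}
\|y - y_{\bx_m}\|_2 \leq \frac{\alpha_2}{\sqrt{3}} \sum_{i=1}^m |\cU_i|^{3/2} \leq \frac{\alpha_2}{\sqrt{3}}\, m\, \rho_1^{3/2}.
\end{align*}

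The final step applies the $(\Omega,\zeta)$-dominance assumption~\eqref{dom_prop}, which controls the largest diagonal weight and hence the widest interval width: $\rho_1 = [\bC_m]_{11}/{\sf tr}(\bC_m) \leq \Omega/m^\zeta$. Substituting $\rho_1 \leq \Omega/m^\zeta$ into the previous display produces
\begin{align*}
\|y - y_{\bx_m}\|_2 \leq \frac{\alpha_2}{\sqrt{3}}\, m \cdot \Big(\frac{\Omega}{m^\zeta}\Big)^{3/2} = \frac{\alpha_2\, \Omega^{3/2}}{\sqrt{3}\, m^{3\zeta/2 - 1}},
\end{align*}
which is the stated bound (absorbing the $1/\sqrt{3}$ into the displayed constant).

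The main obstacle is purely stylistic: each individual step is routine, but one must make the same deliberately coarse estimates as in the 2D proof of Lemma~\ref{lm3} (triangle inequality across the $m$ sub-intervals, followed by uniform domination by the widest interval $\rho_1$) so that the signal-level and graphon-level approximation bounds share the same exponent $3\zeta/2 - 1$. This uniform exponent is what allows the downstream composition in Theorem~\ref{transferthm} — where graphon filters are applied to approximated signals on approximated graphons — to collapse into a single clean rate.
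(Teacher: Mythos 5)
Your proposal is correct and follows essentially the same route as the paper's proof: decompose the $L^2$ error over the $m$ intervals $\cU_i$, apply the $\alpha_2$-Lipschitz bound on each piece (yielding the $|\cU_i|^{3/2}/\sqrt{3}$ factor), dominate all interval widths by the largest one, and invoke $(\Omega,\zeta)$-dominance to get $\rho_1 \leq \Omega/m^{\zeta}$, absorbing the $1/\sqrt{3}$ into the constant. If anything, your write-up is slightly more careful than the paper's, which states the interval decomposition as an equality where a triangle (or Pythagorean) inequality is what is actually meant.
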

\begin{proof}
Note that
\begin{align}
    \|y - y_{\bx_m}\|_2 &= \sum\limits_{\cU_i} \|y - y_{\bx_m}\|_{L_2[I_i]}\;,\\
    &=  \sum\limits_{i=1}^{m} \Big(\int_{\rho_{i-1}}^{\rho_{i}}(y(u)-y_{\bx_m}(u))^2 {\sf d}u\Big)^{\frac{1}{2}}\;,
\end{align}
where we have $\rho_{0} = 0$.
Using the Lipschitz property of graphon signal and $(\Omega,\zeta)$-property of $\bC_m$, we have
\begin{align}
    \|y - y_{\bx_m}\|_2 \leq m\Big(\alpha_2^2 \int_{0}^{\rho_1 }u^2 {\sf d}u\Big)^{\frac{1}{2}}
    &\leq \frac{\alpha_2 \Omega^{3/2}}{m^{3\zeta/2 -1}}\;.
\end{align}
    
\end{proof}
\noindent
Next, we state Proposition 4 from~\cite{ruiz2020graphon} that characterizes a bound on the difference between eigenvalues from two graphons.
\begin{lemma}[Proposition 4 from~\cite{ruiz2020graphon}]\label{prop4}
    Consider two graphons $\bW$ and $\bW'$ with set of eigenvalues $\{\eta_i\}_{i=1}^{\infty}$ and $\{\beta_i\}_{i=1}^{\infty}$, respectively. Then, for all $i\in \mZ^{+}$\;, we have
    \begin{align}
        |\eta_i - \beta_i| \leq \|T_{\bW-\bW'} \|_2 \leq \|\bW - \bW'\|_2\;.
    \end{align}
\end{lemma}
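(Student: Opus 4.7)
The plan is to establish the two inequalities separately: the right bound $\|T_{\bW-\bW'}\|_2 \leq \|\bW-\bW'\|_2$ is a direct Cauchy--Schwarz calculation identifying the Hilbert--Schmidt kernel norm as an upper bound on the $L^2$-operator norm, while the Weyl-type bound $|\eta_i-\beta_i| \leq \|T_{\bW-\bW'}\|_2$ is obtained from the min-max (Courant--Fischer) characterization of eigenvalues of compact self-adjoint operators. For the right inequality, writing $\bD \dff \bW-\bW'$ and applying Cauchy--Schwarz to $(T_{\bD}f)(u) = \int_0^1 \bD(u,v) f(v)\, dv$ yields $|(T_{\bD}f)(u)|^2 \leq \int_0^1 \bD(u,v)^2 \, dv \cdot \|f\|_2^2$. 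Integrating over $u \in [0,1]$ and taking the supremum over $\|f\|_2 = 1$ gives $\|T_{\bD}\|_2 \leq \|\bD\|_2$.

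For the left inequality, I would invoke the fact that because $\bW$ and $\bW'$ are bounded symmetric measurable kernels (Definition~\ref{grphon}), the integral operators $T_{\bW}$ and $T_{\bW'}$ are Hilbert--Schmidt, hence compact and self-adjoint, on $L^2([0,1])$. Their nonzero eigenvalues, indexed in $\mZ \setminus \{0\}$ and separated into positive and negative branches as in~\eqref{spectralgraphon}, admit the Courant--Fischer characterization
\begin{align}
    \eta_i^{+} = \max_{\substack{V \subset L^2([0,1]) \\ \dim V = i}} \min_{\substack{f \in V \\ \|f\|_2 = 1}} \langle T_{\bW} f, f \rangle\;,
\end{align}
with an analogous formula for $\eta_i^{-}$ obtained by applying Courant--Fischer to $-T_{\bW}$. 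Using the linear decomposition $T_{\bW} = T_{\bW'} + T_{\bW-\bW'}$ and the bound $|\langle T_{\bW-\bW'} f, f\rangle| \leq \|T_{\bW-\bW'}\|_2$ for every unit-norm $f$, selecting $V$ to be the optimal $i$-dimensional subspace for $\beta_i^{+}$ yields $\eta_i^{+} \geq \beta_i^{+} - \|T_{\bW-\bW'}\|_2$. Interchanging the roles of $\bW$ and $\bW'$ gives the reverse inequality, and the same argument applied to $-T_{\bW}$ handles the negative branch.

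The main obstacle is not technical but conventional: the graphon eigenvalues are indexed so as to separate positive and negative parts, so the Courant--Fischer comparison must be made \emph{within the same sign class} in order to pair $\eta_i$ and $\beta_i$ in a meaningful way before substituting the operator-norm bound. Once that matching is fixed (the same convention used in~\eqref{spectralgraphon}), the two inequalities chain to give the stated result $|\eta_i - \beta_i| \leq \|T_{\bW-\bW'}\|_2 \leq \|\bW-\bW'\|_2$ for every $i \in \mZ^{+}$.
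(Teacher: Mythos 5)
Your proof is correct and takes the same route as the source: the paper itself contains no proof of this lemma --- it is imported verbatim as Proposition 4 of~\cite{ruiz2020graphon} --- and the argument there is precisely yours, namely the Cauchy--Schwarz/Hilbert--Schmidt bound giving $\|T_{\bW-\bW'}\|_2 \leq \|\bW-\bW'\|_2$ combined with a Courant--Fischer (Weyl-type) min-max comparison of eigenvalues carried out separately on the positive and negative branches of the spectrum. Your caveat about pairing $\eta_i$ with $\beta_i$ within the same sign class is well taken, since the paper's restatement (indexing $\{\eta_i\}_{i=1}^{\infty}$ over $i\in\mZ^{+}$) silently collapses the $\mZ\setminus\{0\}$ sign-split convention of~\eqref{spectralgraphon} under which the cited result is actually stated.
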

\noindent
Next, we leverage Lemmas~\ref{lm3},~\ref{diff_sig}, and~\ref{prop4} to bound the difference between graphon convolution $\Psi(y;\bW,\cH)$ and convolution by the approximation $\Psi(y_{\bx_m};\bW_{\bC_m},\cH)$ realized from graph filter over $\bC_m$. The subsequent line of analysis is similar to that in~\cite{ruiz2021graphon}. For this purpose, we have the following assumptions.
\begin{itemize}
    \item[A1] The graphon $\bW$ is $\alpha$-Lipschitz.
    \item[A2] The graphon signal $y$ is $\alpha_2$-Lipschitz.
    \item[A3] The covariance matrix $\bC_m$ belongs to a convergent $(\Omega,\zeta)$-dominant sequence of covariance matrices. Also, the corresponding sequence of graphon approximations belongs to the family of graphon $\bW$.
    \item[A4] The frequency response is band-limited, such that, $|\tilde h(\eta)| = 0$ for $\eta \leq \eta_{\sf c}$. Furthermore, we assume that $m_{\sf c}$ largest eigenvalues of graphon $\bW$ in terms of magnitude satisfy $|\eta|  > \eta_{\sf c}$ and the set of such eigenvalues is denoted by $\cC$. Also, the graphon filter is non-expanding and satisfies $|\tilde h(\eta)| \leq 1,\forall \eta$ and $|\tilde h(\eta_i) - \tilde h(\hat\eta_i)| \leq \alpha_3|\eta_i - \hat\eta_i|$.
\end{itemize}
In the following Lemma, we use the notations $\{\hat\eta_i\}$ and $\{\hat\Gamma_i\}$ for the set of eigenvalues and eigenfunctions, respectively, of $\bW_{\bC_m}$.
\begin{lemma}[Transferability of Graphon Filters]\label{graphon_stability}
    For a convolution $\Psi(y;\bW,\cH)$ and its approximation~$\Psi(y_{\bx_m};\bW_{\bC_m},\cH)$, under the assumptions A1-A4 and for $\|y\|_2\leq 1$, we have
    \begin{align}
        \|\Psi(y;\bW,\cH) - \Psi(y_{\bx_m};\bW_{\bC_m},\cH)\|_2 \leq \frac{ \Omega^{3/2}}{m^{3\zeta/2-1}} \left(\alpha_2+\alpha \Big[\alpha_3 + \frac{\pi m_{\sf c}}{2\Delta_c} \Big] \right)\;,
    \end{align}
    where $\Delta_c = \min_{i\neq j; i, j \in \cC} \{|\eta_i - \hat\eta_j|\}$.
\end{lemma}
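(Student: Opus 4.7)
The plan is to reduce the bound to two independent error sources: a \emph{signal} discrepancy $\lVert y - y_{\bx_m}\rVert_2$ and a \emph{filter} discrepancy between $\Psi(\cdot;\bW,\cH)$ and $\Psi(\cdot;\bW_{\bC_m},\cH)$. Concretely, I will insert the intermediate object $\Psi(y_{\bx_m};\bW,\cH)$ and apply the triangle inequality:
\begin{align*}
\lVert \Psi(y;\bW,\cH) - \Psi(y_{\bx_m};\bW_{\bC_m},\cH)\rVert_2
&\le \lVert \Psi(y;\bW,\cH) - \Psi(y_{\bx_m};\bW,\cH)\rVert_2 \\
&\quad + \lVert \Psi(y_{\bx_m};\bW,\cH) - \Psi(y_{\bx_m};\bW_{\bC_m},\cH)\rVert_2.
\end{align*}
For the first term, the operator $\Psi(\cdot;\bW,\cH)$ is linear in its signal argument and non-expansive under A4 ($|\tilde h(\eta)|\le 1$), so it is bounded by $\lVert y - y_{\bx_m}\rVert_2$, which Lemma~\ref{diff_sig} controls by $\alpha_2\Omega^{3/2}/m^{3\zeta/2-1}$. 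This yields the first summand of the claimed bound.

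For the second (filter) term I would expand using the spectral representation of the graphon convolution from~\eqref{g3}, restricted by assumption A4 to the band $\cC$ of $m_{\sf c}$ eigenvalues whose magnitude exceeds $\eta_{\sf c}$:
\begin{align*}
\Psi(y_{\bx_m};\bW,\cH) - \Psi(y_{\bx_m};\bW_{\bC_m},\cH)
= \sum_{i\in\cC} \tilde h(\eta_i)\langle y_{\bx_m},\Gamma_i\rangle \Gamma_i
- \sum_{i\in\cC} \tilde h(\hat\eta_i)\langle y_{\bx_m},\hat\Gamma_i\rangle \hat\Gamma_i.
\end{align*}
I would then add and subtract $\tilde h(\hat\eta_i)\langle y_{\bx_m},\Gamma_i\rangle \Gamma_i$ inside each summand, splitting the filter error into (i) an \emph{eigenvalue} contribution, bounded term by term via the Lipschitz hypothesis in A4 as $|\tilde h(\eta_i) - \tilde h(\hat\eta_i)|\le \alpha_3|\eta_i-\hat\eta_i|$, and (ii) an \emph{eigenfunction} contribution. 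For (i), Lemma~\ref{prop4} together with Lemma~\ref{lm3} immediately gives $|\eta_i-\hat\eta_i|\le \alpha\Omega^{3/2}/m^{3\zeta/2-1}$ uniformly in $i$, contributing $\alpha\alpha_3 \Omega^{3/2}/m^{3\zeta/2-1}$ after summing (the $m_{\sf c}$ factor is absorbed since each eigenprojection acts on the unit-norm signal through an orthonormal basis).

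The main obstacle is part (ii), the eigenfunction perturbation, since graphon eigenfunctions are only determined up to the spectral gap. I will invoke a Davis–Kahan-type bound (as used in~\cite{ruiz2020graphon}) to translate $\lVert \bW-\bW_{\bC_m}\rVert_2$ into $\lVert \Gamma_i-\hat\Gamma_i\rVert_2 \le \pi \lVert \bW-\bW_{\bC_m}\rVert_2/(2\Delta_c)$, where $\Delta_c = \min_{i\ne j; i,j\in\cC}|\eta_i - \hat\eta_j|$ supplies the requisite gap. Summing over the $m_{\sf c}$ in-band eigenvalues, using $|\tilde h|\le 1$, Cauchy–Schwarz on $\langle y_{\bx_m},\cdot\rangle$ with $\lVert y_{\bx_m}\rVert_2\le \lVert y\rVert_2\le 1$, and Lemma~\ref{lm3} again, the eigenfunction contribution is bounded by $\alpha\pi m_{\sf c}\Omega^{3/2}/(2\Delta_c m^{3\zeta/2-1})$. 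Combining this with parts (i) and the signal term then yields the stated inequality. The delicate points to manage carefully are the orientation/sign choice of each $\hat\Gamma_i$ relative to $\Gamma_i$ (to make Davis–Kahan give a norm bound rather than a subspace-angle bound), the bandlimit truncation so that the tail outside $\cC$ vanishes thanks to $\tilde h(\eta)=0$ for $|\eta|\le \eta_{\sf c}$, and verifying that $\Delta_c$ remains positive for all $m$ large enough so the eigenfunction bound is meaningful.
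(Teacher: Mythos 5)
Your proposal is correct and follows essentially the same route as the paper's proof: a triangle inequality through a single intermediate object, a spectral add-and-subtract that splits the kernel error into an eigenvalue part (Lipschitz frequency response together with Lemmas~\ref{prop4} and~\ref{lm3}) and an eigenfunction part (a Davis--Kahan-type bound with gap $\Delta_c$ and the bandlimit $m_{\sf c}$ from A4), plus Lemma~\ref{diff_sig} for the signal term. The only divergence is that you insert $\Psi(y_{\bx_m};\bW,\cH)$ whereas the paper inserts $\Psi(y;\bW_{\bC_m},\cH)$; the paper's ordering keeps the kernel-difference term acting on $y$ with $\|y\|_2\le 1$, while yours acts on $y_{\bx_m}$, whose norm is \emph{not} literally bounded by $\|y\|_2$ (it is a step-function sampling of $y$, not an orthogonal projection), so you would need the extra observation $\|y_{\bx_m}\|_2 \le \|y\|_2 + \|y-y_{\bx_m}\|_2 \le 1 + O(m^{-(3\zeta/2-1)})$, which only perturbs the stated constant at higher order.
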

\begin{proof}
    Note that we can rewrite $ \|\Psi(y;\bW,\cH) - \Psi(y_{\bx_m};\bW_{\bC_m},\cH)\|_2$ as
    \begin{align}
         \|\Psi(y;\bW,\cH) - \Psi(y_{\bx_m};\bW_{\bC_m},\cH)\|_2 &=  \|\Psi(y;\bW,\cH) - \Psi(y;\bW_{\bC_m},\cH)  \nonumber\\
         &\quad +\Psi(y;\bW_{\bC_m},\cH) - \Psi(y_{\bx_m};\bW_{\bC_m},\cH)\|_2 \;,
    \end{align}
    and using triangle inequality, we have
    \begin{align}\label{ger1}
         \|\Psi(y;\bW,\cH) - \Psi(y_{\bx_m};\bW_{\bC_m},\cH)\|_2 &\leq \underbrace{\|\Psi(y;\bW,\cH) - \Psi(y;\bW_{\bC_m},\cH)\|_2 }_\text{Term 1} \nonumber \\
         &\quad +\underbrace{\|\Psi(y;\bW_{\bC_m},\cH) - \Psi(y_{\bx_m};\bW_{\bC_m},\cH)\|_2}_\text{Term 2} \;.
    \end{align}
\noindent
Next, we analyze Terms 1 and 2 from \eqref{ger1} separately. 

\noindent
{\bf Analysis of Term 1}.  Using the expansion of $\Psi(y;\bW,\cH)$ and $\Psi(y;\bW_{\bC_m},\cH)$, we have
\begin{align}
    \|\Psi(y;\bW,\cH) - \Psi(y;\bW_{\bC_m},\cH)\|_2 = \Big(\int_0^1 f^2(v){\sf d}v\Big)^{1/2}\;,
\end{align}
where
\begin{align}\label{fv}
    f(v) = \sum\limits_{i=\in \mZ\backslash \{0\}}\Big[ \tilde h(\eta_i)\Gamma_i(v) \int_0^1 y(u)\Gamma_i(u) {\sf d}u - \tilde h(\hat\eta_i)\hat\Gamma_i(v) \int_0^1 y(u) \hat\Gamma_i(u) {\sf d}u\Big]\;.
\end{align}
By adding and subtracting $\tilde h(\hat\eta_i)\Gamma_i(v) \int_0^1 y(u)\Gamma_i(u) {\sf d}u $ in~\eqref{fv} and using the triangle inequality, we have
\begin{align}
     \|\Psi(y;\bW,\cH) - \Psi(y;\bW_{\bC_m},\cH)\|_2 \leq \Big(\int_0^1 f_1^2(v){\sf d}v\Big)^{1/2} + \Big(\int_0^1 f_2^2(v){\sf d}v\Big)^{1/2} = \|f_1\|_2 + \|f_2\|_2\;,
\end{align}
where
\begin{align}
    f_1(v) = \sum\limits_{i\in \mZ\backslash \{0\}}\Big[  (\tilde h(\eta_i) - \tilde h(\hat\eta_i))\Gamma_i(v) \int_0^1 y(u)\Gamma_i(u) {\sf d}u\Big]\;,
\end{align}
and
\begin{align}
    f_2(v) = \sum\limits_{i\in \mZ\backslash \{0\}}\Big[ \tilde h(\hat\eta_i)\Gamma_i(v) \int_0^1 y(u)(\Gamma_i(u) - \hat\Gamma_i(u)) {\sf d}u\Big] \;.
\end{align}
Using the Lipschitz property of graphon filter, we have $|\tilde h(\eta_i) - \tilde h(\hat\eta_i)| \leq \alpha_3|\eta_i - \hat\eta_i|$. Therefore, using Lemma~\ref{prop4} and Lemma~\ref{lm3}, we have 
\begin{align}\label{bnf1}
    \|f_1\|_2 \leq \frac{\alpha_3\alpha\Omega^{3/2}}{m^{3\zeta/2-1}}\;.
\end{align}
for any $y$ that satisfies $\|y\|_2 \leq 1$. To analyze $\|f_2\|_2$, we note that by Cauchy-Schwarz inequality, we have
\begin{align}
    \|f_2\|_2 &\leq \sum\limits_{i\in \mZ\backslash \{0\}} |\tilde h(\hat\eta_i)| \|\Gamma_i\|_2 \|y (\Gamma_i - \hat\Gamma_i)\|_2\;,\label{pl1}\\
    &\leq \sum\limits_{i\in \mZ\backslash \{0\}}|\tilde h(\hat\eta_i)| \|\Gamma_i - \hat\Gamma_i\|_2\;,\label{pl2}
\end{align}
where~\eqref{pl2} follows from~\eqref{pl1}, without loss of generality for $\|y\|_2=1$, $\|\Gamma_i\|_2=1$ and another application of Cauchy-Schwarz inequality. Next, we note that the integral operator $T_{\bW}$, such that, $(T_{\bW}y)(v) = \int_{0}^1 \bW(u,v) y(u) {\sf d}u$ is a self-adjoint Hilbert-Schmidt operator and $\bW$ admits the spectral decomposition with $\{\eta_i\}$ as eigenvalues and $\{\Gamma_i\}$ as eigensignals. Therefore, to analyze $\|\Gamma_i - \hat\Gamma_i\|_2$, we note that $\Gamma_i$ is projection of operator $T_{\bW}$ associated with eigenvalue $\eta_i$ and $\hat\Gamma_i$ is projection of operator $T_{\bW_{\bC_m}}$ associated with eigenvalue $\hat\eta_i$. By dividing the spectrum of $T_{\bW}$ as ${\sf spec}(T_{\bW}) = \{\eta_i\} \cup \{\eta_j\}_{j\neq i}$ and that of $T_{\bW_{\bC_m}}$ as ${\sf spec}(T_{\bW_{\bC_m}}) = \{\hat\eta_i\} \cup \{\hat\eta_j\}_{j\neq i}$, we apply Proposition 2.3 from~\cite{seelmann2014notes} to have
\begin{align}\label{pol1}
    \|\Gamma_i - \hat\Gamma_i\|_2 \leq \frac{\pi}{2} \frac{\|T_{\bW} - T_{\bW_{\bC_m}}\|_2}{d_i}\;,
\end{align}
where $d_i>0$ is a constant that satisfies $|\eta_i - \hat\eta_{i+1}| \geq d_i$, $|\eta_i - \hat\eta_{i-1}| \geq d_i$, $|\eta_{i+1} - \hat\eta_{i}| \geq d_i$, and $|\eta_{i-1} - \hat\eta_i| \geq d_i$. Using~\eqref{pol1}, Lemma~\ref{prop4} and Lemma~\ref{lm3} in~\eqref{pl2}, we have
\begin{align}\label{bnd1}
    \|f_2\|_2 \leq \frac{\pi \alpha\Omega^{3/2}}{2\Delta_cm^{3\zeta/2-1}} \sum\limits_{i\in \mZ\backslash{0}} |\tilde h(\hat\eta_i)|\;,
\end{align}
where $\Delta_c = \min_{i} d_i$. Next, we note that under Assumption A4, the frequency response of graphon filter is band-limited, i.e., we have $\sum\limits_{i\in\mZ\backslash{0}} |\tilde h(\hat\eta_i)|\leq m_{\sf c}$ when the frequency response is limited according to A4. We denote the set of $m_{\sf c}$ largest eigenvalues (in terms of magnitude) of $\bW$ by ${\cal C}$.  In this scenario, we can rewrite~\eqref{bnd1} as
\begin{align}\label{bnd2}
    \|f_2\|_2 \leq \frac{\pi \alpha \Omega^{3/2} m_{\sf c}}{2\Delta_cm^{3\zeta/2-1}}\;.
\end{align}
Clearly, there is a trade-off between $m_c$ and $\zeta$ as we must have $m_c < m^{3\zeta/2-1}$ and $\zeta > 2/3$ for~\eqref{bnd2} to have decreasing behavior in $m$. Equations~\ref{bnf1} and~\ref{bnd2} provide the upper bound on Term 1.

\noindent
{\bf Analysis of Term 2}. We can expand term 2 as
\begin{align}
    \|\Psi(y;\bW_{\bC_m},\cH) - \Psi(y_{\bx_m};\bW_{\bC_m}, \cH)\|_2 = \Big(\int_0^1g^2(v) {\sf d}v\Big)^{1/2}\;,
\end{align}
where
\begin{align}\label{t21}
    g(v) = \sum\limits_{i=1}^{\infty} \Big[ \tilde h(\eta_i) \hat\Gamma_i(v) \int_0^1 (y(u)-y_{\bx_m}(u))\hat\Gamma_i(u) {\sf d}u \Big]\;.
\end{align}
Therefore, using~\eqref{t21}, we have
\begin{align}
     \|\Psi(y;\bW_{\bC_m},\cH) - \Psi(y_{\bx_m};\bW_{\bC_m}, \cH)\|_2 =  \|\Psi(y-y_{\bx_m};\bW_{\bC_m},\cH)\|_2\;.
\end{align}
Note that for a frequency response that satisfies $\tilde h(\eta) < 1$, the graphon filter is non-expanding and therefore, we have
\begin{align}
    \|\Psi(y;\bW_{\bC_m},\cH) - \Psi(y_{\bx_m};\bW_{\bC_m}, \cH)\|_2 \leq \|y-y_{\bx_m}\|_2\;.
\end{align}
Using Lemma~\ref{diff_sig}, we have
\begin{align}\label{bndt3}
     \|\Psi(y;\bW_{\bC_m},\cH) - \Psi(y_{\bx_m};\bW_{\bC_m}, \cH)\|_2 \leq \frac{\alpha_2 \Omega^{3/2}}{m^{3\zeta/2 -1}}\;.
\end{align}
Therefore, by combining the upper bounds on Term 1 and Term 2 from~\eqref{bnf1},~\eqref{bnd2}, and~\eqref{bndt3}, the proof of Lemma~\ref{graphon_stability} is concluded. 
\end{proof}
\noindent 
Lemma~\ref{graphon_stability} establishes the transference between the graphon $\bW$ and the graphon approximation $\bW_{\bC_m}$ obtained from the covariance matrix $\bC_m$. We leverage the result in Lemma~\ref{graphon_stability} to establishing the transference for graphon neural networks in a similar setting. We denote the $f$-th output for graphon neural network $ \tilde\Psi(y;\bW_{\bC_m},\cH)$ with $F$ outputs in the final layer by $[\tilde\Psi(y;\bW_{\bC_m},\cH)]_f$. 
\begin{lemma}[Transferability of Graphon Neural Networks]\label{grphnvnn}
Consider a graphon neural network $\tilde\Phi(\cdot;\bW,\cH)$ with $L$ layers and $F$ outputs per layer and a VNN  $\Phi(\cdot;\bC_{m},\cH)$ with graphon neural network representation as $\tilde\Phi(\cdot;\bW_{\bC_m},\cH)$. If the covariance matrix $\bC_{m}$ belongs to a $(\Omega,\zeta)$-dominant sequence of covariance matrices and its graphon approximation $\bW_{\bC_m}$ belongs to a graphon family of $\alpha$-Lipschitz graphon $\bW$, then under the assumptions A1-A4, for $\|y\|_2\leq 1$ and $2/3<\beta\leq 1$, we have
\begin{align}
\|[\tilde\Phi(y;\bW,\cH)]_f -[\tilde\Phi(y_{\bx_m};\bW_{\bC_m},\cH)]_f \|_2 \leq {LF^{L}} \left(\frac{ \Omega^{3/2}}{m^{3\zeta/2-1}} \left[\alpha_2 + \alpha\left[\alpha_3 + \frac{\pi m_{\sf c}}{2\Delta_c}\right] \right]\right)\;.\label{pf_ty}
\end{align}

\end{lemma}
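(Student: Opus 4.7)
The plan is to prove \eqref{pf_ty} by induction on the number of layers $L$, using the single graphon filter transferability bound of Lemma~\ref{graphon_stability} as the workhorse at each layer and carefully tracking how the $F^{L}$ factor arises from the MIMO aggregation. Let $y_{\ell}[f]$ denote the $f$-th output of layer $\ell$ of $\tilde\Phi(y;\bW,\cH)$ and $y_{\ell}^{(m)}[f]$ the corresponding output of $\tilde\Phi(y_{\bx_m};\bW_{\bC_m},\cH)$, with recursion $y_{\ell}[f] = \sigma\!\left(\sum_{g=1}^{F}\Psi(y_{\ell-1}[g];\bW,\cH_{fg}^{\ell})\right)$ and the analogous expression in the discretized world. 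Denote the filter-level bound by $B := \frac{\Omega^{3/2}}{m^{3\zeta/2-1}}\bigl(\alpha_{2} + \alpha\bigl[\alpha_{3} + \pi m_{\sf c}/(2\Delta_{c})\bigr]\bigr)$.

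The inductive step is the core of the argument. Fix $\ell \ge 1$ and bound $\|y_{\ell}[f] - y_{\ell}^{(m)}[f]\|_{2}$ by dropping the outer $\sigma$ using its $1$-Lipschitz property, then adding and subtracting $\Psi(y_{\ell-1}[g];\bW_{\bC_m},\cH_{fg}^{\ell})$ inside the sum over $g$ and applying the triangle inequality. This decomposes each summand into a \emph{filter transfer} piece $\|\Psi(y_{\ell-1}[g];\bW,\cH_{fg}^{\ell}) - \Psi(y_{\ell-1}[g];\bW_{\bC_m},\cH_{fg}^{\ell})\|_{2}$, controlled by $B$ via Lemma~\ref{graphon_stability}, and an \emph{input transfer} piece $\|\Psi(y_{\ell-1}[g];\bW_{\bC_m},\cH_{fg}^{\ell}) - \Psi(y_{\ell-1}^{(m)}[g];\bW_{\bC_m},\cH_{fg}^{\ell})\|_{2}$, which by the non-expansiveness $|\tilde h(\eta)| \le 1$ from assumption A4 is at most $\|y_{\ell-1}[g] - y_{\ell-1}^{(m)}[g]\|_{2}$. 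Summing over $g \in \{1,\dots,F\}$ and taking the maximum over $f$ yields the clean recursion $D_{\ell} \le F\,B + F\,D_{\ell-1}$, where $D_{\ell} := \max_{f}\|y_{\ell}[f] - y_{\ell}^{(m)}[f]\|_{2}$.

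To finish, I would note that the base case is $D_{0} = \|y - y_{\bx_m}\|_{2}$, which by Lemma~\ref{diff_sig} is already bounded by the $\alpha_{2}$-piece of $B$, hence $D_{0} \le B$. Unrolling $D_{\ell} \le F\,B + F\,D_{\ell-1}$ from $\ell = 1$ to $\ell = L$ gives $D_{L} \le B\sum_{k=1}^{L} F^{k} \le L\,F^{L}\,B$ for $F \ge 1$, which is exactly the claim \eqref{pf_ty}. Taking the $f$-th output rather than the maximum does not change the bound.

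The main obstacle will be confirming that Lemma~\ref{graphon_stability} can legitimately be invoked at every layer with $y_{\ell-1}[g]$ in place of $y$: one must verify that $\|y_{\ell-1}[g]\|_{2} \le 1$ (which propagates inductively from $\|y\|_{2} \le 1$ using the non-expansiveness of both the filters and $\sigma$), and, more subtly, that the Lipschitz-type regularity constants $\alpha_{2}$ and $\alpha_{3}$ governing $B$ do not blow up from layer to layer. This requires a companion observation that graphon convolutions map Lipschitz signals to Lipschitz signals with a constant controlled by the filter frequency response under the band-limited assumption A4, after which the same $B$ (possibly with layer-independent constants absorbed into $\beta$ in the final statement of Theorem~\ref{transferthm}) can be reused uniformly across all $L$ layers.
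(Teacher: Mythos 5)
Your layer-peeling recursion $D_{\ell}\le F\,B + F\,D_{\ell-1}$, driven by Lemma~\ref{graphon_stability} for the filter-transfer piece and the non-expansiveness in A4 for the input-transfer piece, is exactly the argument the paper relies on (its proof of Lemma~\ref{grphnvnn} simply defers to the corresponding layer-by-layer steps, equations (23)--(28), of the cited graphon-NN transferability proof), so your proposal matches the paper's route, including your correctly flagged need to propagate $\|y_{\ell}[g]\|_{2}\le 1$ and the Lipschitz constants across layers. The only quibble is bookkeeping: unrolling with $D_{0}\le B$ treated as a separate base case yields $B\bigl(\sum_{k=1}^{L}F^{k}+F^{L}\bigr)\le (L+1)F^{L}B$ rather than $LF^{L}B$; to recover the stated constant one should note that the $\alpha_{2}$ portion of $B$ is precisely what absorbs the input difference at the first layer (so layer one contributes $FB$ total, not $2FB$), though the discrepancy is harmless since Theorem~\ref{transferthm} absorbs such constants into $\beta$.
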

\noindent
The proof of Lemma~\ref{grphnvnn} leverages Lemma~\ref{graphon_stability} and accommodates the impact of multi-layer VNN architecture. We refer the reader to equations (23)-(28) in~\cite{ruiz2020graphon} for exact analytical steps. Finally, by applying the triangle inequality on~\eqref{pf_ty}, we establish the transference between graphon neural network approximations $\tilde\Phi(\cdot;\bW_{\bC_{m_1}},\cH)$ and $\tilde\Phi(\cdot;\bW_{\bC_{m_2}},\cH)$ for VNNs $\Phi(\cdot;{\bC_{m_1}},\cH)$ and $\Phi(\cdot;{\bC_{m_2}},\cH)$, respectively, and the proof of Theorem~\ref{transferthm} is concluded.  

\clearpage

\section{Convergence of covariance matrices from FTDC datasets}\label{fig_converge}
Cut metric allows the comparison of matrix representations of graphs of different sizes~\cite{lovasz2012large}. Here, an estimated cut norm between a pair of covariance matrices was evaluated using the {\sf cutnorm} package in python (implementation based on~\cite{alon2004approximating}). Here, we investigate the convergence of the covariance matrices formed by datasets curated according to different scales of the Schaefer's atlas. For this purpose, we also consider the datasets curated according to 200 and 400 parcellations of Schaefer's atlas in addition to the FTDC datasets. Figure~\ref{cutmetrics}a plots the cut norm evaluated for the series $\{\hat\bC_{100},\hat\bC_{200}, \hat\bC_{300},\hat\bC_{400}, \hat\bC_{500}\}$, where $\hat\bC_{m}$ is the anatomical covariance matrix constructed from the cortical thickness features curated according to $m$ parcellations of Schaefer's atlas. The cut norm between the consecutive members of this series reduced with an increase in the number of parcels, thus, consistent with the properties of a Cauchy sequence in this metric. 
\begin{figure}[htbp]
  \centering
  \includegraphics[scale=0.5]{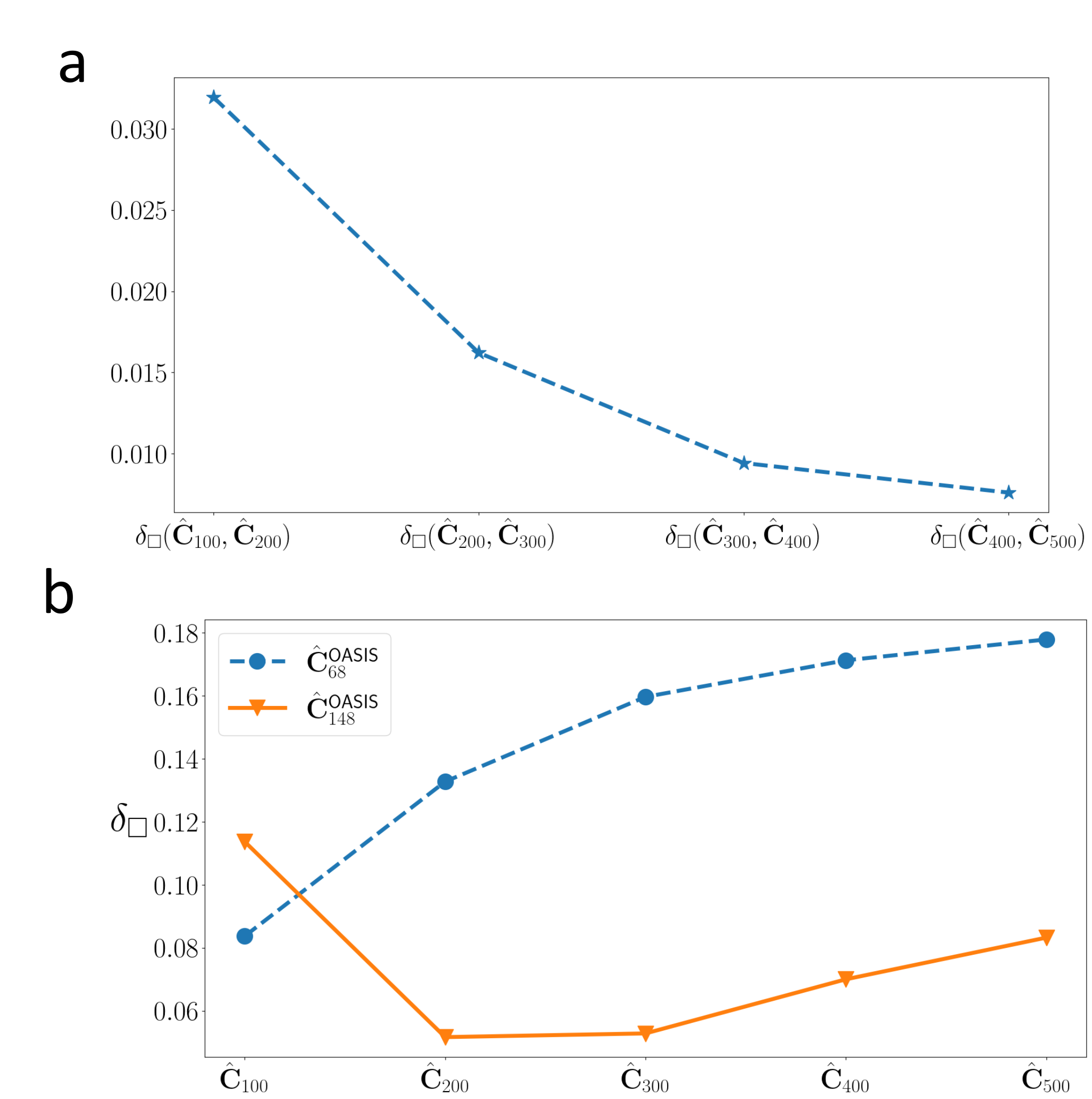}
   \caption{{\bf Cut norm between covariance matrices.} Panel~{\bf a} plots the estimated cut norm between consecutive elements of the series of covariance matrices $\{\hat\bC_{100},\hat\bC_{200}, \hat\bC_{300},\hat\bC_{400}, \hat\bC_{500}\}$. Panel~{\bf b} plots the estimated cut norm between the covariance matrices derived from OASIS-3 dataset according to DK and DKT atlases and the covariance matrices from FTDC datasets.}
   \label{cutmetrics}
\end{figure}
We also note that the distance between covariance matrices derived from OASIS-3 dataset and those from FTDC datasets was significantly greater than the distances between the covariance matrices associated with different resolutions of Schaefer's atlas (Fig.~\ref{cutmetrics}b). In Fig.~\ref{cutmetrics}b, the covariance matrix $\hat\bC_{148}^{\sf OASIS}$ was estimated from the cortical thickness features curated according to DKT atlas for HC group. The  $\hat\bC_{68}^{\sf OASIS}$ was estimated from the cortical thickness features curated according to Desikan-Killiany atlas~\cite{desikan2006automated} for the same group. According to Theorem~\ref{transferthm}, the VNN performance is transferable across datasets whose covariance matrices are part of a converging sequence. Hence, we expected the VNNs to be transferable between the cortical thickness datasets curated according to different scales of Schaefer's atlas. However, VNN transferability was not guaranteed between datasets organized according to Schaefer's atlas and those according to DKT atlas.


\section{Summary of the performance of VNNs on the test sets}\label{vnn_test_perf}
Here, we provide the performance of nominal VNNs on the test set for different sets of VNNs that were trained on FTDC100, FTDC500, and OASIS-3 datasets. The test set was completely unseen during the training procedure.

\noindent
{\bf OASIS-3}: The MAE across $100$ nominal models on the test set was $5.87\pm 0.177$ years and Pearson's correlation was $0.425\pm 0.006$.

\noindent
{\bf FTDC100}: The MAE across $100$ nominal models on the test set was $3.72\pm 0.22$ years and Pearson's correlation was $0.78\pm 0.01$. 

\noindent
{\bf FTDC300}: The MAE across $100$ nominal models on the test set was $3.93\pm 0.34$ years and Pearson's correlation was $0.75\pm 0.008$.

\noindent
{\bf FTDC500}: The MAE across $100$ nominal models on the test set was $4.07\pm 0.36$ years and Pearson's correlation was $0.72\pm 0.007$. 

\clearpage

\section{Illustration of regional residual analysis from VNN model outputs}\label{regional_illust}
In this section, we demonstrate the regional analysis described in Section~\ref{regbage} for a VNN model that was trained to predict chronological age for HC group in OASIS-3 dataset. All mathematical notations referred to in this section are borrowed from Section~\ref{regbage}. Note that no further training was performed for this VNN model to evaluate brain age or regional residuals. 

The covariance matrix in this VNN model was replaced with ${\bC}^{\sf AD+}_{148}$ and a combined group of HC and AD+ individuals were processed. Thus, for each individual $i$ in this combined dataset, we obtained a age prediction $\hat y_i$ and a vector of residuals $\br_i$. The size of residual vector $\br_i$ was $148\times 1$ and hence, each element of $\br_i$ corresponded to a distinct brain region as defined by the DKT brain atlas with 148 parcellations. By evaluating the vector of residuals $\br_i$ for every individual in the combined dataset, we were able to form a population of residual vectors from HC group (referred to as $\br_{\sf HC}$) and AD+ group (referred to as $\br_{\sf AD+}$). The elements of these residual vectors are referred to as regional residuals throughout the paper.

Each dimension of these residual vectors was investigated for group differences between HC and AD+ groups via ANOVA as described in Section~\ref{regbage}. Thus, for every VNN model, we eventually performed $m = 148$ number of ANOVA tests and evaluated the brain regions for significance in group differences in their respective residuals. The significance of group differences between the distributions of regional residuals for HC and AD+ groups corresponding to a brain region was determined after correcting the $p$-values of ANOVA test for multiple comparisons via Bonferroni correction (Bonferroni corrected $p$-value $< 0.05$). The group differences were additionally investigated for significance at an uncorrected level using ANCOVA with age and sex as covariates.

Figure~\ref{example_AD} illustrates the results obtained via ANOVA in this context. The brain regions are shaded according to the magnitude of the F-values from ANOVA and all shaded regions were significant according to the criteria provided in Section~\ref{regbage}. The box plots for various brain regions show that the residuals were significantly elevated in AD+ group as compared to HC. Although many brain regions were identified as significant, we note that there was a clear variability in significance, with some brain regions having more significant group difference than others. We had 100 trained VNN models for the OASIS-3 dataset and performed similar analyses for each of them. Further, we counted the number of models for which the above described analysis yielded a brain region to be significant. A brain region with robust group difference in its regional residual distribution in HC vs AD+ was expected to be more frequently labeled as significant by the VNN models. The results of this robustness analyses on the OASIS-3 dataset are shown in Fig.~\ref{roi_AD}b. 

Figure~\ref{example_AD_FTDC} illustrates the results obtained by analyzing the regional residuals that were derived by transferring a VNN model from FTDC100 dataset to OASIS-3 dataset. Here, we observed that  brain regions similar to those in Fig.~\ref{example_AD} exhibited significantly elevated regional residuals in the AD+ group (albeit with lower significance in terms of $F$-values of the ANOVA test). Thus, Fig.~\ref{example_AD_FTDC} provides the evidence that VNN trained on FTDC100 dataset can transfer the interpretability from FTDC100 dataset to OASIS-3 dataset despite the lack of transferability of quantitative performance. 


\begin{figure}[!htbp]
  \centering
  \includegraphics[scale=0.4]{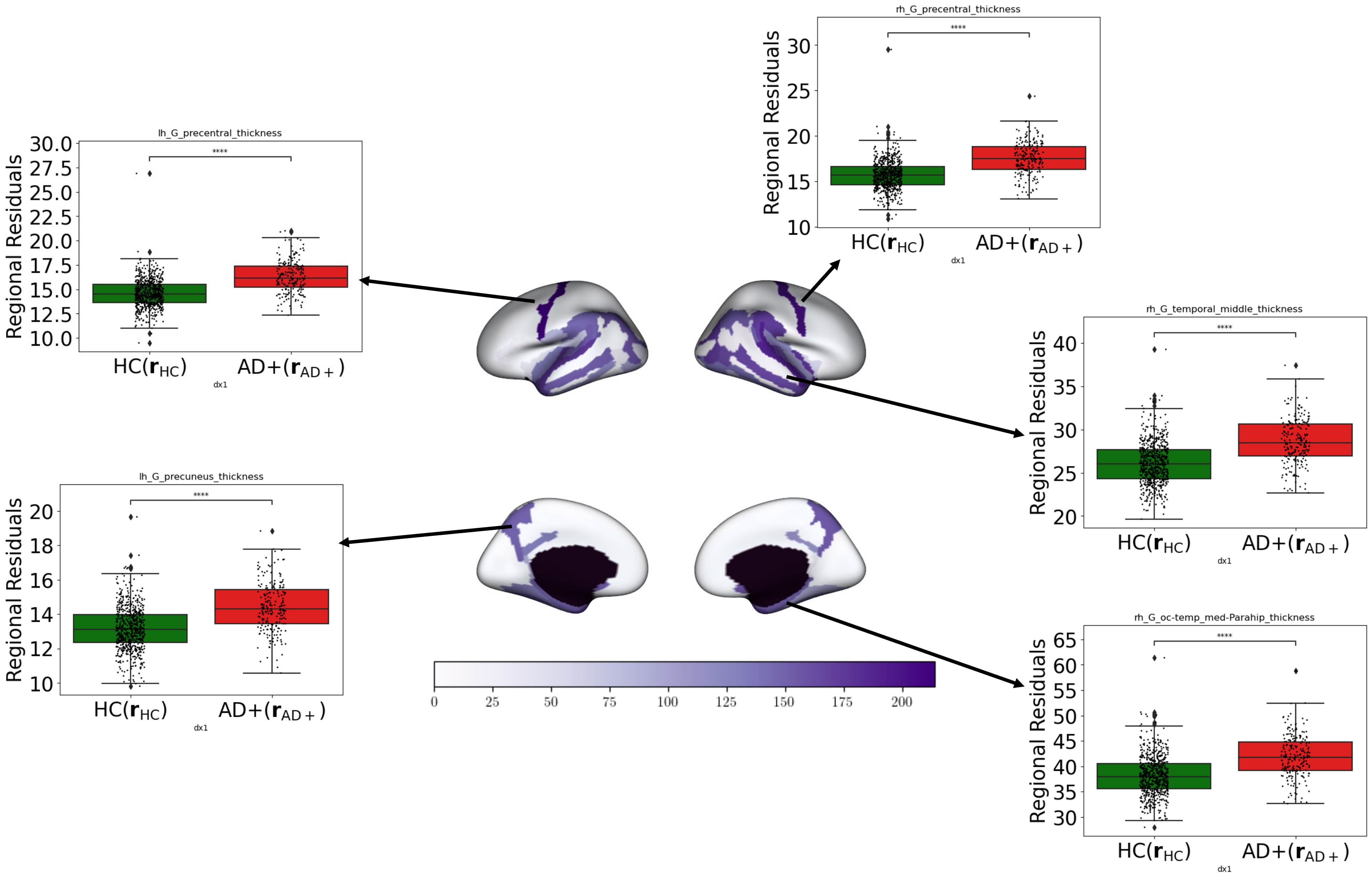}
   \caption{Results depicting the brain regions with significantly elevated regional residuals for AD+ group with respect to HC group in OASIS-3. The results here were derived by a VNN model that was trained as a regression model to predict chronological age from cortical thickness data for HC group in OASIS-3. The contrasts of the shaded regions are proportional to the F-values obtained from ANOVA and all shaded regions exhibited elevated regional residuals in AD+ with respect to HC.}
   \label{example_AD}
\end{figure}

\begin{figure}[!htbp]
  \centering
  \includegraphics[scale=0.4]{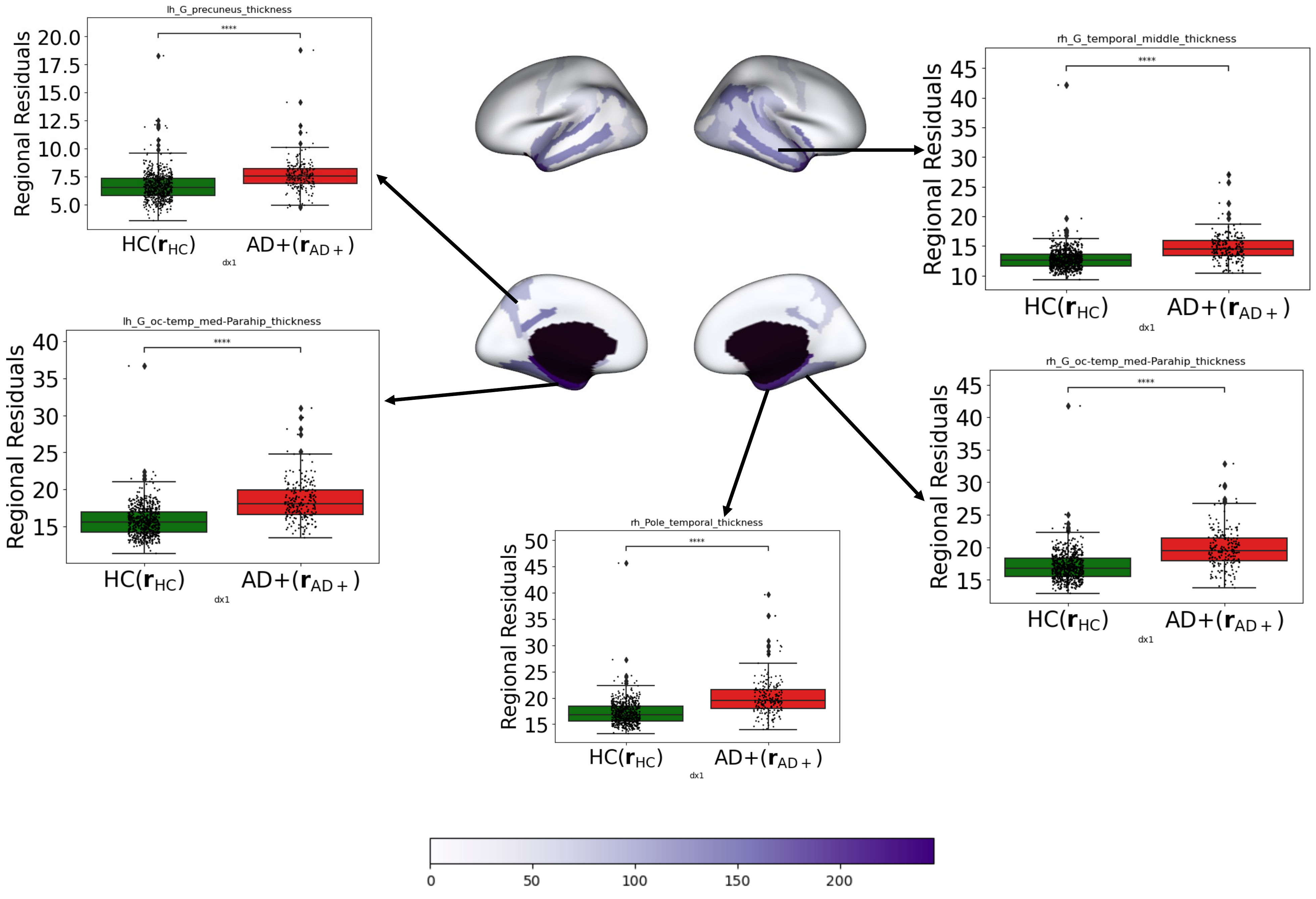}
   \caption{Results depicting the brain regions with significantly elevated regional residuals for AD+ group with respect to HC group in OASIS-3. The results here were derived by a VNN model that was transferred from FTDC100 to process OASIS-3 and had been trained as a regression model to predict chronological age from cortical thickness data for healthy controls in the FTDC100 dataset. The contrasts of the shaded regions are proportional to the F-values obtained from ANOVA and all shaded regions exhibited significant elevated regional residuals in AD+ with respect to HC (Bonferroni-corrected $p$-value smaller than $0.05$).}
   \label{example_AD_FTDC}
\end{figure}

\clearpage

\iftrue
\clearpage
\section{Additional details on brain age prediction in OASIS-3}\label{brain_age_supp}
In this section, we provide additional figures and discussions pertaining to the results for interpretable brain age prediction in Fig.~\ref{roi_AD}. Figure~\ref{age_dist} displays the distributions of chronological age for AD+ and HC groups in OASIS-3 dataset.
\begin{figure}[!htbp]
  \centering
  \includegraphics[scale=0.35]{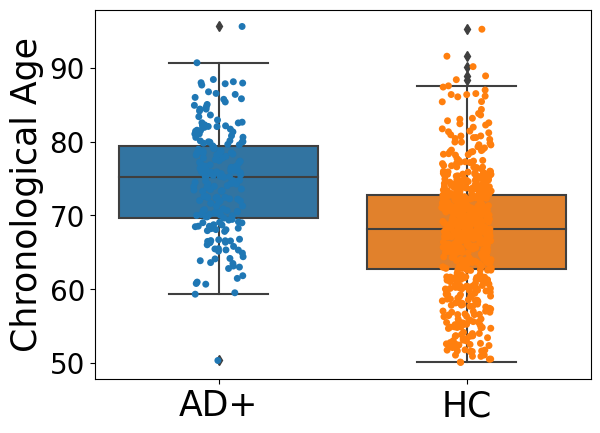}
   \caption{Distribution of chronological age in AD+ and HC groups.}
   \label{age_dist}
\end{figure}

Since VNNs were trained for the regression task, a VNN processed cortical thickness data and provided an estimate for the chronological age for each individual. Since we trained 100 VNN models on different permutations of the training set in OASIS-3, we use the mean of all VNN estimates as the VNN prediction for an individual. This VNN prediction is further leveraged to form brain age estimates and $\Delta$-Age metrics. Figure~\ref{brain_age_AD}a displays the plot for VNN predictions versus chronological age (ground truth) for the complete HC group. The Pearson's correlation between VNN prediction and chronological age (ground truth) for HC group was $0.43$ which was similar to that reported in Table~\ref{transfer_tbl1}. This observation implied that there was no degradation in the performance of VNNs on the task of predicting chronological age for HC group despite replacing the anatomical covariance matrix only from HC group with that from the combined HC and AD+ group ($\hat\bC_{148}^{\sf AD+}$). However, VNN outputs clearly under-estimated the chronological age for older individuals and over-estimated the chronological age for individuals on the younger end of the age distribution for HC group.

Figure~\ref{brain_age_AD}b displays the plot for VNN predictions versus chronological age (ground truth) for the complete AD+ group. The Pearson's correlation between VNN prediction and chronological age (ground truth) for AD+ group was $0.28$. We further note that the VNN architecture and our analysis of regional residuals helped quantify the contribution of each brain region to a data point in Fig.~\ref{brain_age_AD}a and Fig.~\ref{brain_age_AD}b. Hence, the scatter plot in Fig.~\ref{brain_age_AD}b could be affected by larger contributions of certain brain regions for AD+ group relative to the HC group. 

Figure~\ref{brain_age_AD}c illustrates the box plots of residuals evaluated by the difference between VNN predictions and chronological age for HC and AD+ groups. Figure~\ref{brain_age_AD}c suggests that the chronological age for AD+ group was underestimated as compared to that for HC group. This observation was also expected since AD+ group is significantly older than the HC group. However, we expect that the robust elevated regional residuals from brain regions in Fig.~\ref{roi_AD}b mitigated the under-estimation effect due to higher age of AD+ group to some extent. 

Figures~\ref{brain_age_AD}d-f display the results after age-bias correction is applied to the VNN outputs. As expected, the brain age for HC group in Fig.~\ref{brain_age_AD}d is largely concentrated around the line of equality ($x = y$ line). In contrast, the brain age for AD+ group in Fig.~\ref{brain_age_AD}e is concentrated above the line of equality. These effects manifest into the box plots for $\Delta$-Age in Fig.~\ref{brain_age_AD}f where we observe the AD+ group to have elevated $\Delta$-Age as compared to HC group.

VNN architecture facilitated isolation of the effects of accelerated aging before age-bias correction was applied. Hence, the transformation of VNN outputs to brain age from Fig.~\ref{brain_age_AD}a-c to Fig.~\ref{brain_age_AD}d-f was not surprising. However, such insights may be infeasible for machine learning approaches that lack transparency and hence, the impact of deviations due to neurodegeneration from the healthy control population cannot be interpreted or isolated. In this context, if the learning model was a black box, Fig.~\ref{brain_age_AD}a-c may appear to be counter-intuitive to the goal of detecting accelerated aging in the AD+ group and the effect of age-bias correction can be unclear, thus, leading to several criticisms~\cite{butler2021pitfalls}. 

\begin{figure}[!htbp]
  \centering
  \includegraphics[scale=0.35]{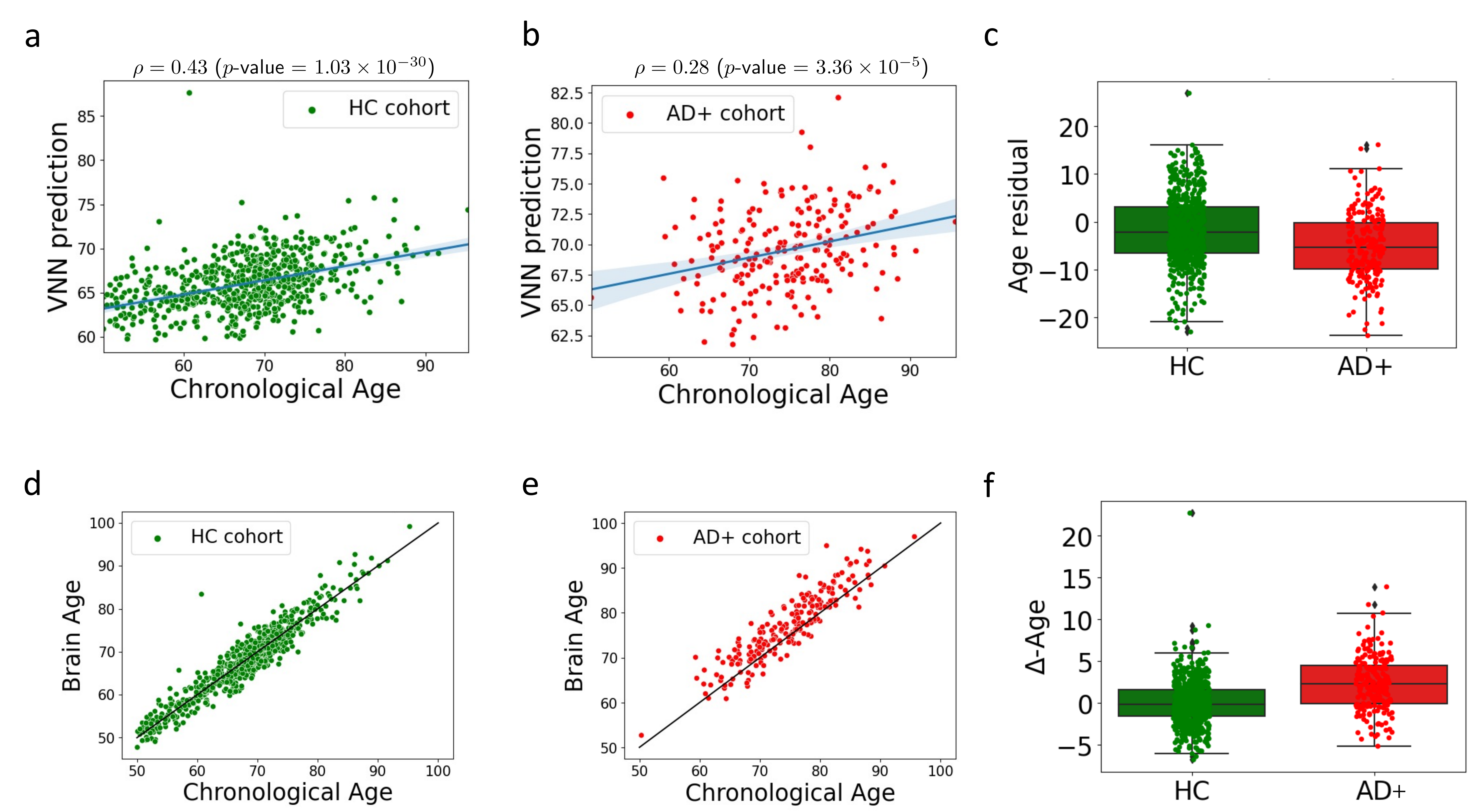}
   \caption{{\bf Supplementary figures to Fig.~\ref{roi_AD}.} Panel {\bf a} displays the plot of VNN prediction versus chronological age for HC group. VNN predictions were obtained as the average of the outputs of $100$ nominal VNNs that were trained on OASIS-3 and operated on the anatomical covariance matrix $\hat\bC_{148}^{\sf AD+}$. Panel {\bf b} displays the results similar to that in panel {\bf a} for the AD+ group. The solid line in panels {\bf a} and {\bf b} is the least squares line. Panel {\bf c} includes the boxplots for residuals derived from the difference between VNN predictions and chronological age for HC and AD+ groups. Panel {\bf d} and {\bf e} display the plots for brain age versus chronological age for HC and AD+ groups, respectively. The solid line in panels {\bf d} and {\bf e} is the identity line. Panel {\bf f} displays the box plots for $\Delta$-Age in HC and AD+ groups.  }
   \label{brain_age_AD}
\end{figure}


\fi 
\clearpage

\section{Additional results on associations between regional residuals derived from VNNs and eigenvectors of $\hat\bC_{148}^{\sf AD+}$}\label{inner_prod_all}

Figure~\ref{eig_plot_ftdc} displays the results for the associations between the regional contributions to the VNN outputs ($\bar\bp_{\sf HC}$) and the eigenvectors of the anatomical covariance matrix $\hat\bC_{148}$ for VNNs trained on FTDC300 or FTDC500 datasets and transferred to process the data from HC group in OASIS-3 dataset. The results in Fig.~\ref{eig_plot_ftdc} show that the first, third, and fourth eigenvectors of $\hat\bC_{148}$ had the three highest association with the regional contributions derived from VNNs trained on FTDC300 or FTDC500 datasets, which was consistent with the results derived for VNNs trained on OASIS-3 or FTDC100 datasets in Fig.~\ref{vnn_hc_eig_fig}. Besides these three eigenvectors, there were a smaller number of eigenvectors for which $|\!<\!\bar\bp_{\sf HC}, \bv_i\!>\!|$ had a coefficient of variation less than $30\%$ across the HC group for the results derived from VNNs trained on FTDC300 or FTDC500 datasets in Fig.~\ref{eig_plot_ftdc} as compared to those derived from VNNs trained on OASIS-3 dataset in Fig.~\ref{vnn_hc_eig_fig}a. 
\begin{figure}[!htbp]
  \centering
  \includegraphics[scale=0.5]{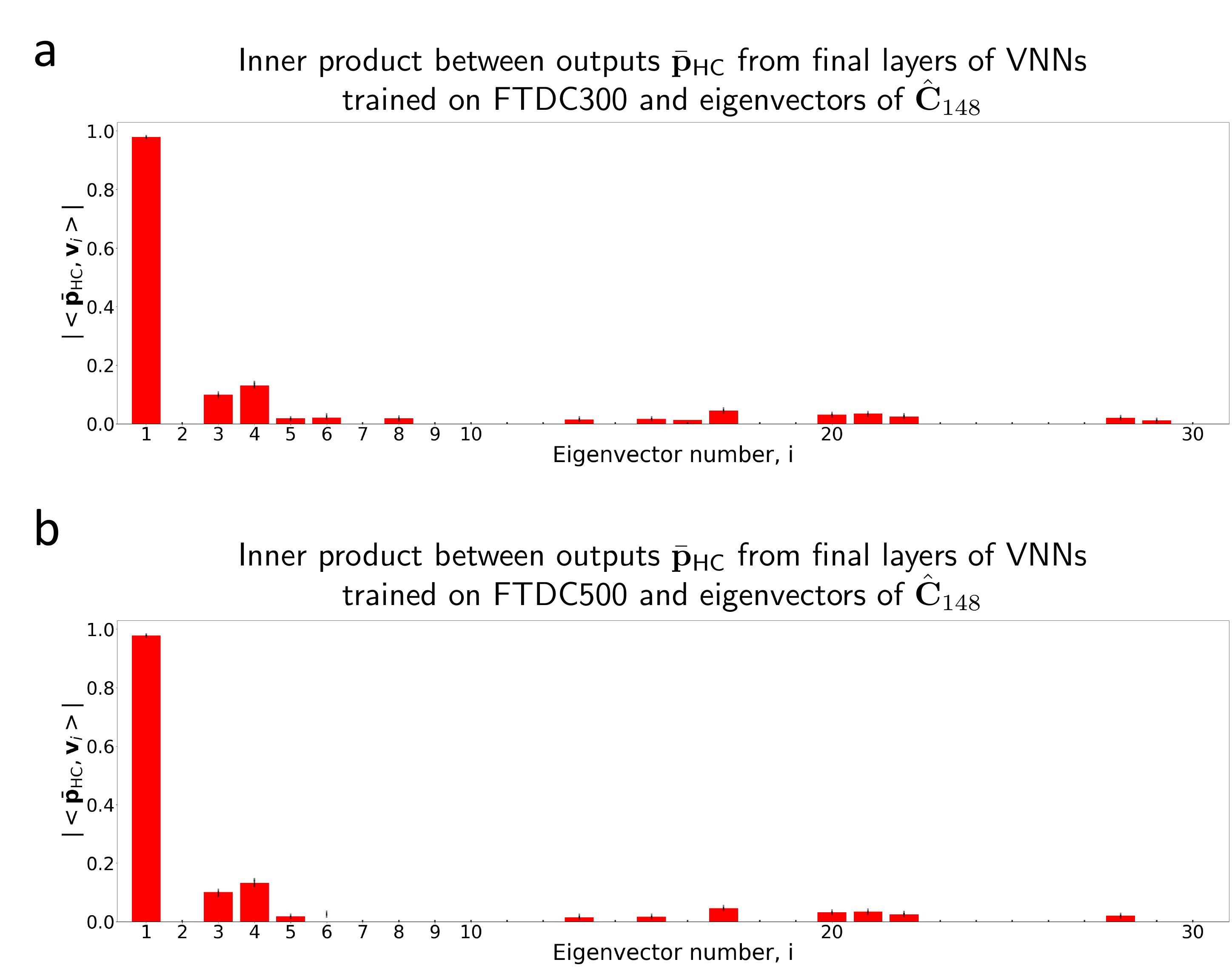}
   \caption{ {\bf Associations between eigenvectors of $\hat\bC_{148}$ and regional contributions to the VNN outputs ($\bar\bp_{\sf HC}$) for VNNs transferred from FTDC300 or FTDC500 datasets to OASIS-3 dataset.} Panel~{\bf a} illustrates a bar plot for $|\!<\!\bar\bp_{\sf HC}, \bv_i\!>\!|$ for $i\in \{1,\dots, 30\}$, where $\bv_i$ is the $i$-th eigenvector (principal component) of covariance matrix $\hat\bC_{148}$ and associated with $i$-largest eigenvalue in terms of magnitude and the vectors of regional contributions, $\bar\bp_{\sf HC}$ were obtained by VNNs that were trained on FTDC300 dataset and transferred to process the data from HC group in OASIS-3 dataset. The inner product results for eigenvectors with coefficient of variation greater than $30\%$ across the HC group of OASIS-3 were excluded (and hence, their contributions set as 0). For every individual in HC group, the associations between their corresponding vector of regional contributions, $\bar\bp_{\sf HC}$ and eigenvectors of $\hat\bC_{148}$ were evaluated over $100$ nominal VNN models. Panel~{\bf b} illustrates similar results as in Panel~{\bf a} for the VNNs that were trained on FTDC500 dataset and transferred to process the data from HC group in OASIS-3.}
   \label{eig_plot_ftdc}
\end{figure}

Figure~\ref{inner_prod_reg_profile}a displays the associations between the eigenvectors of $\hat\bC_{148}^{\sf AD+}$ and regional residuals for AD+ group that were derived from VNNs trained on different datasets. For VNNs trained on OASIS-3, the three largest associations between regional residuals and eigenvectors were observed for the first, third, and fourth eigenvectors of $\hat\bC_{148}^{\sf AD+}$. Interestingly, this phenomenon was retained when regional residuals were evaluated using the final layer outputs of the VNNs that had been transferred from FTDC datasets to OASIS-3 dataset. However, we also observed that the variance in the inner products was larger for VNNs trained on FTDC datasets as compared to those trained on OASIS-3 dataset. As a consequence, in Fig.~\ref{inner_prod_reg_profile}a, the number of eigenvectors whose associations with regional residuals had a coefficient of variation smaller than $30\%$ diminished considerably for VNNs trained on FTDC300 or FTDC500 datasets. 

Figure~\ref{inner_prod_reg_profile}b displays the regional profiles corresponding to the robustness of the elevated regional residuals in AD+ group with respect to HC group in OASIS-3 derived from different sets of 100 VNNs. Considering the regional profile derived from VNNs trained on OASIS-3 dataset as the baseline, the regional profile derived from the analyses of regional residuals obtained from VNNs trained on FTDC100 was the most consistent. The regional profiles derived from the VNNs trained on FTDC300 or FTDC500 datasets retained the most robustness in parahippocampal and subcallosal regions. These observations were expected from the results in Fig.~\ref{inner_prod_reg_profile}a as the third and fourth eigenvectors of $\hat\bC_{148}^{\sf AD+}$ spanned the parahippocampal and subcallosal regions and were highly correlated with the regional residuals derived from VNNs trained on FTDC datasets. The results in Fig.~\ref{inner_prod_reg_profile} further corroborate our conclusion that the regional profiles characteristic of the contributors to elevated $\Delta$-Age were dependent on the ability of VNNs to exploit specific eigenvectors of the anatomical covariance matrix.

\begin{figure}[!htbp]
  \centering
  \includegraphics[scale=0.33]{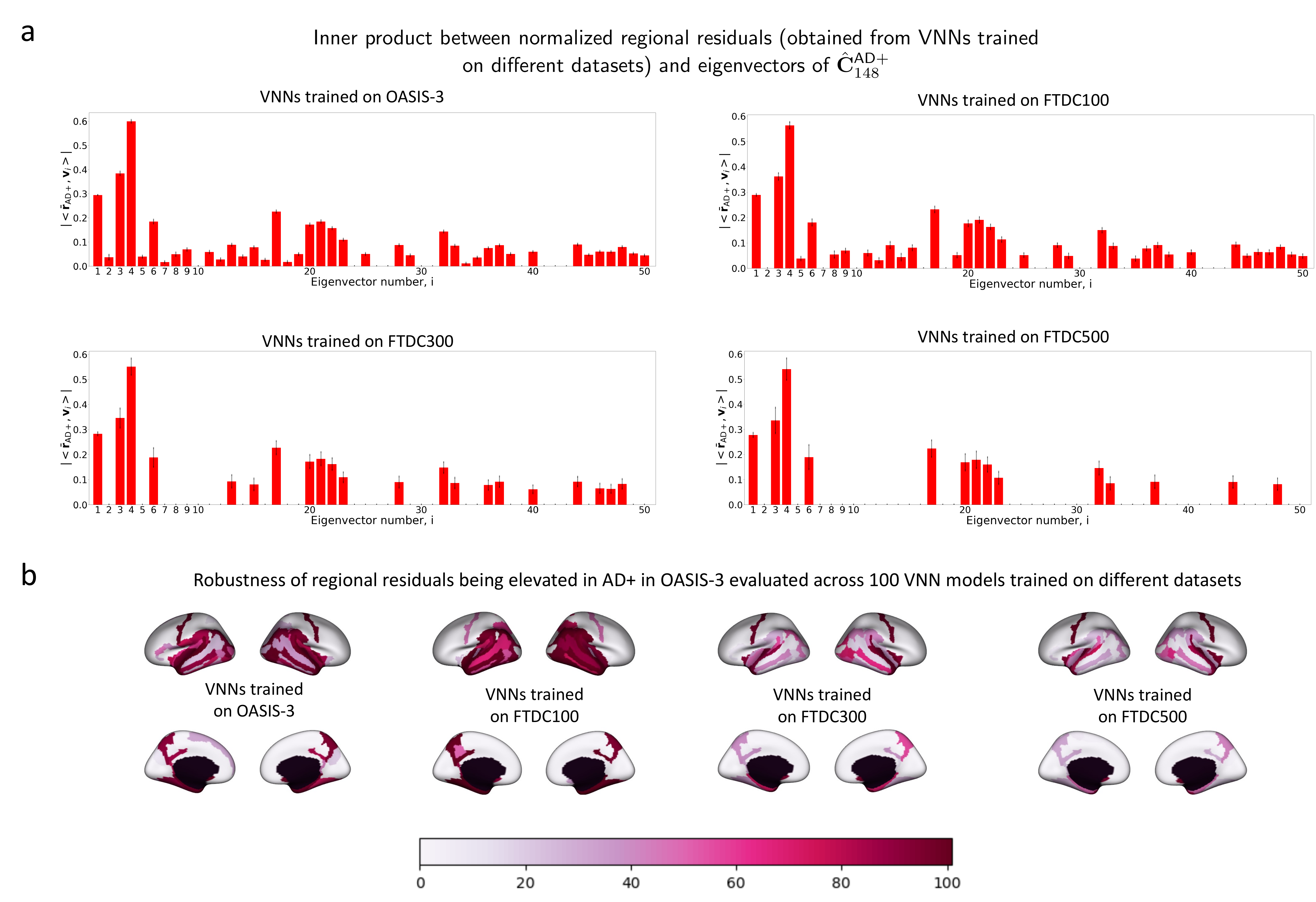}
   \caption{ {\bf Associations between eigenvectors of $\hat\bC_{148}^{\sf AD+}$ and regional residuals for AD+ group; and regional profiles corresponding to elevated regional residuals in AD+ group.} Panel~{\bf a} illustrates the bar plots for the inner products  $|\!<\!\bar\br_{\sf AD+},\bv_i\!>\!|$ between the first $50$ eigenvectors of the anatomical covariance matrix $\hat\bC_{148}^{\sf AD+}$ and the normalized regional residuals evaluated across the AD+ group, where the regional residuals were derived from the final layer outputs of different sets of VNNs. The results for eigenvectors for which  $|\!<\!\bar\br_{\sf AD+},\bv_i\!>\!|$ had a coefficient of variation across the AD+ group greater than $30\%$ were excluded (and hence, have entry $0$ on the bar plot). Panel~{\bf b} displays the regional profiles for the brain regions associated with robust and significant elevated regional residuals in AD+ group as compared to HC group in OASIS-3. The regional profiles obtained from the analyses of regional residuals that were derived from VNNs trained on OASIS-3 and FTDC datasets are included. }
   \label{inner_prod_reg_profile}
\end{figure}

\clearpage

\section{Transferability of VNNs for evaluating regional profiles and $\Delta$-Age in multi-scale FTDC datasets}\label{explor}

In this section, we provide a proof of concept that transferability between different scales of Schaefer's atlas for chronological age prediction lead to consistent regional profiles for $\Delta$-Age determined by regional residuals when VNNs were transferred across cortical thickness datasets corresponding to different resolutions of Schaefer's atlas. The dataset used for this experiment is described below.

\noindent
{\bf Dataset:} This dataset consisted of $105$ healthy controls from the FTDC dataset and $67$ individuals with mild cognitive impairment or Alzheimer's disease diagnosis (AD+; age = $68.52 \pm 9.29$ years, $28$ females). 
The cortical thickness data were available at $100$, $300$, and $500$ resolutions of the Schaefer's atlas. 

The regional residuals were derived for the dataset above using a VNN model that had been trained to predict chronological age in FTDC100 dataset. The group differences in the regional residuals for HC and the combined AD+ group were evaluated using ANOVA for cortical thickness at $100$, $300$, and $500$ resolutions of Schaefer's atlas. The brain regions with robust, significantly elevated regional residuals in AD+ with respect to HC are projected on the brain template for $100$, $300$, and $500$ resolutions in Fig.~\ref{bvftd_vnn}. In Fig.~\ref{bvftd_vnn}, the isolated brain regions were concentrated in similar regions across all resolutions of Schaefer's atlas and were consistent with the brain regions in Fig.~\ref{roi_AD}b (with the exception of the precuneus and superior parietal regions in both hemispheres).

Subsequently, $\Delta$-Age was evaluated for all individuals. The $\Delta$-Age evaluated from the cortical thickness dataset with $100$ features for AD group was $3.67 \pm 3.73$ years and for HC was $0\pm 2.06$ years. Figure~\ref{bvftd_vnn}b displays the box plots for $\Delta$-Age in HC and AD+ groups for FTDC100 dataset as well as those obtained after transferring the VNNs from FTDC100 to FTDC300 and FTDC500 datasets. Due to the transferability of VNNs across different scales of Schaefer's atlas, $\Delta$-Age profiles and brain age versus chronological age plots are consistent across $100$, $300$, and $500$ resolution data even when the VNNs were trained only on the FTDC100 dataset. 

\begin{figure}[!htbp]
  \centering
  \includegraphics[scale=0.65]{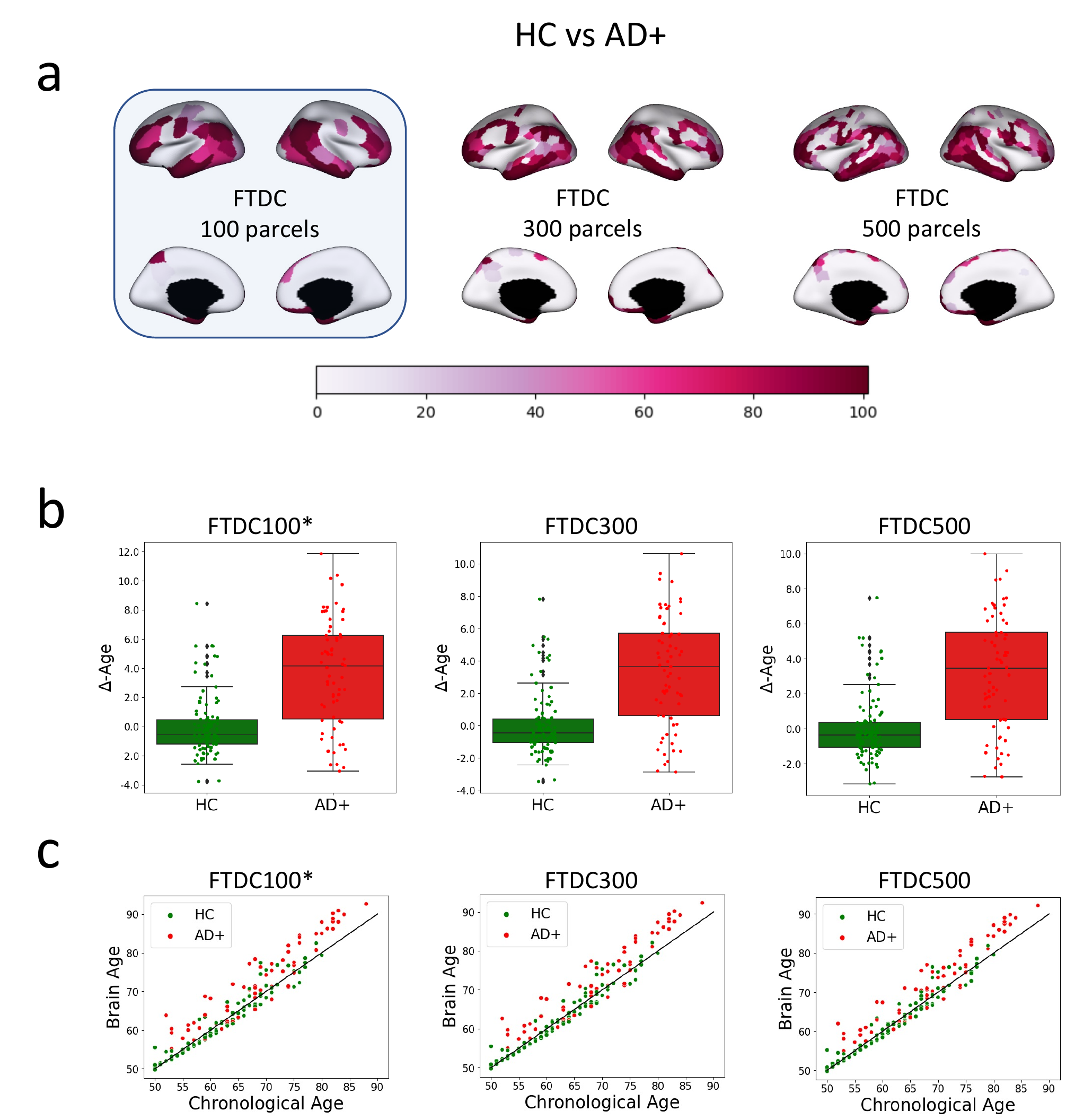}
   \caption{ {\bf Brain age prediction across datasets curated according to multiple scales of Schaefer's atlas.} Panel~{\bf a} illustrates the regional profiles consisting of brain regions with robust, elevated regional residuals in the combined AD+ group with respect to the HC group. The VNNs were trained to predict chronological age on FTDC100 and the robustness was evaluated over $100$ nominal VNN models. The regional profiles were obtained for the datasets with $300$ features and $500$ features after transferring the VNNs from FTDC100 to FTDC300 and FTDC500. Panel~{\bf b} displays the box plots for $\Delta$-Age corresponding to the regional profiles in Panel~{\bf a}. Panel~{\bf c} plots brain age versus chronological age for datasets with $100$, $300$, and $500$ cortical thickness features according to different scales of Schaefer's atlas.}
   \label{bvftd_vnn}
\end{figure}

\clearpage

\section{Regional profiles corresponding to elevated regional residuals in AD+ group are stable to the composition of data used to estimate anatomical covariance matrix $\hat\bC_{148}^{\sf AD+}$}\label{el_stab}
Recall that $\Delta$-Age and associated regional profiles were evaluated using VNNs that operated upon a composite anatomical covariance matrix $\hat\bC_{148}^{\sf AD+}$. We next checked whether the results derived from VNNs were stable to the changes in composition of the combined HC and AD+ group used to estimate the anatomical covariance matrix. For this purpose, we performed two sets of experiments. In the first set, we included the whole HC group and gradually varied the number of individuals from the AD+ group to be included to form the estimate $\hat\bC_{148}^{\sf AD+}$. Figure~\ref{oasis_stability}a includes the results obtained from a randomly selected VNN model corresponding to anatomical covariance matrix formed by different combinations of the individuals from HC and AD+ groups. The results in Fig.~\ref{oasis_stability}a display the F-values of ANOVA when the regional residuals from AD+ group are higher than HC group (Bonferroni corrected $p$-value $<0.05$). The results obtained by the VNN when it used $\hat\bC_{148}^{\sf AD+}$ that was estimated from all $652$ HC individuals and $209$ AD+ individuals displays the most robust group differences in regions that constitute the regional profile for group differences in $\Delta$-Age between AD+ and HC in Fig.~\ref{roi_AD}b. When the covariance matrix $\hat\bC_{148}^{\sf AD+}$ was perturbed by using smaller number of individuals from the AD+ group to estimate it, we observed that the obtained results were preserved till exclusion of about $134$ AD+ individuals. The results obtained via VNN when operating on $\hat\bC_{148}^{\sf AD+}$  estimated from $50$ or smaller number of AD+ individuals and all $652$ HC individuals became noticeably less significant in terms of $F$-values in general, with the $F$-values in subcallosal, parahippocampal, temporal pole, and medial temporal lobe regions in both hemispheres among those affected noticeably. Hence, the removal of AD related information from the anatomical covariance matrix led to the results in brain regions characteristic of AD becoming less significant.

Figure~\ref{oasis_stability}b illustrates the results obtained for a similar experiment as above, with the difference that the regional residuals were evaluated for the VNN when the anatomical covariance matrix $\hat\bC_{148}^{\sf AD+}$  was perturbed by reducing the number of individuals from the HC group used to estimate it. Using the result obtained for $\hat\bC_{148}^{\sf AD+}$ estimated from $652$ HC individuals and $209$ AD+ individuals in Fig.~\ref{oasis_stability}a as the baseline, the results pertaining to ANOVA between regional residuals for AD+ and HC groups (with AD+ elevated as compared to HC) remained consistent as long as $100$ or more individuals from HC group were included in forming $\hat\bC_{148}^{\sf AD+}$. With less than 100 number of HC individuals included in $\hat\bC_{148}^{\sf AD+}$, the results became noticeably less significant in precuneus and supramarginal regions in the left hemisphere.

In summary, the group differences observed between the regional residuals for AD+ and HC groups in OASIS-3 dataset were robust to perturbations in the covariance matrix $\hat\bC_{148}^{\sf AD+}$ when it was perturbed from the baseline by using a different combination of HC and AD+ individuals to estimate it. However, (nearly) complete exclusion of HC or AD+ groups from $\hat\bC_{148}^{\sf AD+}$ resulted in loss of significance of the elevation in regional residuals in AD+ for various regions, including bilateral parahippocampal and temporal pole regions, and precuneus and supramarginal regions in the left hemisphere. Thus, both HC and AD+ groups were relevant to the anatomical covariance matrix $\hat\bC_{148}^{\sf AD+}$ that resulted in regional profiles in Fig.~\ref{roi_AD} but the regional profiles in Fig.~\ref{roi_AD} were not overfit on the combination of individuals from HC and AD+ used to estimate $\hat\bC_{148}^{\sf AD+}$.

\section{Adaptive readouts may penalize the interpretability of regional residuals and $\Delta$-Age}\label{vnn_adaptive}
Thus far, we have focused on VNNs that operate with a non-adaptive readout (unweighted average) function. However, it is expected that the performance of the VNNs on their original task of chronological age prediction could be improved significantly with the help of an adaptive readout function. Our experiments showed that this was indeed the case. If a single-layer fully connected perceptron consisting of $10$ neurons was added to the VNNs with the same architecture as the ones that were trained on OASIS-3 dataset, we could improve the performance on the chronological prediction task. For 100 VNNs with adaptive readout that were trained on random permutations of the training data, the median MAE for the HC group was $4.64$ years, which was significantly smaller than the MAE achieved by VNNs with non-adaptive readouts (Table~\ref{transfer_tbl1}). Among the 100 VNN models with adaptive readouts, we analyzed the regional residuals for one VNN model with adaptive readout that had the best performance on chronological age prediction in HC group (test set: MAE = $4.17$ years, Pearson's correlation between prediction and ground truth = $0.73$; complete HC group: MAE = $4.26$ years, Pearson's correlation between prediction and ground truth = $0.725$). Our regional residuals revealed no significant difference between the regional residuals for AD+ group and HC group. This observation suggested that VNN lost its interpretability due to the addition of adaptive readout function. Moreover, we also observed a diminished gap between $\Delta$-Age for AD+ and HC groups determined using this VNN model ($\Delta$-Age for AD group: $1.58\pm 4.67$ years, $\Delta$-Age for HC group: $0\pm 3.45$ years, Cohen's $d$ = 0.384). The findings discussed here suggest that boosting the performance on chronological age prediction task by using an adaptive readout function may penalize the interpretability offered by VNNs with non-adaptive readouts and also diminish the $\Delta$-Age gap between AD+ and HC groups. 

\iftrue

\begin{FPfigure}
  \centering 
  \includegraphics[scale=0.6]{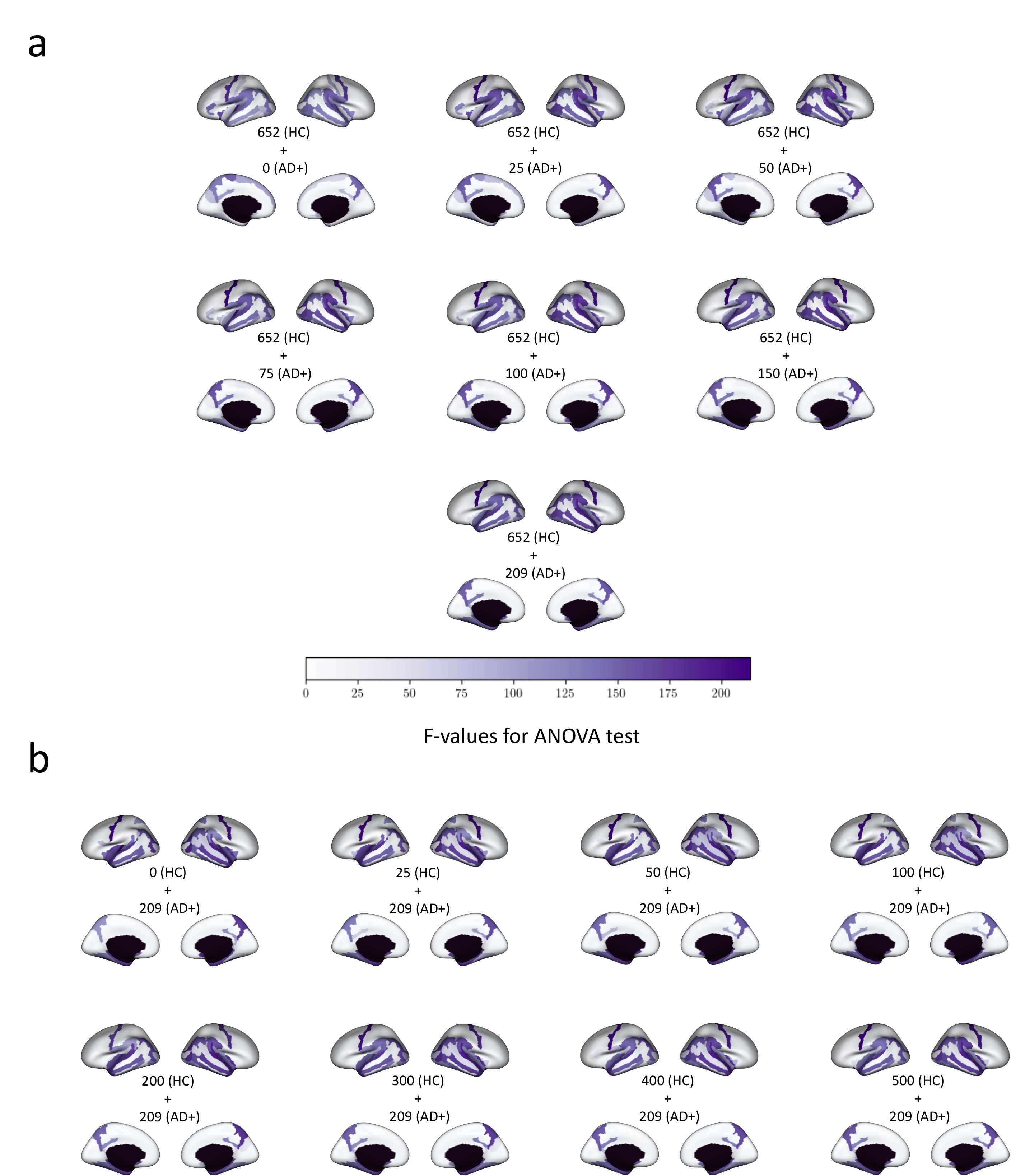}
   \caption{{\bf Stability to perturbations in the anatomical covariance matrix for group differences between AD+ and HC groups observed in regional residuals.} For a VNN model that was trained to predict chronological age for HC group in OASIS-3 dataset, the regional residuals were first determined using the anatomical covariance matrix $\hat\bC_{148}^{\sf AD+}$ formed by the cortical thickness data of complete OASIS-3 dataset (i.e., 652 HC individuals and 209 individuals in the AD+ group. The group differences in regional residuals between AD+ and HC group were investigated according to the procedure in subsection~\ref{regbage} in the Methods section. In the procedure described therein, we evaluated the F-values for the ANOVA test between regional residuals for AD+ group and HC group. The F-values for the regional residuals that were elevated in AD+ group with respect to HC group are projected on the brain template in panel {\sf a}. The stability of the group differences to perturbations in $\hat\bC_{148}^{\sf AD+}$ was further investigated by varying the composition of cortical thickness data from AD+ and HC groups used to estimate $\hat\bC_{148}^{\sf AD+}$. Panel~{\bf a} displays the results obtained via analysis of regional residuals by VNNs that processes the cortical thickness data from complete OASIS-3 dataset over the anatomical covariance matrix $\hat\bC_{148}^{\sf AD+}$ estimated from $652$ HC individuals and varying number of individuals from the AD+group. The scenarios in which $\hat\bC_{148}^{\sf AD+}$ was estimated using cortical thickness data from $652$ HC individuals and no AD+ individuals formed one extreme of these experiments.  Panel~{\bf b} illustrates the results of similar experiments as in Panel~{\bf a} with the difference that the anatomical covariance matrix $\hat\bC_{148}^{\sf AD+}$ was estimated using all $209$ individuals in the AD+ group and varying number of individuals from the HC group. The results corresponding to  $\hat\bC_{148}^{\sf AD+}$ that was estimated using $209$ AD+ individuals and $652$ HC individuals formed the baseline for comparison for all scenarios in Panels {\bf a} and {\bf b}.}
   \label{oasis_stability}
\end{FPfigure}

\fi

\end{appendices}

\clearpage
\bibliographystyle{IEEEtran}
{\footnotesize \bibliography{vnn_brain_age}}
\end{document}